\newcommand{\arxiv}[1]{}
\newcommand{\1}{\mathbbm{1}}
\DeclareMathOperator*{\argmax}{argmax}
\newcommand{\x}{\times}
\newcommand{\R}{\mathbb{R}}
\newcommand{\N}{\mathbb{N}}
\newcommand{\E}{\mathop{\mathbb{E}}}
\renewcommand{\epsilon}{\varepsilon}
\newcommand{\cV}{\mathcal{V}}
\newcommand{\cD}{\mathcal{D}}
\newcommand{\cX}{\mathcal{X}}
\newcommand{\cW}{\mathcal{W}}
\newcommand{\eE}{\mathcal{E}}
\newcommand{\eK}{\eE_K}
\newcommand{\WBcal}{\mathcal{W}_{B}^{\rm{(full)}}}
\newcommand{\WBconf}{\mathcal{W}_{B}}
\newtheorem{theorem}{Theorem}
\newaliascnt{lemma}{theorem}
\newtheorem{lemma}[lemma]{Lemma}
\newaliascnt{remark}{theorem}
\newtheorem{remark}[remark]{Remark}
\newaliascnt{claim}{theorem}
\newtheorem{claim}[claim]{Claim}
\newaliascnt{definition}{theorem}
\newtheorem{definition}[definition]{Definition}
\newaliascnt{corollary}{theorem}
\newtheorem{corollary}[corollary]{Corollary}
\newaliascnt{conjecture}{theorem}
\DeclareRobustCommand{\[}{\begin{equation}}
\DeclareRobustCommand{\]}{\end{equation}}
\let\Oldforall\forall
\renewcommand{\forall}{~\Oldforall} %
\let\Oldinf\inf
\renewcommand{\inf}{\Oldinf\limits}
\let\Oldsup\sup
\renewcommand{\sup}{\Oldsup\limits}
\colorlet{col1}{blue}
\definecolor{col2}{RGB}{83,172,121} %
\definecolor{col3}{RGB}{250,151,92} %
\definecolor{col4}{HTML}{FFAABB}
\definecolor{AppleWhite}{RGB}{255,255,255}
\definecolor{ApplePrimaryCoolGray}{RGB}{116,128,139}
\definecolor{AppleCoolGray1}{RGB}{199,209,214}
\definecolor{AppleCoolGray2}{RGB}{147,174,190}
\definecolor{AppleCoolGray3}{RGB}{124,147,160}
\definecolor{AppleCoolGray4}{RGB}{92,102,109}
\definecolor{AppleCoolGray5}{RGB}{78,93,100}
\definecolor{AppleCoolGray6}{RGB}{53,60,65}
\definecolor{AppleBlack}{RGB}{0,0,0}
\definecolor{AppleSecondaryChartGray}{RGB}{168,168,168}
\definecolor{AppleChartGray2}{RGB}{233,233,233}
\definecolor{AppleChartGray3}{RGB}{211,211,211}
\definecolor{AppleChartGray4}{RGB}{190,190,190}
\definecolor{AppleChartGray5}{RGB}{140,140,140}
\definecolor{AppleChartGray6}{RGB}{102,102,102}
\definecolor{AppleChartGray7}{RGB}{64,64,64}
\definecolor{ApplePrimaryChartBlue}{RGB}{84,151,193}
\definecolor{AppleBlue2}{RGB}{212,229,239}
\definecolor{AppleBlue3}{RGB}{169,202,223}
\definecolor{AppleBlue4}{RGB}{127,177,209}
\definecolor{AppleBlue5}{RGB}{71,130,166}
\definecolor{AppleBlue6}{RGB}{55,99,128}
\definecolor{AppleBlue7}{RGB}{45,72,89}
\definecolor{ApplePrimaryChartGreen}{RGB}{83,172,121}
\definecolor{AppleGreen2}{RGB}{212,234,221}
\definecolor{AppleGreen3}{RGB}{169,213,188}
\definecolor{AppleGreen4}{RGB}{126,193,155}
\definecolor{AppleGreen5}{RGB}{58,140,82}
\definecolor{AppleGreen6}{RGB}{39,102,54}
\definecolor{AppleGreen7}{RGB}{29,58,31}
\definecolor{ApplePrimaryChartYellow}{RGB}{253,195,93}
\definecolor{AppleYellow2}{RGB}{254,240,214}
\definecolor{AppleYellow3}{RGB}{254,224,174}
\definecolor{AppleYellow4}{RGB}{254,210,134}
\definecolor{AppleYellow5}{RGB}{230,168,69}
\definecolor{AppleYellow6}{RGB}{191,131,46}
\definecolor{AppleYellow7}{RGB}{153,107,54}
\definecolor{ApplePrimaryChartOrange}{RGB}{250,151,92}
\definecolor{AppleOrange2}{RGB}{254,229,214}
\definecolor{AppleOrange3}{RGB}{252,203,173}
\definecolor{AppleOrange4}{RGB}{252,178,133}
\definecolor{AppleOrange5}{RGB}{227,121,68}
\definecolor{AppleOrange6}{RGB}{191,87,46}
\definecolor{AppleOrange7}{RGB}{143,59,36}
\definecolor{ApplePrimaryChartRed}{RGB}{227,94,105}
\definecolor{AppleRed2}{RGB}{248,215,217}
\definecolor{AppleRed3}{RGB}{241,174,180}
\definecolor{AppleRed4}{RGB}{234,135,143}
\definecolor{AppleRed5}{RGB}{196,63,77}
\definecolor{AppleRed6}{RGB}{153,35,53}
\definecolor{AppleRed7}{RGB}{102,19,43}
\definecolor{ApplePrimaryChartPurple}{RGB}{161,150,204}
\definecolor{ApplePurple2}{RGB}{231,228,242}
\definecolor{ApplePurple3}{RGB}{208,202,229}
\definecolor{ApplePurple4}{RGB}{185,176,217}
\definecolor{ApplePurple5}{RGB}{128,113,171}
\definecolor{ApplePurple6}{RGB}{89,76,128}
\definecolor{ApplePurple7}{RGB}{62,46,101}
\definecolor{AppleCoolGray}{RGB}{116,128,139}
\definecolor{AppleChartGray}{RGB}{168,168,168}
\definecolor{AppleBlue}{RGB}{84,151,193}
\definecolor{AppleGreen}{RGB}{83,172,121}
\definecolor{AppleYellow}{RGB}{253,195,93}
\definecolor{AppleOrange}{RGB}{250,151,92}
\definecolor{AppleRed}{RGB}{227,94,105}
\definecolor{ApplePurple}{RGB}{161,150,204}
\definecolor{textgray}{HTML}{6E6E73}
\patchcmd{\wrong@fontshape}{\@gobbletwo}{}{}{}
\numberwithin{equation}{section}
\definecolor{light}{RGB}{125, 125, 125}
\crefname{tcb@cnt@pbox}{code}{code}
\Crefname{tcb@cnt@pbox}{Code}{Code}
\crefname{assumption}{assumption}{assumption}
\Crefname{assumption}{Assumption}{Assumptions}
\crefname{claim}{claim}{claims}
\Crefname{claim}{Claim}{Claims}
\Crefname{equation}{Eq.}{Eqs.}
\Crefname{figure}{Fig.}{Figs.}
\Crefname{tabular}{Tab.}{Tabs.}
\Crefname{section}{Sec.}{Secs.}
\Crefname{appendix}{App.}{Apps.}
\Crefname{theorem}{Thm.}{Thms.}
\newtcolorbox[auto counter]{pbox}[2][]{
  colback=white,
  title=Code~\thetcbcounter: #2,
  #1,fonttitle=\sffamily,
  fontupper=\sffamily,
  arc=2pt,
  colframe=bgcolor,
  coltitle=fgcolor,
  colbacktitle=bgcolor,
  toptitle=0.25cm,
  bottomtitle=0.125cm
}
\newcommand\applefootnote[1]{%
  \begingroup
  \renewcommand\thefootnote{}%
  \renewcommand\@makefntext[1]{\noindent##1}%
  \footnote{#1}%
  \addtocounter{footnote}{-1}%
  \endgroup
}
\definecolor{cverbbg}{gray}{0.90}
\title{\scalefont{0.93}Trained on Tokens, Calibrated on Concepts:\\
The Emergence of Semantic Calibration in LLMs}
\author{Preetum Nakkiran}
\author{Arwen Bradley}
\author{Adam Goliński}
\author{Eugene Ndiaye}
\author{Michael Kirchhof}
\author{Sinead Williamson}
\affiliation{Apple}
\abstract{Large Language Models (LLMs) often lack meaningful confidence estimates for their outputs. While base LLMs are known to exhibit next-token calibration,
it remains unclear whether they can assess confidence in the actual meaning of their responses beyond the token level.
We find that, when using a certain sampling-based notion of semantic calibration, base LLMs are remarkably well-calibrated:
they can meaningfully assess confidence in
open-domain question-answering tasks,
despite not being explicitly trained to do so.
Our main theoretical contribution 
establishes a mechanism for why semantic calibration emerges 
as a byproduct of next-token prediction, leveraging a recent connection between calibration and local loss optimality.
The theory relies on a general definition of ``$B$-calibration,''
which is a notion of calibration parameterized by
a choice of equivalence classes (semantic or otherwise).
This theoretical mechanism leads to a testable prediction:
base LLMs will be semantically calibrated when they can easily predict their own distribution over semantic answer classes before generating a response.
We state three implications of this prediction, which we validate through experiments: 
(1) Base LLMs are semantically calibrated across question-answering tasks,
(2) RL instruction-tuning systematically breaks this calibration,
and (3) chain-of-thought reasoning breaks calibration.
To our knowledge, our work provides the first principled explanation of when and why semantic calibration emerges in LLMs.

}
\date{\sffamily\today}
\begin{document}

\maketitle

\section{Introduction}
As Large Language Models (LLMs) become increasingly capable, 
it is important to understand the nature and extent of their uncertainty.
While LLMs can produce fluent answers to a range of difficult questions,
they do not inherently convey any sense of certainty in those answers.
Addressing this is an active research question:
can we extract a meaningful notion of confidence in an LLM's response?
This question is scientifically interesting even aside from applications:
it is a way of asking, do LLMs ``know what they don't know''?~\citep{kadavath2022language}

In the classification literature, one well-understood criterion for
uncertainty quantification is \emph{calibration}:
do the predicted probabilities reflect empirical frequencies?
For example, if an image classifier is 80\% confident 
on a set of inputs,
then it should be correct on 80\% of those predictions.
To apply this definition to LLMs, one approach is to treat the LLM
as a classifier that predicts the next-token, given all previous tokens.
There is strong empirical and theoretical evidence that base LLMs,
which are only pre-trained with the maximum likelihood loss, are typically \emph{next-token-calibrated} \citep{openai2023gpt,zhang-etal-2024-study,desai2020calibration}.
Next-token calibration is a meaningful notion of calibration in certain settings like True/False or multiple choice questions, where a single token encapsulates the entire response \citep{kadavath2022language,plaut2025probabilities}. 
For example, if we ask an LLM a multiple-choice question, then 
its probability distribution on the next-token (A/B/C/D) defines a 
prediction which is often calibrated.
However, when the model produces long-form answers to open-ended questions,
we desire a notion of uncertainty with respect to the \emph{semantic meaning} of the response, which next-token calibration does not 
directly capture.
E.g. if we ask the LLM ``What is the capital of France?,''
then it might answer ``Paris'' or ``It's Paris'' or ``The capital of France is Paris,''
and it is not clear how to use token-wise probabilities to derive meaningful confidences
in the response.

Prior works have proposed a variety of notions of semantic confidence for long-form text, including verbalized measures and sampling-based measures
(e.g.\ \emph{semantic entropy} of \citet{farquhar2024detecting}).
See \citet{vashurin-etal-2025-benchmarking} for a comprehensive overview.
However, from the empirical data it is unclear whether LLMs are naturally calibrated
with respect to \emph{any} of these semantic notions of confidence,
without being specifically trained for calibration \citep{kadavath2022language,yin-etal-2023-large,band2024linguisticcalibrationlongformgenerations,kapoor2024large,yoon2025reasoning,mei2025reasoning,tian-etal-2023-just}.
Empirically, calibration may depend on many factors:
the test distribution (math, trivia, etc.),
the post-training procedure (RLHF, DPO, RLVR, none, etc.),
the inference-time procedure (few-shot examples, chain-of-thought (CoT), best-of-K, etc.), the model size, the model architecture, the sampling temperature, etc.
All of these factors have been posited to affect calibration, for reasons that are not yet well understood 
\citep{kadavath2022language,openai2023gpt,leng2024taming,xiao2025restoring,zhang-etal-2024-study,wang-etal-2025-towards-objective}.

A priori, there is no reason to expect \emph{emergence}\footnote{
We use \emph{emergent} here to mean a 
structural regularity that arises implicitly (``for free'') due to system dynamics, not as a result of explicit external constraints. That is, ``Emergence Through Compression'' in the terminology of \citet{krakauer2025large}.
We do not mean to discuss changes as a result of model scaling, which is another common use of the term emergence \citep{wei2022emergent}.} of any of these forms of semantic calibration as a byproduct of standard pre-training with the maximum likelihood loss. 
In this work, we propose and test a mechanism by which 
a particular type of sampling-based semantic calibration actually can emerge for a large class of LLMs.
At a high level, the mechanism treats the LLM as a standard multi-class classifier (by collapsing outputs with the same semantic meaning),
and then adapts recent theoretical results 
on mechanisms of classifier calibration \citep{gopalan2024computationally,blasiok2023when,blasiok2024loss}.
\Cref{fig:B-collapse-intrp} illustrates the overall phenomenon of semantic calibration\footnote{
This definition of semantic calibration is
closely related to semantic entropy \citep{farquhar2024detecting},
as well as the sampling-based definitions of confidence in \citet{wang2023selfconsistency}, \citet{wei2024measuring}, and \citet{lamb2025semantic}.
}, described in detail in the next section. To our knowledge, our work is the first to propose a theoretically plausible mechanism
for semantic calibration in LLMs, and we validate the predictions of this theory empirically.

\begin{figure}[t]
    \centering
    \includegraphics[width=1.0\linewidth]{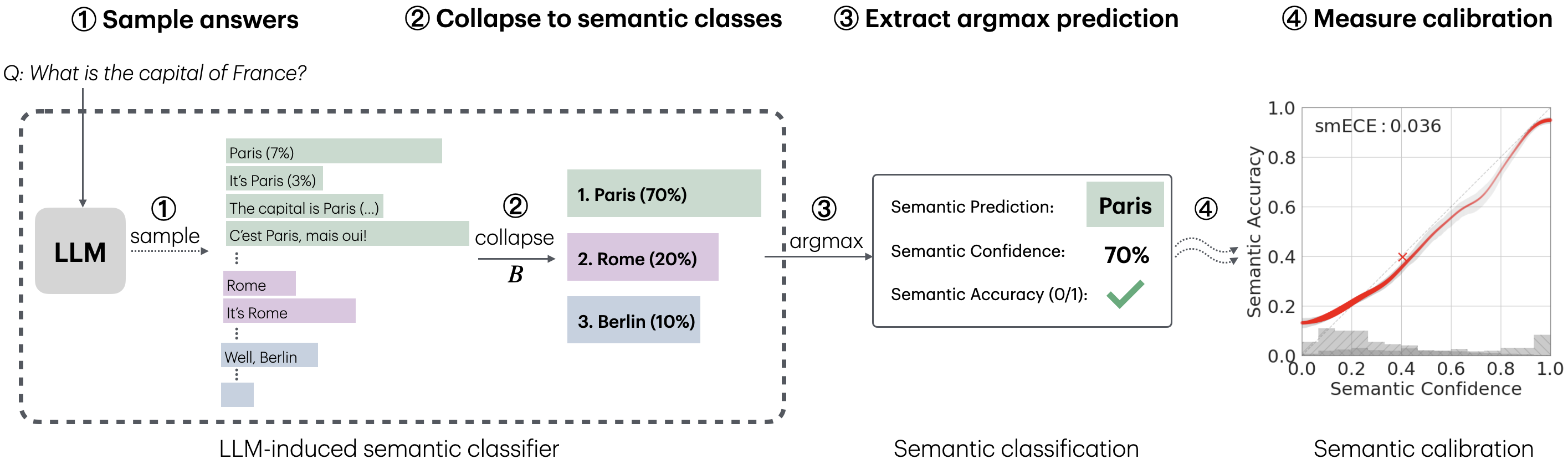}
    \caption{{\bf Semantic calibration} refers to calibration of an
    \emph{LLM-induced semantic classifier} (dashed box): the classifier induced by
     post-processing LLM outputs with a given semantic collapsing function,
     which we refer to as $B$ throughout.
    To measure semantic confidence calibration:
    for a given question, sample multiple temperature $T\!=\!1$ generations,
    and extract semantic answers by applying the
    collapsing function $B$ (e.g.\ a strong LLM prompted to extract one-word answers).
    This yields an empirical distribution over semantic classes (above: Paris, Rome, Berlin),
    which we treat as the classifier output.
    This classifier output defines a semantic prediction (=argmax probability)
    and a semantic confidence (=max probability).
    \emph{Semantic confidence calibration} means,
    over all questions, these predictions are confidence-calibrated in the
    standard classification sense. 
    }
    \label{fig:B-collapse-intrp}
\end{figure}

{\bf Summary of Contributions.}
We empirically show that LLMs \emph{are}
semantically-calibrated surprisingly often, for certain settings and types of questions.
We offer a candidate theoretical mechanism to explain how this %
calibration emerges from standard LLM training (that does not explicitly encourage it), and discuss under which settings and for which questions we expect it.
The basic prediction of our theory is that semantic calibration is likely to hold when
(1) the model is a base LLM, and 
(2) the model is able to \emph{immediately} predict the probability that its answer will land in a given semantic class, even before it has started to generate it. Specifically, this immediate prediction should be ``easy to learn'' in the sense that, for example, the model could be LoRA-adapted to perform it.
Intuitively, in order to be semantically calibrated, the model must
``know'' how likely it is to generate a ``Paris''-type answer,
before it has determined exactly how it will phrase its answer.
This theoretical insight leads to a number of practical predictions
about which models and tasks should be semantically calibrated, which
we then test experimentally.

{\bf Organization.} We start by formally defining the notion
of calibration we consider in \Cref{sec:semantic-calibration}.
In \Cref{sec:theory}, we introduce our proposed theoretical mechanism for
emergent calibration, and state our formal results.
In \Cref{sec:theory-exp-bridge}, we apply the theory
to make three concrete predictions about when LLMs are semantically calibrated,
and in~\Cref{sec:experiments}, we experimentally test these predictions.

\arxiv{
\paragraph{Practical Takeaways (informally):}
\begin{itemize}
    \item {\bf Sampling multiple generations and computing semantic confidence is a
    theoretically-principled way to measure uncertainty in LLMs.}
    Under certain conditions (which we describe),
    this confidence will be calibrated without any additional training.
    \item {\bf Base LLMs evaluated on ``reasonably in-distribution'' data are almost always semantically calibrated},
    regardless of model size or accuracy, when using direct generation (not chain-of-thought). See \Cref{fig:ece-grid}.
    \item {\bf Chain-of-thought reasoning typically breaks calibration.}
    \item {\bf Models trained with anything but a sequence-level proper loss are unlikely to be calibrated.}
    This includes many Instruct models post-trained with RLHF, DPO, or RLVR.
\end{itemize}
}

\begin{figure}[t!]
    \centering
    \includegraphics[width=\linewidth]{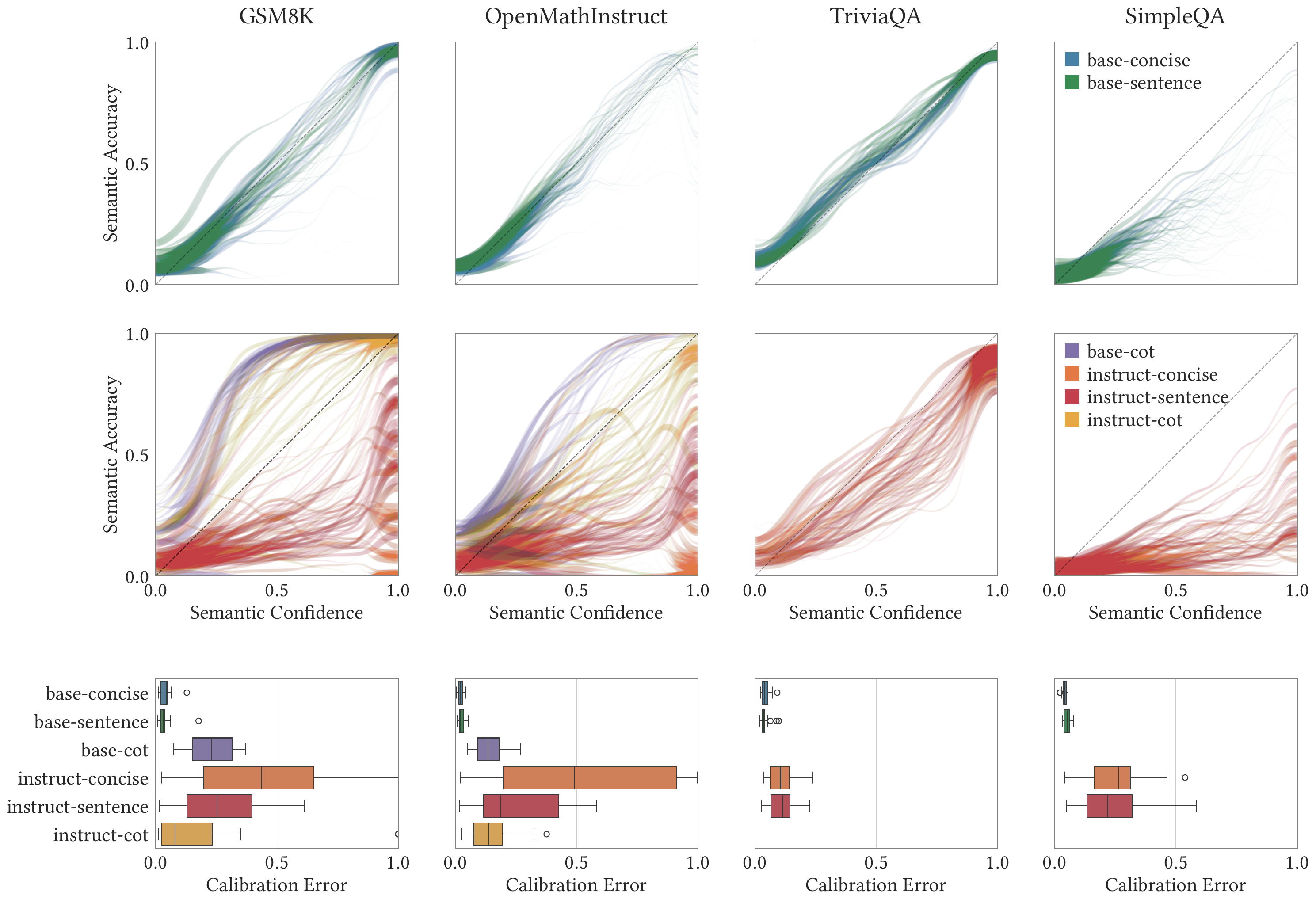}
    \caption{
    {\bf Semantic Calibration of LLMs.} 
    Overlaid reliability diagrams evaluating semantic calibration of
    Qwen, Gemini, Mistral, and Llama-family models of sizes
    from 0.5B to 70B, on four datasets.
    Each model is prompted to respond in one of three different styles:
    a single word (``concise''), a complete sentence (``sentence''),
    or using chain-of-thought (``CoT''). 
    This yields 6 color-coded configurations for each model:
    (model-variant, response-style) $\in$
    \texttt{\{Base, Instruct\} $\x$ \{Concise, Sentence, CoT\}}. 
    We group these configurations into two rows based on our theoretical predictions.
    {\bf First row (predicted calibrated):} Reliability diagrams of all configurations
    predicted to be confidence-calibrated according to our theory:
    base models with \textcolor{AppleBlue5}{concise} or \textcolor{AppleGreen5}{sentence} 
    response types.
    {\bf Second row (not predicted calibrated):}
    Configurations which need not be calibrated according to our theory:
    post-trained instruct models with any response type: \textcolor{AppleOrange5}{concise}, \textcolor{AppleRed5}{sentence}, \textcolor{AppleYellow5}{chain-of-thought}; 
    and 
    \textcolor{ApplePurple5}{base models with chain-of-thought}.
    {\bf Third row:}
    Box plots summarizing the distribution of calibration errors for each of the 6 configurations.
    Only the first two configurations
    (\textcolor{AppleBlue5}{base-concise} and \textcolor{AppleGreen5}{base-sentence})
    are reliably well-calibrated, as predicted by our theory.
    Individual reliability diagrams for all experiments are in \Cref{app:encyclopedia}.
    }
    \label{fig:ece-grid}
\end{figure}

\begin{figure}[t]
    \centering
    \includegraphics[width=0.99\linewidth]{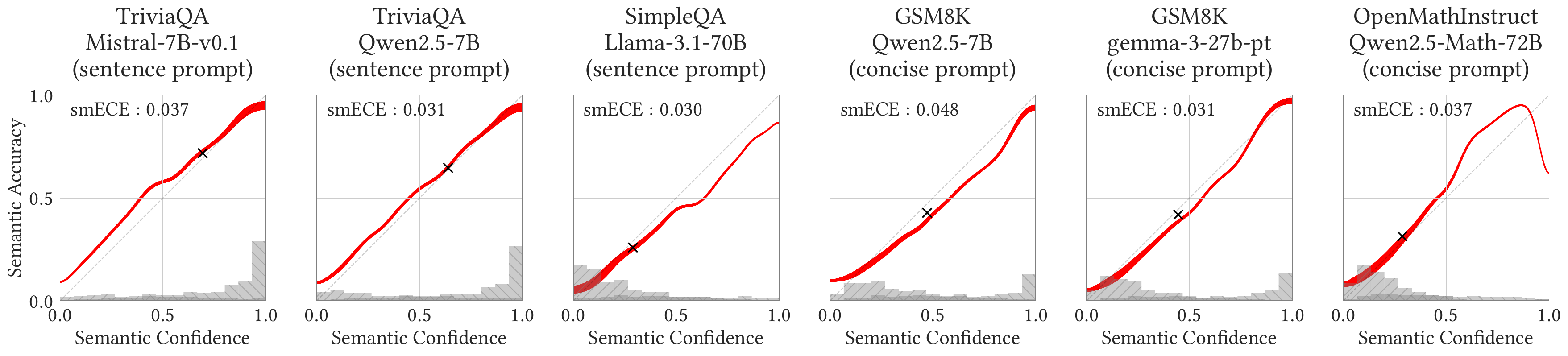} \\
    \vspace{-2mm}
    \caption{
Reliability diagrams demonstrating
\emph{semantic confidence-calibration}
of base (pretrained-only) LLMs across
various combinations of datasets, models, and prompts. 
Calibration error measured with SmoothECE (smECE),
average confidence and accuracy marked with a black cross,
and density of semantic confidences shown in gray histogram;
details in Appendix~\ref{app:rel-diagram}.
}
    \label{fig:calib_works_grid}
\end{figure}

\section{Semantic Calibration and $B$-Calibration}
\vspace{-2mm}
\label{sec:semantic-calibration}

We now informally describe our framework; formal definitions follow in \Cref{subsec:notation}.
The core of our approach is a collapsing function $B$ which post-processes
the LLM's raw text outputs, mapping each generation to one of a finite set of classes.
Of particular interest are \emph{semantic collapsing functions}\footnote{To implement this function, we use a strong auxiliary LLM prompted to extract a canonical short answer from a long-form string. Details in \Cref{app:experiments}.}, which we focus on now.
As illustrated in \Cref{fig:B-collapse-intrp},
a semantic collapsing function implicitly transforms the LLM into an \emph{LLM-induced semantic classifier}: 
For a given question, the classifier's output is a distribution over semantic classes, 
whose probabilities can be empirically estimated by sampling multiple generations from the LLM and applying $B$ to each.
From this distribution, we define the
semantic confidence as the probability of the most-likely semantic class,
and the semantic accuracy as whether the most-likely semantic class matches the ground truth's semantic class.
The LLM is \emph{semantically confidence-calibrated}
if these confidences and accuracies are calibrated across a dataset---e.g.,
among questions with 70\% semantic confidence, the average semantic accuracy is also 70\%.
This definition coincides with \citet{lamb2025semantic}'s definition of ``Empirical Semantic Confidence''
when applied to the full distribution.
For example, \Cref{fig:calib_works_grid} measures calibration of several models using this approach (full experimental details in \Cref{sec:experiments}).

\subsection{Notation and Setup}
\label{subsec:notation}

We now establish the notation used throughout the paper.
We assume that our semantic collapsing function outputs at most $K \in \N$ classes, which we represent by the set of indices $[K] \equiv \{1, \dots, K\}$. We allow $K$ to be arbitrarily large. We identify these classes with the set of standard basis vectors $\eK \subset \mathbb{R}^K$. The set of probability distributions over a finite set $S$ is denoted $\Delta(S)$. For convenience, we use the shorthand $\Delta_K \equiv \Delta([K])$ for the probability simplex over the $K$ classes.

\textbf{Language Model and Data.}
Let $\cV$ be the model's vocabulary. We assume throughout that the evaluation data %
comes from a ground-truth distribution $\cD$ over prompt-completion pairs $(x, y) \in \cV^* \times \cV^N$, where $N$ is a maximum generation length. An LLM is a function $p_\theta: \cV^* \to \Delta(\cV^N)$ that maps a prompt $x$ to a distribution over output strings. We use conventional notation:
$p_x \equiv p_\theta(\cdot \mid x)$ is the entire distribution over sequences for a given prompt, 
so we can denote $p_x(z) = p_\theta(z \mid x)$ as the probability of a specific sequence $z$.
The conditional probability of the next token is denoted $p_\theta(z_i \mid x, z_{<i})$.
To distinguish model outputs from the dataset, we use $z \in \cV^N$ for generated strings and $y \in \cV^N$ for ground-truth completions from $\cD$.

\textbf{Collapsing function.}
The core of our framework is the collapsing function $B: \cV^* \times \cV^N \to [K]$ that classifies a given prompt-completion pair into one of $K$ categories.
In our theory, $B$ is allowed to be arbitrary, but we often will think of it as a 
\textbf{``semantic collapsing'' function}, grouping many different strings into a single semantic class, as visualized in \Cref{fig:B-collapse-intrp}. 
An example of such a function is described in \Cref{app:experiments}. 
For convenience, we write $B_x(z) \equiv B(x, z)$ to emphasize its role as a classifier for outputs $z$ given a fixed prompt $x$.

\subsection{Confidence Calibration}

We first recall the relevant definitions of calibration in the multi-class setting (for a unified treatment, see \citet[Section 2]{gopalan2024computationally}). In the $K$-class setting, 
classifiers output values $c \in \Delta_K$
and the true labels take values $y \in \eK$ (one-hot encodings).
Calibration is a property defined for \emph{any} joint distribution of 
prediction-label pairs $(c, y) \in \Delta_K \times \eK$, regardless of whether it was 
generated by a classifier.
We will focus primarily on \emph{confidence calibration}, which only considers the probability assigned to the predicted class; however, we provide analogous results for full calibration in \Cref{app:b_cal}.
The following definition is standard:
\begin{definition}[Confidence-calibration]
    A distribution $\cD$ over prediction-output pairs
    $(c, y) \in \Delta_K \times \eK$
    is \emph{perfectly confidence-calibrated} if
    $$\E_{\substack{(c, y) \sim \cD}}
    \left[y_{k^\star} - c_{k^\star} \mid c_{k^\star} \right] \equiv 0 \text{ where } k^\star \gets \argmax_{k \in [K]}c_k .$$ \label{def:confidence_calibration}
\end{definition}
The definition depends crucially on the distribution $\cD$.
In this work we take $\cD$ to be the evaluation distribution of interest (e.g. TriviaQA, GSM8k, etc),
unless otherwise specified.

\paragraph{From Language Model to Categorical Predictor}
For a given prompt $x$, we obtain a distribution over $K$ categories by
pushing-forward the LLM's output distribution $p_\theta(\cdot \mid x)$ 
via the function $B_x$.
Specifically, the distribution over categories
$\pi_x := B_x \sharp p_x \equiv B_x \sharp p_\theta( \cdot \mid x)$
assigns to each category $k \in [K]$ the sum of probabilities of all strings $z$ that $B_x$ maps to that category:
\begin{equation}
\label{eqn:pushforward_dist}
(B_x \sharp p_x)(k) = \Pr_{z \sim p_\theta(\cdot \mid x)}[B_x(z) = k] = \sum_{z \,:\, B_x(z) = k} p_\theta(z \mid x).
\end{equation}
This process transforms the original prompt-answer pair $(x, y)$ from the dataset $\cD$ into a pair suitable for calibration analysis: $(B_x \sharp p_x, B_x(y))$, where $B_x \sharp p_x$ is the model's predicted distribution over categories and $B_x(y)$ is the ground-truth category.
Now, we say that the model $p_\theta$ is $B$-confidence-calibrated if the induced distribution over $(B_x \sharp p_x, B_x(y))$ is confidence-calibrated. 
That is, $B$-confidence-calibration means if the generated and ground-truth answers are both post-processed by $B$, then the resulting $K$-way-classifier is confidence-calibrated. 

\begin{definition}[$B$-confidence-calibration]
\label{def:B-conf-cal}
The model $p_\theta$ is
\emph{$B$-confidence-calibrated} with respect to distribution $\cD$ if
the induced distribution over pairs $(B_x \sharp p_x, B_x(y)) \in \Delta_K \x [K]$
is perfectly confidence-calibrated (per \Cref{def:confidence_calibration}).
\end{definition}

Our entire framework is well-defined for any function $B$,
though we usually choose $B$ to be a semantic-collapsing function.
In general, an LLM might be $B$-confidence-calibrated for some choices of $B$, but not others---one goal of our theory is to understand why.

\arxiv{
Our entire framework is well-defined for any arbitrary computable function $B$,
though we usually choose $B$ to be a semantic-collapsing function, 
in which case we refer to $B$-calibration as simply \emph{semantic calibration}.
In general, an LLM might be $B$-calibrated for some choices of $B$, but not others---
one goal of our theory is to understand why.
}

\arxiv{
\subsection{Remarks}
There are several subtleties in the above setup.
First, for a given question,
we would experimentally measure accuracy not by sampling the LLM once (as is often done in practice),
but rather by sampling it many times and checking correctness of
only the most-frequent semantic answer.\footnote{
Taking majority vote is identical to the ``self-consistency'' method proposed 
by \citet{wang2023selfconsistency} to improve performance.
The connection to classifier calibration lends a principled justification to this method.
}
This choice may seem unnatural for language models, but
it is exactly the notion of accuracy which
arises from treating the LLM-induced-classifier as a classifier:
we consider only the argmax prediction.
Second, although we formally defined the LLM-induced-classifier
as outputting a distribution over classes,
we never explicitly materialize this output distribution in our experiments (e.g. \Cref{fig:B-collapse-intrp}).
Rather, we only need to obtain \emph{samples} from the semantic class distribution,
which are sufficient for our purposes.
}

\section{Theoretical Mechanism}
\label{sec:theory}

Our conjectured mechanism for emergent calibration 
builds on the work of \citet{blasiok2023when,blasiok2024loss} 
which connects the \emph{statistical} property of calibration
to the \emph{optimization} property of local loss optimality.
The core intuition is that a miscalibrated model implies the existence of a ``simple'' perturbation 
to the model that would reduce its test loss.
For example, suppose an LLM is semantically miscalibrated in the following way:
on questions where it is 70\% semantically-confident, it is on average only 60\% accurate.
Then, an obvious way to improve the LLM's test loss is:
whenever the original LLM was 70\% semantically confident,
it should downweight the probability mass it places on all strings in its majority semantic class, thereby decreasing its confidence.
We argue that base LLMs, trained to minimize cross-entropy loss, should not leave such ``easy wins'' on the table, and thus should be well-calibrated.

\looseness=-1
This example reveals some of the subtlety in the LLM setting: unlike standard classifiers,
the LLM does not explicitly output its [semantic] confidences.
Thus to implement such a loss-improving perturbation during pretraining,
the LLM must implicitly ``know'' its semantic confidence for a given question even before generating its answer---in order to know what type of upweighting/downweighting
of answer strings is required.
In settings where the LLM does not ``know'' its semantic confidences (informally),
we may expect poor calibration---we will see this aspect in both our theory and experiments.
A technical overview of our results is in \Cref{sec:proposed_mech},
followed by formal theorem statements in \Cref{sec:theory-loss} and \Cref{sec:theory-ar}.
All proofs are deferred to \Cref{app:theory}.

\subsection{Conjectured Mechanism: Overview}
\label{sec:proposed_mech}

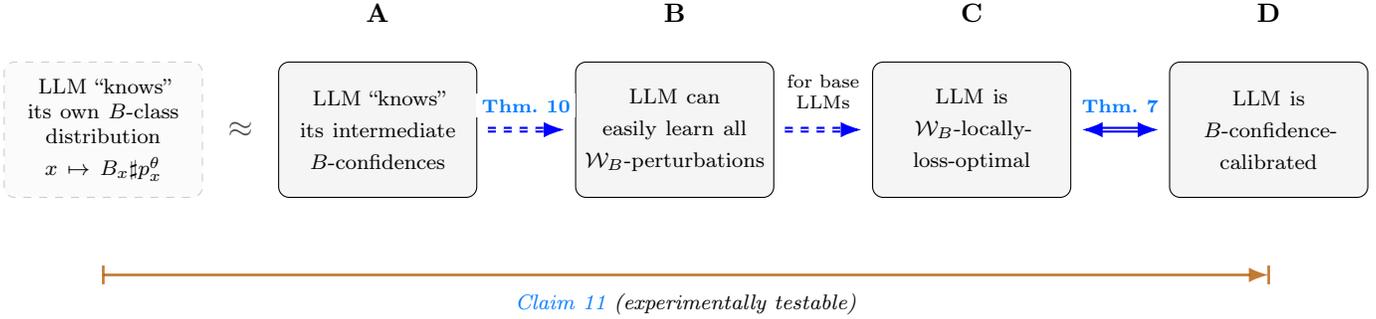
\begin{figure}[t]
\centering
\makebox[\textwidth]{%
\begin{tikzpicture}[
    node distance=0.3cm and 0.6cm,
    box/.style={draw, rounded corners, text width=2.4cm, align=center, minimum height=1.8cm, fill=gray!8},
    box faded/.style={draw, dashed, rounded corners, text width=2.4cm, align=center, minimum height=1.8cm, fill=gray!3, draw=gray!40},
    theory arrow/.style={-{Latex}, double, thick, draw=blue, shorten <=4pt, shorten >=4pt},
    heuristic arrow/.style={-{Latex}, double, thick, draw=orange!70!black, dashed, shorten <=4pt, shorten >=4pt},
    theory dashed/.style={-{Latex}, double, thick, draw=blue, dashed, shorten <=4pt, shorten >=4pt},
    claim arrow/.style={orange!70!black, line width=1.0pt, opacity=0.8},
    box label/.style={font=\bfseries, yshift=0.1cm} %
]
\node(nodeE)[box] at (0,0) {
    \footnotesize LLM is \mbox{$B$-confidence}-calibrated
};
\node(nodeD)[box, left=of nodeE, xshift=-0.7cm] {
    \footnotesize LLM is \mbox{$\cW_B$-locally}-loss-optimal
};
\node(nodeC)[box, left=of nodeD, xshift=-0.7cm] {
    \footnotesize LLM can easily learn all \mbox{$\cW_B$-perturbations}
};
\node(nodeB)[box, left=of nodeC, xshift=-0.7cm] {
    \footnotesize LLM ``knows'' its
    intermediate \mbox{$B$-confidences} %
};
\node(nodeA)[box faded, left=of nodeB, xshift=-0.4cm] {
    \footnotesize LLM ``knows''
    its own $B$-class distribution\\ $x \mapsto B_x\sharp p^\theta_x$
};

\path (nodeA) -- (nodeB) node[midway, text=black, text opacity=0.7, font=\large] {$\approx$};

\draw[theory dashed] (nodeB) -- (nodeC)
    node[black, midway, above, yshift=0.15cm, font=\scriptsize\bfseries, fill=white, inner sep=2pt] {
        \Cref{thm:ar-b-cal}
    };

\draw[theory dashed] (nodeC) -- (nodeD) 
    node[black, midway, above, yshift=0.15cm, text width=1.5cm, align=center, font=\scriptsize] {
        for base LLMs
    };

\draw[theory arrow, {Latex}-{Latex}] (nodeD) -- (nodeE) 
    node[black, midway, above, yshift=0.15cm, font=\scriptsize\bfseries, fill=white, inner sep=2pt] {
        \Cref{thm:calibration_equivalences}
    };

\node[box label, above=of nodeB] {A};
\node[box label, above=of nodeC] {B};
\node[box label, above=of nodeD] {C};
\node[box label, above=of nodeE] {D};

\def\claimarrowyshift{-0.9cm}
\def\claimstopperheight{0.25cm}

\draw[claim arrow] 
    ([yshift=\claimarrowyshift]nodeA.south) -- ++(0,-\claimstopperheight);
\draw[claim arrow, -{Latex}] 
    ([yshift=\claimarrowyshift-0.125cm]nodeA.south) -- ([yshift=\claimarrowyshift-0.125cm]nodeE.south);
\draw[claim arrow] 
    ([yshift=\claimarrowyshift]nodeE.south) -- ++(0,-\claimstopperheight);
    
\path (nodeA.south) -- (nodeE.south) node[midway, below=1.15cm, font=\footnotesize\itshape, text=black] {
   \Cref{claim:main_heuristic} (experimentally testable)
};

\end{tikzpicture}%
}
\caption{
Conjectured Mechanism for Semantic Calibration.
Implications have varying levels of support:
the \textcolor{blue}{solid blue} arrow
(\protect\tikz[baseline=-0.5ex]{\protect\draw[blue, double, thick, latex-latex] (0,0) -- (0.7,0);})
has a formal proof;
the \textcolor{blue!70!black}{dashed blue} arrows
(\protect\tikz[baseline=-0.5ex]{\protect\draw[blue!70!black, dashed, double, thick, -latex] (0,0) -- (0.6,0);})
have proofs of ``morally similar'' (but weaker) implications.
\Cref{claim:main_heuristic}
encompasses the full chain of implications,
and has experimental support.
}
\label{fig:mechanism}
\end{figure}

\Cref{fig:mechanism} illustrates our conjectured mechanism. 
There are three main steps in the conjecture, 
with 
different degrees of evidence for each step.
For the first step, we have a fully rigorous proof.
For the other two steps, we have partial theoretical evidence:
proofs of weaker claims which are ``morally similar'' to our conjectured claim.
Finally, we have experimental evidence for our overall conjecture
(presented later in \Cref{sec:experiments}).
We outline each step below,
following \Cref{fig:mechanism} from right-to-left.

{\bf (C)}
\protect\tikz[baseline=-0.5ex]{\protect\draw[blue, double, thick, latex-latex] (0,0) -- (0.7,0);}
{\bf (D):}
The first step of our argument, described in more detail in \Cref{sec:theory-loss},
is a general equivalence between calibration and local loss optimality.
We say an LLM is locally-loss-optimal if its test loss cannot be improved 
by post-processing its output distribution via any function in some given set of perturbation functions
(formally, \Cref{def:local_optimality_condition}).
For a particular choice of perturbation functions,
this turns out to exactly characterize $B$-calibration.
We prove in \Cref{thm:calibration_equivalences} that for any choice of
collapsing function $B$, $B$-confidence-calibration is \emph{equivalent}
to local-loss-optimality with respect to
a corresponding family of perturbations, denoted $\mathcal{W}_B$.
Roughly speaking, this family $\cW_B$ consists of perturbations like our earlier example:
``If the $B$-class-confidence was 70\%, then downweight the probability 
of generating all strings in the majority $B$-class.''
Overall, \Cref{thm:calibration_equivalences} tells us that if we want to understand
when LLMs are $B$-confidence-calibrated, we can equivalently understand
which types of perturbations LLMs are loss-optimal with respect to.

{\bf(B)}
\protect\tikz[baseline=-0.5ex]{\protect\draw[blue, dashed, double, thick, -latex] (0,0) -- (0.7,0);}
{\bf (C):}
At this point, we invoke an informal
assumption proposed in \citet{blasiok2023when}, and likely folklore much earlier:
we assume that base LLMs are nearly locally-loss-optimal on their pretraining distribution, w.r.t.
any perturbation that is ``easy'' for the LLM to learn.
We state this assumption more precisely as \Cref{claim:w-general} in the Appendix.
\citet[Theorem 1.2]{blasiok2024loss} offers partial theoretical justification for this claim, by proving that, intuitively, if small models can represent a set of perturbations, then ERM over a family of slightly larger models yields local loss optimality w.r.t. these perturbations; this serves as a approximate representational analog of the desired assumption.
The intuition is that pretraining not leave any easy wins
on the table: if a simple (i.e. easily-learnable) perturbation could have
improved the test loss, the LLM would have learned it during training.\footnote{\label{footnote:multical}Technically, we need local-loss-optimality
not only for the overall pretraining distribution, but also for each evaluation distribution individually (TriviaQA, GSM8k, etc), since we are evaluating calibration on individual distributions.
We will however assume that the latter holds
(which is plausible if each evaluation distribution is a reasonably-sized sub-distribution of the pretraining distribution on which local-loss-optimality holds).} We agree with \citet{blasiok2023when} that this assumption is plausible, because it is fairly weak; it does not require that models are \emph{globally} optimal in any sense.

{\bf(A)}
\protect\tikz[baseline=-0.5ex]{\protect\draw[blue, dashed, double, thick, -latex] (0,0) -- (0.7,0);}
{\bf (B):}
From the above two points, we can conclude that a base LLM 
will be $B$-confidence-calibrated if the corresponding perturbation family $\cW_B$
is simple for the LLM to learn.
But when is $\cW_B$ simple to learn? 
This is subtle because the perturbations $\cW_B$ are 
defined over the \emph{sequence-level} probability distribution
but LLMs must implement perturbations by modifying \emph{next-token} probabilities.
For example, in order to implement a perturbation such as
``increase the probability of ultimately generating a Paris-type answer'',
the model must begin by deciding how to adjust its \emph{first token} probabilities in order to achieve this.
We bridge this gap in \Cref{thm:ar-b-cal},
by proving a representational analogue of the 
implication {\bf(A)}
\protect\tikz[baseline=-0.5ex]{\protect\draw[blue, dashed, double, thick, -latex] (0,0) -- (0.7,0);}
{\bf (B)} of \Cref{fig:mechanism}:
we show that if the LLM ``knows'' its own induced distribution over $B$-classes at each intermediate point during generation (even the very beginning),
then it can implement the associated family of perturbations $\mathcal{W}_B$ in a ``simple'' way. (Notably, this does not require the model to know the
\emph{correct answer's} $B$-class, only that of its own generation.)
Formally, we prove a circuit-complexity version of this:
the next-token probabilities of the perturbed model
can be computed with a shallow circuit given oracle access to the intermediate $B$-confidence functions, and the original next-token probabilities. In practice, we will focus primarily on the model's ability to predict its $B$-distribution at the beginning of generation (before outputting the first token). Intuitively, this is more likely to hold for straightforward questions such as ``What is the capital of France?'' than questions requiring many steps of reasoning--- we will say more about this in the following section.

Putting everything together, the overall mechanism predicts that a base LLM will be
$B$-confidence-calibrated if the LLM ``knows'' the distribution of $B$-classes of its own answers
(i.e. if it can be LoRA-adapted to immediately output this $B$-class distribution, given only the question).
When $B$ is a semantic collapsing function,
this theory naturally suggests a number of practical predictions about which models and tasks should be semantically calibrated,
which we explore and test experimentally in \Cref{sec:experiments}.
The next several sections give the formal theory supporting the mechanism we have just outlined.

\subsection{$B$-calibration and local loss optimality}

\label{sec:theory-loss}
We now setup and establish 
the equivalence between calibration and local loss optimality
(\Cref{thm:calibration_equivalences}).
We consider the sequence-level cross-entropy loss, which decomposes into the
standard autoregressive next-token log-loss:
$
\E_{(x,y) \sim \cD}[\ell(y, p_x)] = 
    \E_{\substack{(x, y) \sim \cD}}
    \left[ -\sum_{i \in [N]}
    \log p_\theta(y_i \mid y_{<i}, x) \right].
$
We will use the following notion of perturbing a probability distribution, 
known as \emph{exponential tilting} \citep[Chapter 11]{cover1999elements},
which turns out to be the appropriate notion\footnote{The appropriate notion of perturbation
depends on the loss function via convex duality; see \Cref{subsec:Calib_loss_optimality} for more details.} for the cross-entropy loss.

\begin{definition}[Perturbation operator]\label{def:perturbation_operator}
Given a distribution $f \in \Delta(\cV^N)$ over sequences,
and a signed measure $\mu \in \R^{|\cV^N|}$,
define the perturbed distribution $(f \star \mu) \in \Delta(\cV^N)$ as:
\begin{align}
\forall z \in \cV^N: \quad
(f \star \mu)[z] &:= 
\mathrm{softmax}\big(\mu[z] + \log f[z] \big).
\end{align}
\end{definition}

This is an operation defined over probability distributions.
We can use it to perturb a model in the following way.
Recall that for a model $p_\theta$
and prompt $x \in \cV^*$,
we write $p_x \equiv p_\theta(\cdot \mid x)$.
\begin{definition}[Perturbed model] 
\label{def:perturb-model}
Given a model $p_\theta: x \mapsto p_x$
and a perturbation function
$w: \cV^* \x \Delta(\cV^N) \to \R^{|\cV^N|}$,
we define the perturbed model $(p_\theta \star w) \equiv \tilde{p}$ as
\[
\tilde{p}: x \mapsto (p_x \star w_x)
\quad
\mathrm{where} ~w_x\equiv w(x, p_x) \in \R^{|\cV^N|}
\]
\end{definition}
That is, a perturbation function $w$ takes as input the prompt $x$
and the model's generative distribution $p_x$,
and defines how to perturb the generative distribution for that specific prompt.
We can now define local loss optimality with respect to an
arbitrary family of perturbation functions $\cW$.
\begin{definition}[$\cW$-local loss optimality]
We say that model $p_\theta$ is \emph{$\mathcal{W}$-locally-loss-optimal} on distribution $\cD$ if
\begin{equation*}
    \forall w \in \mathcal{W}:
    \quad \E_{(x,y) \sim \cD}[\ell(y, p_x)] \le \E_{(x,y) \sim \cD}[\ell(y, p_x \star w_x)]
    \quad
    \mathrm{where} ~w_x\equiv w(x, p_x)
    ~,~ p_x \equiv p_\theta(\cdot \mid x).
\end{equation*}
\label{def:local_optimality_condition}
\end{definition}

Next we define a 
specific class of perturbations $\WBconf$
which characterize $B$-confidence-calibration.
The formal definition is somewhat technical, based on the language of weighted calibration
developed in \citet{gopalan2024computationally}.

\begin{definition}[Semantic Perturbation Function Classes]
\label{def:perturbation_classes}
Given an arbitrary collapsing function $B_x(z) \in [K]$,
we define the class
$\WBconf$
of perturbation functions
$w_\tau(x, p_x) \in \R^{|\cV^N|}$ as follows.
Each function $w_\tau$
is indexed by a map $\tau: [0, 1] \to [-1, 1]$,
and generates a perturbation vector in $\R^{|\cV^N|}$ 
based on the prompt $x$ and the model's predictive distribution $p_x$.
\begin{align*}
\WBconf &:= 
\{ w_\tau \mid \tau: [0, 1] \to [-1, 1] \}
\end{align*}
where $w_\tau(x, p_x) \in \R^{|\cV^N|}$ is defined componentwise as follows.
For index $z \in \cV^N$,
\begin{align*}
w_\tau(x, p_x)[z] =  \tau\big( \pi_x[k^*] \big)
\cdot 
\mathds{1}\{B_x(z) = k^*\}, 
~\mathrm{where}~ \pi_x := B_x \sharp p_x,
\quad \mathrm{and}~~ k^* \gets \argmax_{k \in [K]} \pi_x [k].
\end{align*}

\end{definition}

These perturbations implement a re-mapping of the
$B$-class-confidences governed by the function $\tau$.
For example, if $B$ is a semantic collapsing function, then
a perturbation $w_\tau$ could implement the change 
``whenever the semantic confidence of a question is 70\%, decrease the semantic confidence to 60\%,
by downweighting the probability of all strings in the top semantic class.''
Unpacking the notation in \Cref{def:perturbation_classes}: $\pi_x \in \Delta_K$ 
is the model's distribution over $B$-classes, 
$k^* \in [K]$ is the top $B$-class,
$z \in \cV^N$ is a string,
and $w_\tau(x, p_x)[z]$ represents how much 
the perturbed model should up-weight the answer string $z$,
for question $x$. Then,
\begin{align*}
\underbrace{w_\tau(x, p_x)[z]}_{\textrm{Desired perturbation to $p_\theta(z \mid x)$}} = 
\tau\big(
\underbrace{
\pi_x[k^*]}_{\textrm{$B$-confidence}}
\big)
\cdot 
\underbrace{\mathds{1}\{B_x(z) = k^*\}}_{\textrm{$z$ in top $B$-class?}}.
\end{align*}

We can now state the main result of this section
(see \Cref{app:theory} for all proofs).
\begin{theorem}[Equivalence of Calibration and Local Loss Optimality]
\label{thm:calibration_equivalences}
For all models $p_\theta$, collapsing functions $B$ and distributions $\cD$, the following are equivalent:
\begin{enumerate}
    \item The model $p_\theta$ is perfectly \text{$B$-confidence-calibrated} on $\cD$ 
    \item The model $p_\theta$ is \text{$\WBconf$-locally-loss-optimal} on $\cD$.
\end{enumerate}
    
\end{theorem}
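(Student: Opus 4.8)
The plan is to compute how the sequence-level cross-entropy loss changes under an exponential-tilting perturbation from $\WBconf$, and to show that this change is nonnegative for every such perturbation exactly when the $B$-confidence-calibration identity holds. Since the sequence-level log-loss is $\ell(y,p_x) = -\log p_x(y)$ and the operator $\star$ acts directly on $p_x \in \Delta(\cV^N)$, \Cref{def:perturbation_operator} gives $(p_x \star w_x)(z) = p_x(z)\,e^{w_x[z]}/Z_x$ with partition function $Z_x = \sum_{z \in \cV^N} p_x(z)\,e^{w_x[z]}$, where $w_x \equiv w(x,p_x)$. Hence the loss increment is
\[
\Delta(w) \;:=\; \E_{(x,y)\sim\cD}\big[\ell(y, p_x \star w_x) - \ell(y, p_x)\big] \;=\; \E_{(x,y)\sim\cD}\big[-w_x[y] + \log Z_x\big],
\]
and the zero perturbation ($\tau \equiv 0$) gives $\Delta = 0$, so $\WBconf$-local-loss-optimality is precisely the statement that $\tau\equiv 0$ minimizes $\Delta(w_\tau)$ over all $\tau:[0,1]\to[-1,1]$.

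Next I would specialize to $w = w_\tau \in \WBconf$. By \Cref{def:perturbation_classes}, $w_\tau(x,p_x)[z] = \tau(c_x)\,\mathds{1}\{B_x(z) = k^*\}$ where $c_x := \pi_x[k^*]$ is the $B$-confidence and $k^* = \argmax_k \pi_x[k]$; since this tilt only distinguishes ``top class'' from ``rest,'' summing $p_x$ over the two parts gives $Z_x = c_x e^{\tau(c_x)} + (1 - c_x)$, while $w_\tau(x,p_x)[y] = \tau(c_x)\,\mathds{1}\{B_x(y) = k^*\}$. Taking the conditional expectation given the value of $c_x$ and abbreviating $r(c) := \E_{(x,y)\sim\cD}[\mathds{1}\{B_x(y) = k^*\} \mid c_x = c]$, the increment factorizes over confidence levels:
\[
\Delta(w_\tau) \;=\; \E_{c}\big[\phi_c(\tau(c))\big], \qquad \phi_c(t) := -t\,r(c) + \log\!\big(c e^{t} + 1 - c\big), \quad \phi_c(0) = 0,
\]
where the outer expectation is over $c$ drawn from the law of $c_x$.

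Then I would analyze $\phi_c$ for fixed $c$. A direct computation gives $\phi_c'(t) = -r(c) + \frac{ce^t}{ce^t + 1 - c}$ and $\phi_c''(t) = \frac{ce^t(1-c)}{(ce^t + 1 - c)^2} \ge 0$, so $\phi_c$ is convex in $t$; since $0$ is interior to $[-1,1]$, the point $t = 0$ is a minimizer iff $\phi_c'(0) = c - r(c) = 0$, i.e.\ iff $r(c) = c$ (the degenerate case $c = 1$, where $\phi_c$ is linear, gives the same condition, and $c = 0$ never occurs since $c_x \ge 1/K$). Therefore $\WBconf$-local-loss-optimality is equivalent to $r(c) = c$ for (almost) every $c$ in the support of the law of $c_x$, i.e.\ $\E_{(x,y)\sim\cD}[\mathds{1}\{B_x(y) = k^*\} - c_x \mid c_x] = 0$ almost surely. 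Unpacking the induced distribution over $(B_x \sharp p_x,\, B_x(y))$, with $c_{k^\star} = \pi_x[k^*] = c_x$ and $y_{k^\star} = \mathds{1}\{B_x(y) = k^*\}$, this is exactly \Cref{def:confidence_calibration} applied as in \Cref{def:B-conf-cal}, which closes the equivalence.

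The direction calibration $\Rightarrow$ optimality is then immediate: when $r(c) = c$, convexity gives $\phi_c(t) \ge \phi_c(0) = 0$ pointwise, hence $\Delta(w_\tau) \ge 0$ for all $\tau$. The step needing the most care is the converse: if $\Pr[r(c_x) \ne c_x] > 0$, I would exhibit an explicit small perturbation witnessing suboptimality --- e.g.\ on $\{r(c_x) - c_x > \delta\}$ set $\tau(c) = \delta/2$ (and analogously with a negative sign on $\{c_x - r(c_x) > \delta\}$), and use $\log(ce^t + 1 - c) \le c(e^t - 1)$ to get $\phi_c(\delta/2) \le -\delta^2/4 < 0$ there, so $\Delta(w_\tau) < 0$ once $\delta$ is small enough that this event has positive probability. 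This construction only requires fixing a version of the conditional expectation $c \mapsto r(c)$, avoiding any measurable-selection argument. The remaining bookkeeping --- tie-breaking in the $\argmax$ defining $k^*$, measurability of $\tau$, and finiteness of all expectations (the tilts are bounded, $|w_x[z]| \le 1$) --- I would handle in passing; the only load-bearing facts are the loss-increment identity, the factorization $\Delta(w_\tau) = \E_c[\phi_c(\tau(c))]$, and the convexity of $\phi_c$.
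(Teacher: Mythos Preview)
Your proof is correct, but takes a different and more direct route than the paper. The paper factors through the abstraction of \emph{weighted calibration}: it first proves a general equivalence between $\cW$-weighted-calibration and $\cW$-local-loss-optimality (for \emph{any} weight family $\cW$) via the first-order condition for the dual loss $\ell^\star(y,z)=\psi(z)-\langle y,z\rangle$ and convexity of $\psi$, and then separately proves that $B$-confidence-calibration is equivalent to $\WBconf$-weighted-calibration by expanding the inner product. Composing these two gives the theorem. You instead compute $\Delta(w_\tau)$ directly for the specific class $\WBconf$, factorize it over confidence levels as $\E_c[\phi_c(\tau(c))]$, and reduce everything to the one-dimensional convex analysis of $\phi_c$; your condition $\phi_c'(0)=c-r(c)=0$ is exactly the weighted-calibration identity after the paper's expansion, so the two arguments meet at the same first-order condition but you never name the intermediate object.

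What each buys: the paper's route is modular and immediately generalizes --- the same proof handles full $B$-calibration, arbitrary proper losses, and the quantitative version (their \Cref{thm:multi_class_cal}) with no extra work. Your route is more self-contained and concrete, avoids introducing the dual loss and the weighted-calibration machinery, and your explicit $-\delta^2/4$ bound already gives the quantitative lower bound on the post-processing gap that the paper proves separately. Both rely on the same convexity at heart; the difference is whether one abstracts first and specializes later, or specializes from the start.
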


\begin{remark}
\Cref{thm:calibration_equivalences} states a simplified version
of our full theoretical results, for the sake of exposition.
\Cref{thm:calibration_equivalences} only characterizes
perfect confidence-calibration,
but it is possible to show a much more robust equivalence:
it turns out that a model is ``close to'' $B$-calibrated if and only if
it is ``close to'' locally-loss-optimal in the appropriate sense.
We state and prove this generalized version as \Cref{thm:bcal_calibration_equivalences}
in \Cref{app:theory}, where we also generalize to
allow any arbitrary proper-loss $\ell$, and any notion of weighted-calibration
(including canonical calibration and confidence calibration).
\end{remark}

\subsection{Which Perturbations are Easy to Learn Autoregressively?}
\label{sec:theory-ar}
It remains to understand
when the perturbation class $\WBconf$ is easy for an LLM to learn
(box {\bf (B)} in \Cref{fig:mechanism}).
Although we cannot currently fully answer this question, we can gain insight by studying a simpler question of representation:
when is a perturbation class $\WBconf$ ``easy'' for the LLM to represent
(for example, as a small circuit on top of the original LLM)?
The main remaining challenge is that perturbations 
are defined on probability distributions over \emph{sequences} (\Cref{def:perturbation_operator}),
whereas autoregressive models must implement perturbations \emph{token-by-token}.
Fortunately, for perturbations in $\WBconf$, it turns out
the perturbed next-token distribution can be expressed as a simple re-weighting of
the LLM's original next-token distribution.
This re-weighting is governed by a set of scalar-valued functions $\{g_i\}$, defined below.
We call these functions ``intermediate $B$-confidences'', because $g_i(z_{\leq i}; x)$ is the probability mass the model places
on its most-likely $B$-class, given both the question $x$ and the response prefix $z_{\leq i}$ generated so far.
\arxiv{
For $i=0$, the function $g_0$ is simply the $B$-confidence of the model given the
question $x$.}
Thus, the difficulty of representating the sequence-level perturbation reduces to the difficulty of representing these intermediate confidences values during generation.

\begin{definition}[Intermediate $B$-Confidences]
For a given function $B: \cV^* \times \cV^N \to [K]$
and model $p_\theta$,
we define the \emph{intermediate $B$-confidences} as
the scalar-valued functions $\{g_i\}_{i \in \{0, 1, \dots, N\}}$:
\begin{align*}
g_i(z_{\leq i}; x) := 
\Pr_{z \sim p_\theta(\cdot \mid x, z_{\leq i})}[ B_x(z) = k^* ]
~~\mathrm{where} ~~
k^* \gets \argmax_{k \in [K]} (B_x \sharp p_x)[k].
\end{align*}
\label{def:ar-b-conf}
\end{definition}

We will informally say 
that the LLM ``knows'' its intermediate $B$-confidences if
the functions $g_i$ have a simple representation
(e.g. each $g_i$ is computable by a small circuit on top of the LLM).
In that case, we show in \Cref{thm:ar-b-cal}
that for any perturbation $w \in \WBconf$,
the perturbed model $p_\theta \star w$ has an only-slightly-more-complex representation than the
original model $p_\theta$.
Specifically, the perturbed model can be computed by composing a circuit $C_w$ with the
functions $g_i$. Explicit formulas are provided in \Cref{app:ar-conf-proofs}.

\begin{theorem}
\label{thm:ar-b-cal}
For all functions $B: \cV^* \times \cV^N \to [K]$
and all perturbations $w \in \WBconf$,
there exists a small circuit\footnote{Specifically, an arithmetic circuit of constant depth and $\Theta(K)$ width.} $C_w$
such that for all models
$p_\theta : \cV^* \to \Delta(\cV^N)$,
all $x \in \cV^*, z \in \cV^N$, all $i \in [N]$, 
and with
$p_x := p_\theta(\cdot \mid x)$,
$w_x := w(x, p_x)$,
the perturbed model 
$x \mapsto p_x \star w_x$ satisfies
\begin{align}
(p_x \star w_x)( z_i \mid z_{< i} ) 
\propto C_w (a, g_i(z_{\leq i}; x), g_0(x)) 
\end{align}
where the constant of proportionality is independent of $z_i$, $a := p_x(z_i\mid z_{<i})$ is the original next-token probabilities, and $g_0, g_i$ are the intermediate $B$-confidences of \Cref{def:ar-b-conf}.
\end{theorem}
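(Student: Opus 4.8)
The plan is to unwind the definitions of the perturbation operator (\Cref{def:perturbation_operator}) and the perturbed model (\Cref{def:perturb-model}) for a perturbation $w = w_\tau \in \WBconf$, and then compute the induced next-token conditionals directly. Write $\mu := w_x = w_\tau(x, p_x) \in \R^{|\cV^N|}$, so that $\mu[z] = \tau(\pi_x[k^*]) \cdot \1\{B_x(z) = k^*\}$, where $k^*$ and $\pi_x$ are as in \Cref{def:perturbation_classes}. By definition $(p_x \star \mu)[z] \propto \exp(\mu[z]) \, p_x(z)$, with a global normalizing constant that depends only on $x$. The key observation is that $\exp(\mu[z])$ takes only two values: it equals $e^{\tau(\pi_x[k^*])}$ if $z$ lands in the top $B$-class $k^*$, and equals $1$ otherwise. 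So the perturbed sequence-level distribution is just the original one with all mass on strings in class $k^*$ rescaled by a single scalar $c := e^{\tau(\pi_x[k^*])}$ (then renormalized). Note $c$ depends on $x$ only through $g_0(x) = \pi_x[k^*]$, since $\tau$ is a fixed function.

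Next I would marginalize to get the next-token conditional. Fix a prefix $z_{<i}$ and a candidate token $z_i$. We have
\[
(p_x \star \mu)(z_i \mid z_{<i}) = \frac{\sum_{z \,:\, z_{\le i} \text{ consistent}} \exp(\mu[z]) \, p_x(z)}{\sum_{z \,:\, z_{<i} \text{ consistent}} \exp(\mu[z]) \, p_x(z)}.
\]
In the numerator, split the sum over completions according to whether $B_x(z) = k^*$ or not: the first part contributes $c$ times the mass of completions of $z_{\le i}$ that land in $k^*$, i.e. $c \cdot p_x(z_{\le i}) \cdot g_i(z_{\le i}; x)$, and the second part contributes $p_x(z_{\le i}) \cdot (1 - g_i(z_{\le i}; x))$, using the definition of $g_i$ (\Cref{def:ar-b-conf}) as the conditional probability of landing in $k^*$ given prefix $z_{\le i}$. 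Hence the numerator equals $p_x(z_{\le i}) \big( c \, g_i(z_{\le i}; x) + 1 - g_i(z_{\le i}; x)\big)$. Writing $a := p_x(z_i \mid z_{<i})$ so that $p_x(z_{\le i}) = a \cdot p_x(z_{<i})$, and absorbing the prefix-only factor $p_x(z_{<i})$ and the $z_i$-independent denominator into the constant of proportionality, we obtain
\[
(p_x \star \mu)(z_i \mid z_{<i}) \;\propto\; a \cdot \big( c \cdot g_i(z_{\le i}; x) + (1 - g_i(z_{\le i}; x)) \big),
\qquad c = \exp\!\big(\tau(g_0(x))\big),
\]
with constant of proportionality independent of $z_i$. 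This is exactly of the claimed form $C_w(a, g_i(z_{\le i}; x), g_0(x))$ with $C_w(a, g, g_0) := a\big( \exp(\tau(g_0)) \cdot g + 1 - g\big)$, which is an arithmetic circuit of constant depth; the only $K$-dependence in width comes from the subroutine that would compute $g_0$ as $\max_k (B_x \sharp p_x)[k]$ if one insists on expanding it, matching the $\Theta(K)$ width footnote — but since $g_0$ is supplied as an oracle input, $C_w$ itself is $O(1)$ size, so the $\Theta(K)$ bound is comfortably satisfied. One should also note $\tau$ is a fixed function baked into $C_w$ (the circuit depends on $w$, hence on $\tau$), consistent with the theorem's quantifier order.

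I do not expect a genuine obstacle here; the statement is essentially a bookkeeping computation. The one place to be careful is the handling of degenerate cases: if $\argmax_k \pi_x[k]$ is not unique, one must fix the tie-breaking convention consistently between \Cref{def:perturbation_classes} and \Cref{def:ar-b-conf} (they already share the same $k^*$, so this is fine as long as we use the same rule), and if $p_x(z_{<i}) = 0$ the conditional is defined arbitrarily and the identity holds vacuously. A second minor point is to double-check that the global softmax normalization in \Cref{def:perturbation_operator} — which is over all of $\cV^N$ — indeed cancels when forming the conditional $(p_x\star\mu)(z_i\mid z_{<i})$, which it does because it appears identically in numerator and denominator. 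The substantive content — that a sequence-level exponential tilt by an indicator of a $B$-class collapses, at the token level, into a reweighting by the intermediate $B$-confidence $g_i$ — is captured entirely by the marginalization step above.
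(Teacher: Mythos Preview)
Your proposal is correct and follows essentially the same approach as the paper: the paper first isolates the marginalization step as a standalone lemma (showing $(p_x\star w_x)(z_i\mid z_{<i}) = p_x(z_i\mid z_{<i})\cdot \E_{z\sim p_x(\cdot\mid z_{\le i})}[e^{w_x(z)}]\,/\,\E_{z\sim p_x(\cdot\mid z_{<i})}[e^{w_x(z)}]$) and then specializes to $\WBconf$, while you perform the same marginalization inline; both arrive at the identical circuit $C_w(a,g,g_0)=a\bigl(1+(e^{\tau(g_0)}-1)g\bigr)$.
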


Putting all the theory together, 
the message is: if the LLM ``knows'' its intermediate $B$-confidences,
then perturbations $\WBconf$ are easy to implement,
and we should expect emergent $B$-calibration.

\section{Experimental Predictions: When are LLMs calibrated?}
\label{sec:theory-exp-bridge}

Our main empirical question is:
\textit{
Under what conditions and
for which functions $B$ should we expect a pretrained LLM to be $B$-confidence-calibrated?
}

The theory of the previous section suggested an answer:
we should expect emergent $B$-confidence-calibration for a base LLM when
the LLM ``knows'' its intermediate $B$-confidences (\Cref{def:ar-b-conf}).
We simplify this (as discussed below) into an experimentally-testable heuristic:
for a given question $x$, does the LLM ``know''
the distribution of of its answers post-processed by $B$ (i.e. $B_x \sharp p_x$)?
Practically, we operationalize this
by training a small LoRA on top of the base LLM to predict the $B$-class of the answer.

\begin{claim}[Main, heuristic]
Let $(x, y) \sim \cD$ be a distribution on question-answer pairs,
let $B: \cV^* \times \cV^N \to [K]$ be a collapsing function,
and let
$p_\theta(z \mid x)$ be an autoregressive language model trained on $\cD$ with cross-entropy loss.
Then, $p_\theta$ will be $B$-confidence-calibrated on $\cD$ 
if the function $G: \cV^* \to \Delta_K$ defined as
\begin{align*}
\label{eqn:ar-learn}
G: x \mapsto B_x \sharp p_x\quad \textrm{is ``easy to learn'' for the LLM (e.g. with a LoRA adapter)}
\end{align*}
In words: the LLM should be able to accurately estimate
the distribution over semantic labels $B_x(z)$, under its own generative process,
given the question $x$.
\label{claim:main_heuristic}
\end{claim}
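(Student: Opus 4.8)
Since the hypothesis ``$G$ is easy to learn for the LLM'' is informal, the argument is necessarily heuristic: the plan is to assemble the chain of implications depicted in \Cref{fig:mechanism} from left to right, stringing together the two rigorous theorems of \Cref{sec:theory} with the one informal optimization assumption (\Cref{claim:w-general}). Concretely, I would establish (A) ``$G$ easy to learn'' $\Rightarrow$ (B) ``the intermediate $B$-confidences $\{g_i\}$ are easy to learn'' $\Rightarrow$ (C) ``every perturbation in $\WBconf$ is easy to learn'' $\Rightarrow$ (D) ``$p_\theta$ is $\WBconf$-locally-loss-optimal on $\cD$'' $\Rightarrow$ (E) ``$p_\theta$ is $B$-confidence-calibrated on $\cD$'', which is the conclusion.

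\emph{Step 1 (A $\Rightarrow$ B $\Rightarrow$ C).} The key observation is that a prompt $x$ followed by a partial generation $z_{\le i}$ is itself an element of $\cV^*$, so if the map $G : u \mapsto B_u \sharp p_u$ admits a simple (e.g.\ LoRA-scale) representation on top of $p_\theta$, then evaluating that same representation at $u = (x, z_{\le i})$ yields the conditional $B$-class distribution after the prefix; reading off the coordinate indexed by $k^\star = \argmax_k G(x)[k]$ recovers the intermediate $B$-confidence $g_i(z_{\le i}; x)$ of \Cref{def:ar-b-conf} (and $g_0(x) = G(x)[k^\star]$ directly). There is bookkeeping here---matching the maximum-generation-length convention for the continuation, and relating $B_x(z_{\le i}, \cdot)$ to $B_{x z_{\le i}}(\cdot)$, and assuming the learned representation of $G$ generalizes to prompts ending in a partial answer---which is one of the places the claim is genuinely heuristic rather than exact. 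Granting it, each $g_i$ is computable by a small circuit on top of $p_\theta$, so \Cref{thm:ar-b-cal} applies: for every $w \in \WBconf$ there is a constant-depth, $\Theta(K)$-width circuit $C_w$ with $(p_x \star w_x)(z_i \mid z_{<i}) \propto C_w\big(p_x(z_i \mid z_{<i}),\, g_i(z_{\le i}; x),\, g_0(x)\big)$. Hence the whole family $\WBconf$ is ``easy for the LLM to learn'': each perturbed next-token model differs from $p_\theta$ only by composing a small circuit with functions the LLM is assumed to already (nearly) represent.

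\emph{Step 2 (C $\Rightarrow$ D).} Here I would invoke the informal assumption stated as \Cref{claim:w-general}, following \citet{blasiok2023when} and supported representationally by \citet[Theorem 1.2]{blasiok2024loss}: a base LLM trained to minimize cross-entropy on $\cD$ is (approximately) locally-loss-optimal with respect to any family of perturbations that is easy for it to learn---otherwise training would have absorbed the corresponding ``easy win'' into the model. Applying this to the family $\WBconf$, which is easy to learn by Step 1, gives that $p_\theta$ is $\WBconf$-locally-loss-optimal on $\cD$. As noted in \Cref{sec:proposed_mech}, this must be read as holding on the evaluation distribution $\cD$ itself, which is plausible when $\cD$ is a reasonably-sized sub-distribution of the pretraining data.

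\emph{Step 3 (D $\Rightarrow$ E), and the main obstacle.} The final link is rigorous: \Cref{thm:calibration_equivalences} states that $\WBconf$-local-loss-optimality on $\cD$ is equivalent to perfect $B$-confidence-calibration on $\cD$, and the robust version \Cref{thm:bcal_calibration_equivalences} would carry ``approximately knows $G$'' through to ``approximately calibrated.'' The one genuinely non-rigorous link---and the step I expect to be the real obstacle---is Step 2: passing from ``$\WBconf$ is representable by a small perturbation of the model'' to ``the trained model actually sits at a local-loss-optimum with respect to $\WBconf$.'' Making this precise would require a quantitative representational-to-optimization transfer (the optimizer provably exploiting any small, loss-reducing reparametrization), which is exactly the gap that \citet[Theorem 1.2]{blasiok2024loss} only partially closes. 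Secondary obstacles are the bookkeeping in Step 1 relating $G$ on concatenated strings to the exact $g_i$, and tracking approximation error from the learned $G$, which forces use of the robust equivalence rather than the exact one.
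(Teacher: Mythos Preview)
Your proposal is correct and follows essentially the same chain of implications the paper uses to justify this heuristic claim---the diagram in \Cref{fig:mechanism}---with the rigorous pieces (\Cref{thm:ar-b-cal} and \Cref{thm:calibration_equivalences}) and the informal optimization assumption (\Cref{claim:w-general}) in exactly the same places, and with the non-rigorous step correctly identified. The one minor difference is your handling of (A) $\Rightarrow$ (B): you reduce learnability of the intermediate $g_i$ to learnability of $G$ by evaluating $G$ at the concatenated string $(x, z_{\le i})$ (and you correctly flag the $B_x$-vs-$B_{x z_{\le i}}$ mismatch this creates), whereas the paper's Remark after the claim justifies the simplification more loosely, arguing only that the empty-prefix case $i=0$ is intuitively the hardest, so learnability of $G$ serves as a practical proxy for learnability of all $g_i$.
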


\begin{remark}
Importantly, \Cref{claim:main_heuristic} does not require ground-truth labels to verify.
That is, although calibration is a property of the model $p_\theta$ and the joint distribution $(x, y)$,
we manage to predict calibration using only $p_\theta$ and the marginal distribution of $x$.
\end{remark}

\begin{remark}[Heuristic Simplifications]
\Cref{claim:main_heuristic} involves two main simplifications of \Cref{thm:ar-b-cal}. Recall that \Cref{thm:ar-b-cal} considers the functions $\{g_i\}$ of \Cref{def:ar-b-conf}, for all prefix lengths $i \in [N]$.
First, 
\Cref{claim:main_heuristic} only considers the empty prefix ($i=0$)
i.e. the model's $B$-class distribution given only the question.
Intuitively, the prediction from the empty prefix is likely the most challenging, and practically, this simplification means that only one simple-to-implement probe is required. 
Second, instead of considering learnability of only the \emph{$B$-confidences} ($g_0$),
\Cref{claim:main_heuristic} considers learnability of the entire $B$-class distribution
($B_x \sharp p_x$),
which can be estimated as a standard KL loss (see Appendix~\ref{app:lora}).
Empirically, we did not find these simplifications to significantly affect the conclusions.
\end{remark}

Finally, we specialize \Cref{claim:main_heuristic} to the practical case of semantic calibration---that is, we let $B$ be a function that collapses long-form
answers into semantic equivalence classes, yielding the following:

\begin{corollary}[Main, heuristic]
LLMs trained autoregressively with cross-entropy loss will be semantically calibrated
on in-distribution data if: the model ``knows'' its own output distribution
over semantic answers,
given only the question.
\label{corr:main_heuristic_semantic}
\end{corollary}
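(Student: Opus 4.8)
The plan is to obtain \Cref{corr:main_heuristic_semantic} as a direct specialization of \Cref{claim:main_heuristic}, so the first step is purely bookkeeping: fix $B : \cV^* \times \cV^N \to [K]$ to be a \emph{semantic} collapsing function, i.e.\ one that sends a prompt--completion pair to the index of its semantic equivalence class. With this choice, $B$-confidence-calibration in the sense of \Cref{def:B-conf-cal} is by construction exactly the ``semantic confidence-calibration'' of \Cref{sec:semantic-calibration}, so the conclusion of \Cref{claim:main_heuristic} becomes the conclusion of the corollary verbatim. It then remains to check that the hypothesis of the claim --- that $G : x \mapsto B_x \sharp p_x$ is ``easy to learn'' --- matches the informal hypothesis of the corollary, that the model ``knows'' its own output distribution over semantic answers given only the question. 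This is just unwinding \Cref{eqn:pushforward_dist}: $B_x \sharp p_x$ is precisely the distribution over semantic classes obtained by sampling $z \sim p_\theta(\cdot\mid x)$ and reading off the semantic class $B_x(z)$, which is ``the model's own distribution over semantic answers.''

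To make the proposal more than a one-line invocation, I would then reconstruct, in this semantic instance, the chain of implications underlying \Cref{claim:main_heuristic}, following \Cref{fig:mechanism} from right to left. (i) By \Cref{thm:calibration_equivalences}, $B$-confidence-calibration on $\cD$ is \emph{equivalent} to $\WBconf$-local-loss-optimality on $\cD$; this is the rigorous anchor of the argument. (ii) Invoking the assumed near-local-optimality of base LLMs with respect to easily-learnable perturbations (the folklore assumption, stated as \Cref{claim:w-general}), it suffices to show that the semantic perturbation family $\WBconf$ is easy for the LLM to learn. (iii) By \Cref{thm:ar-b-cal}, for every $w \in \WBconf$ the perturbed next-token probabilities are computed by a constant-depth, $\Theta(K)$-width circuit composed with the intermediate $B$-confidences $g_0, g_i$ and the original next-token probabilities; hence if the $g_i$ are cheap to represent on top of the LLM, so is $\WBconf$. (iv) Finally, I would argue (heuristically, via the Heuristic Simplifications remark) that once the empty-prefix map $G = B_x\sharp p_x$ is easy to learn, extracting the top class $k^*$ and the mass $g_0 = \pi_x[k^*]$ is trivial, and the simplification to $i=0$ lets us ignore the longer prefixes --- so the hypothesis of the corollary suffices to drive steps (ii)--(iii).

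The main obstacle --- and the reason the statement is phrased as a heuristic corollary rather than a theorem --- lies in steps (ii) and (iv): neither the passage from ``$G$ learnable'' to ``all $g_i$ learnable for $i \ge 1$'' nor the claim that pretraining genuinely attains local-loss-optimality with respect to \emph{every} easily-learnable perturbation is established here. \Cref{thm:ar-b-cal} provides only a representational (circuit-size) analogue of the former, and \Cref{claim:w-general}, together with \citet[Theorem 1.2]{blasiok2024loss}, only a partial representational analogue of the latter. A secondary subtlety I would flag is the distributional one of footnote~\ref{footnote:multical}: local-loss-optimality is needed not for the aggregate pretraining distribution but for the specific evaluation distribution $\cD$, which we only assume (plausible when $\cD$ is a reasonably large sub-distribution of pretraining). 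Modulo these assumptions, the corollary follows by assembling (i)--(iv) with $B$ taken semantic.
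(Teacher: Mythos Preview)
Your proposal is correct and matches the paper's own derivation: the paper obtains \Cref{corr:main_heuristic_semantic} simply by specializing \Cref{claim:main_heuristic} to a semantic collapsing function $B$, exactly as your first paragraph does. Your additional reconstruction of the (i)--(iv) chain and your explicit flagging of the heuristic gaps (the $i=0$ simplification, \Cref{claim:w-general}, and the distributional subtlety of footnote~\ref{footnote:multical}) are accurate and more detailed than what the paper spells out at this point, but they faithfully recapitulate the mechanism of \Cref{fig:mechanism} rather than departing from it.
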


\Cref{corr:main_heuristic_semantic} leads to the following predictions, which we verify experimentally in \Cref{sec:experiments}. 

\textbf{Prediction 1: Semantic calibration emerges from standard pretraining.} 
    When $B$ is a semantic-collapsing function,
   we expect it to be easy-to-learn in many settings:
Claim~\ref{claim:main_heuristic} only requires
that the LLM intuitively ``knows'' what types of semantic-answers
it is likely to output for a given question. 
    Thus, we \emph{should} expect emergent semantic calibration
    for a large class of pretrained LLMs, a remarkable fact not previously understood.

\textbf{Prediction 2: RL post-training can break calibration.}
We only
    theoretically predict calibration in models trained autoregressively with cross-entropy loss, that is, standard pretraining or SFT. (Cross-entropy loss is required to connect calibration with local-loss-optimality in \Cref{thm:calibration_equivalences}.) We have no reason to expect calibration in models trained in other ways, including Instruct models post-trained with RLHF, DPO, or RLVR -- although our theory does not preclude it. 
    
\textbf{Prediction 3: Chain-of-thought reasoning (CoT) can break calibration.}
    To satisfy the conditions of our theory, the 
    model must ``know'' its own distribution over semantic answers,
   even before generating the first token.
    In hard CoT setting such as math problem-solving,
    the model usually \emph{does not} know what its final answer
    will be until it has finished ``thinking''.
    Therefore, CoT is expected to break our mechanism for calibration.
    Notably, what makes CoT powerful
    (allowing the model to leverage more
    compute to produce a better answer than it could have produced immediately)
    is exactly what makes our mechanism of calibration fail.

\begin{remark}
These predictions are not entirely novel; some versions of them have been made in prior works,
with varying degrees of evidence.
Our contribution is providing a unified theoretical explanation of these phenomena,
and more conclusive experimental evidence.
\end{remark}

\section{Experiments}
\label{sec:experiments}

In this section, we experimentally test the predictions of our theory 
on real models and datasets.
Full experimental details are in \Cref{app:experiments}.

\paragraph{Datasets.}
We focus on open-ended question-answer (QA) settings,
since calibration for multiple-choice QA is
already well-studied \citep{kadavath2022language, zhu-etal-2023-calibration},
and a special case of our results.
We evaluate on four datasets:
(1) GSM8K \citep{cobbe2021gsm8k} containing grade-school math word problems,
(2) OpenMathInstruct-2 \citep{toshniwal2024openmathinstruct}
containing primarily\footnote{OpenMathInstruct-2 also contains $\sim$16\% of problems derived from GSM8K. We used OpenMathInstruct-2 as a large set of challenging math problems with a permissible license.} competition-level math problems
synthetically derived from MATH \citep{hendrycks2021measuring},
(3) TriviaQA \citep{joshi-etal-2017-triviaqa} containing trivia questions,
and (4) SimpleQA \citep{wei2024measuring} containing factual questions
selected to be hard for GPT 3.5/4o.
Notably, the space of possible semantic answers is very large in all these settings.

\paragraph{Sampling and Evaluation.}
All of our experiments include $5$-shot examples in the prompt.
We compare three different prompts, designed to elicit different styles
of responses from the model: ``concise'' (answer in a single word/phrase), 
``sentence'' (answer in a complete sentence), and ``chain-of-thought (CoT)''.
The few-shot examples are formatted in the desired style
(e.g. for the ``sentence'' type, the few-shot examples have complete sentence answers).
For each question evaluated, we construct the appropriate $5$-shot prompt,
sample $M=50$ responses from the LLM at temperature $1$,
and then apply the semantic collapsing function (described below) to each response.
To measure calibration error, we use the SmoothECE metric\footnote{We use a particular bandwidth choice; see \Cref{app:experiments} for details.} \citep{blasiok2024smooth}.

\paragraph{Semantic Collapsing.}
We implement the semantic collapsing function
differently depending on the dataset type  and response type.
Briefly, for math settings with ``concise'' and ``chain-of-thought'' prompt types,
we just extract the final answer from the generated string using regex matching,
and then perform basic string normalization. For other settings, we use 
a strong LLM (Qwen3-14B-Instruct) to extract and cluster canonical answers
from long-form generations.
See \Cref{app:experiments} for more details.

\subsection{Experimental Results}
\label{sec:experimental_results}
We evaluate semantic calibration of Qwen, Gemini, Mistral, and Llama-family models,
of varying sizes from 0.5B to 72B,
for base and instruct variants, using each of the 3 response styles,
on all 4 question-answer datasets.
This yields over 650 
evaluation experiments,
which we compile into \Cref{fig:ece-grid}
by overlaying their reliability diagrams.
The box-plots in the bottom row of \Cref{fig:ece-grid} show the
distribution of calibration errors in aggregate for each dataset and configuration.
We will use this condensed figure to discuss our experimental predictions.
The full list of models is in \Cref{app:list-of-llms}
and disaggregated results are 
reported
in \Cref{app:encyclopedia}.

\textbf{Prediction 1: Semantic calibration emerges from standard pretraining.}
Our theory predicts that base models, in non-CoT settings, should be semantically calibrated.
The top row of \Cref{fig:ece-grid} shows reliability diagrams for all such models we evaluated
(configurations \textcolor{AppleBlue5}{base-concise} and \textcolor{AppleGreen5}{base-sentence}),
and we observe nearly all of these experiments are well-calibrated.
Notably, semantic calibration does not depend significantly on model size for base models:
even small models ($\leq$~1B) are remarkably calibrated; 
see \Cref{app:additional_experimental_results} for a more in-depth look at this aspect.
Models are also well-calibrated regardless of the response style
(``sentence'' vs.\ ``concise''),
supporting our theory that semantic calibration depends not on
the specific phrasing of the answer,
but rather on whether the model ``knows''
its semantic class distribution before starting to generate.

\textbf{Prediction 2: Post-training can break calibration.}
The middle row of \Cref{fig:ece-grid} includes reliability diagrams for instruct post-trained models,
for all three response types.
Many of these settings are miscalibrated, typically overconfident (i.e.\ a curve below the diagonal), as expected from a reward-maximizing RL objective.
\Cref{fig:sft_dpo} takes a closer look at the effect of different types of
instruction-tuning on calibration.
We compare three models from the same lineage: 
a base model (Mistral-7B-v0.1),
a version of it post-trained via instruction supervised finetuning (SFT, zephyr-7b-sft-full) \citep{tunstall2023zephyr},
and a version post-trained via both SFT and Direct Preference Optimization (DPO, zephyr-7b-dpo-full) \citep{rafailov2024directpreferenceoptimizationlanguage}.
The DPO model (not trained with a proper loss) is significantly miscalibrated,
while the SFT-only model and the base model 
(both trained with proper losses) are better calibrated.

\textbf{Prediction 3: CoT reasoning can break calibration.}
The middle row of \Cref{fig:ece-grid} shows CoT with both \textcolor{ApplePurple5}{base} and \textcolor{AppleYellow5}{instruct} models,
which are poorly calibrated in the math settings (GSM8K and OpenMathInstruct).
\textcolor{ApplePurple5}{Base-cot} responses are underconfident
(above the diagonal), 
while \textcolor{AppleYellow5}{instruct-cot} are underconfident for GSM8K, but overconfident for OpenMathInstruct, 
see last row of \Cref{fig:scaling_laws}.
Notably, this miscalibration is not inherent to math:
base models are calibrated
when asked to provide the answer immediately
(\textcolor{AppleBlue5}{base-concise} and \textcolor{AppleGreen5}{base-sentence}),
but become miscalibrated when allowed to reason (\textcolor{ApplePurple5}{base-cot}). 

\begin{remark}[Underconfidence of CoT]
The systematic underconfidence (rather than overconfidece)
of \textcolor{ApplePurple5}{base-cot} models
may seem surprising.
It is important to recall that our definitions of semantic confidence and accuracy
involve plurality vote.
For say GSM8K with CoT, the underconfidence manifests as follows:
when we sample multiple chain-of-thoughts for a given question,
the \emph{plurality answer} is almost always correct, but it is a weak plurality.
Thus the semantic accuracy is nearly 100\%, since the argmax answer is almost always right,
but the semantic confidence is significantly less than 100\%.
\end{remark}

\begin{figure}[t]
\begin{tabular}{@{}p{.66\textwidth}p{.32\textwidth}@{}}
  \raisebox{-1.03\height}{\includegraphics[width=\linewidth]{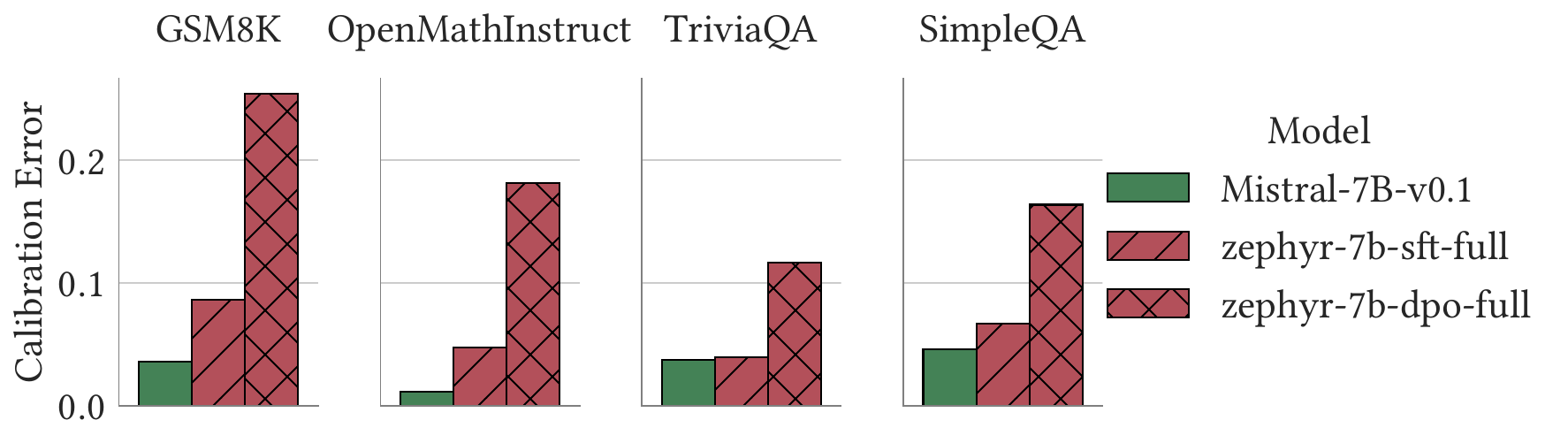}}
  &
  \raisebox{-\height}{\includegraphics[width=\linewidth]{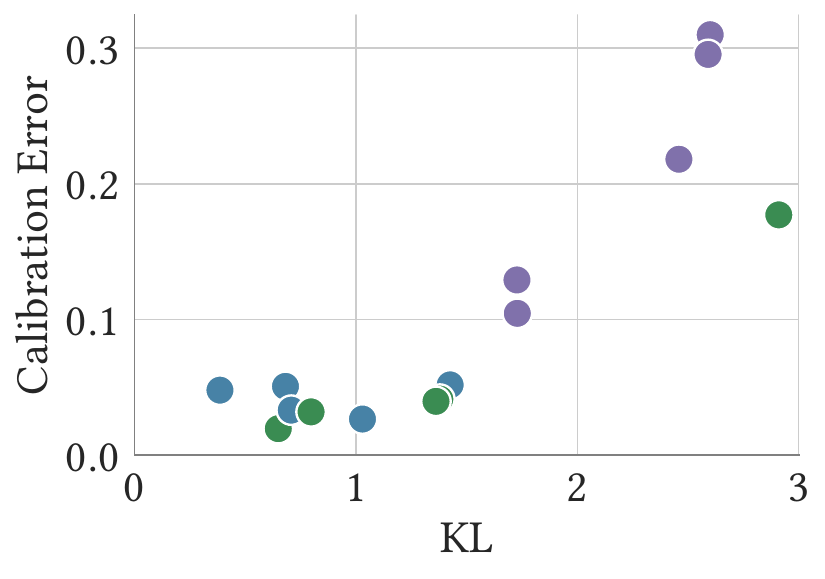}}
  \\
    \vspace{-10pt}
    \caption{
    Calibration error for three models based on Mistral-7B-v0.1:
    pretrained-only,
    instruction-supervised-finetuned, 
    and DPO-finetuned.
    Here, ``sentence'' response style, 
    see \Cref{fig:sft_dpo_full} for others.
    }
    \label{fig:sft_dpo}
  &
    \vspace{-10pt}
\caption{Testing \Cref{claim:main_heuristic} across Qwen2.5 models and response styles.
Colors per \Cref{fig:ece-grid}.
    }
    \label{fig:scatterplot}
\end{tabular}
\vspace{-10mm}
\end{figure}

\textbf{Quantitative Learnability Probe.}
Claim \ref{claim:main_heuristic} suggests an explicit experiment
to predict when a base model will be $B$-confidence-calibrated for a given choice of $B$:
can the model ``easily learn'' the function
$G: x \mapsto B_x \sharp p_x$ 
mapping a question $x$ to
the distribution over 
the model's own semantic answers for that question?
We can test this by training a small LoRA \citep{hu2022lora} on top of the model,
to directly generate the semantic class distribution $B_x \sharp p_x$ when prompted with the question~$x$.
For example, in CoT settings, this would require the LoRA to ``short-circuit''
the reasoning steps, and immediately generate the final answer that the model would have produced with CoT.
Notably, this does not require the model to produce the \emph{correct} semantic answer,
but just match its own generative distribution.
In \Cref{fig:scatterplot}, we train rank-8 LoRAs on 
Qwen2.5 base models of varying sizes (0.5B, 1.5B, 3B, 7B, 14B),
for all three response types on GSM8K.
We then compare each LoRA's KL gap to optimality (x-axis) to the underlying model's calibration error (y-axis).
The correlation agrees with our theory:
models which can easily predict their own semantic class distribution (low KL gap) 
are also well-calibrated.
Full details in \Cref{app:lora}.

\begin{figure}[t!]
    \centering
    \includegraphics[width=\linewidth]{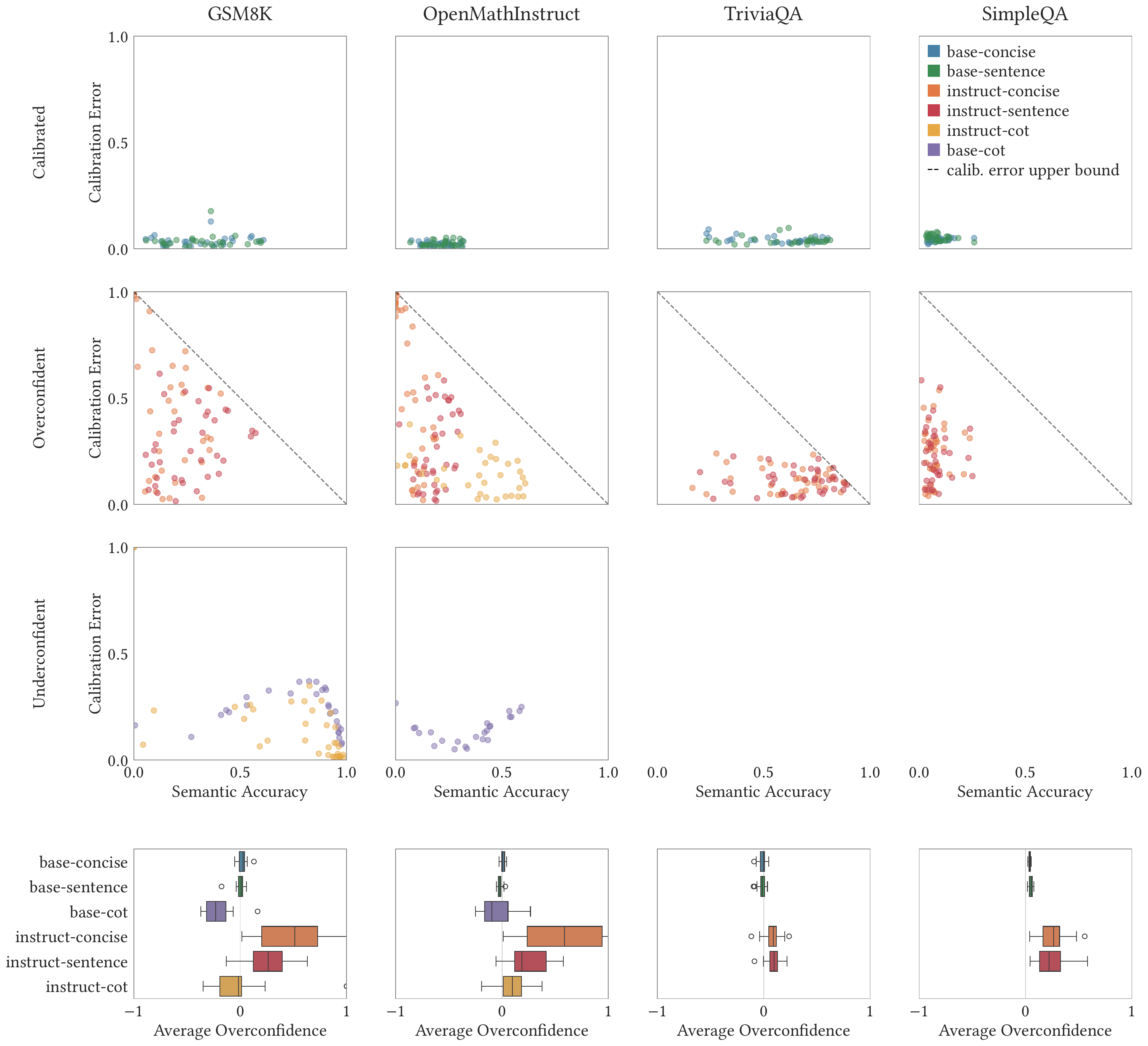}
    \caption{
    {\bf Effect of Scale:} We plot Calibration Error vs.\ Semantic Accuracy
    for all models in \Cref{fig:ece-grid}; 
    each dot represents a separate model.
    {\bf First row (predicted calibrated):} 
    In the settings our theory applies,
    we see no correlation between
    the model capability (semantic accuracy) and the calibration error.
    {\bf Second row (overconfident):}
    Configurations which we empirically observed to be mostly overconfident.
    The dashed line illustrates the upper bound on the calibration error w.r.t.\ the accuracy, for a maximally overconfident predictor.
    We see little correlation between semantic accuracy and the calibration error beyond what is dictated by the upper bound.
    {\bf Third row (underconfident):}
    We see little correlation between calibration and accuracy,
    except near the extreme when models approach perfect accuracy.
    TriviaQA and SimpleQA plots are empty because
    there are no underconfident configurations.
    {\bf Fourth row:}
    The distribution of \emph{average overconfidence}
    across models, 
    for each configuration;
    positive/negative values indicate over--/underconfidence.
    }
    \vspace{-2mm}
    \label{fig:scaling_laws}
\end{figure}

\textbf{Model Scaling Effects.}
Here, we aim to explore the effect of model scaling (parameter count, compute, data) on calibration.
Since information about training details are most often not publicly available, we use the model capability (measured with accuracy) as a proxy variable for model scale.
In \Cref{fig:scaling_laws}, we plot calibration error (smECE) vs.\ semantic accuracy.
For base models without chain-of-thought (first row), we see no correlation
between model capability (semantic accuracy) and calibration error.
This is consistent with our theoretical predictions,
which have no explicit dependency on model scale or capability.
It is worth noting that prior works have observed that calibration of base models can improve with model scale for other notion of calibration: 
next-token prediction in multiple-choice question setups \citep{kadavath2022language,zhu-etal-2023-calibration,plaut2025probabilities}.
We do not find such improvements for semantic calibration of \emph{base models}.

For instruct models and base models with chain-of-thought, 
we empirically observe that some configurations are overconfident, while other underconfident, 
and we divide those configurations into separate rows in \Cref{fig:scaling_laws}.
The dashed line illustrates the upper bound on the calibration error w.r.t.\ the accuracy for an overconfident configurations, 
which is dictated by the behavior of a maximally-overconfident predictor that puts its entire probability mass on a single choice.
For the overconfident configurations, 
we see little correlation between calibration error and accuracy beyond what is dictated by the upper bound.
For the underconfident configurations, 
we see also little correlation overall,
except for in the high-accuracy regime:
calibration error tends to decrease
when models approach perfect semantic accuracy\footnote{
Notably, this is not mathematically necessary.
Since semantic accuracy involves only the \emph{argmax}-probability class,
it is possible for a predictor to be perfectly semantically-accurate
while having high calibration error.}.
However, it is not clear whether this is a robust phenomenon.

\subsection{Discussion: Calibration in LLMs vs other deep networks}
One may wonder why the state of calibration in LLMs seems significantly
different from calibration in non-LLM deep networks (e.g. image classifiers).
Specifically, deep network classifiers are sometimes severely overconfident,
and sometimes well-calibrated,
depending on the specific network (e.g. \citet{guo2017calibration} vs. \citet{minderer2021revisiting}).
On the other hand, all base LLMs we tested were well-calibrated
(in non-CoT settings) --- there was no significant calibration difference between models.
This difference between LLMs and classifiers is due to
differences in training practices:
when training LLMs, practitioners monitor the test/validation loss closely,
and stop training before the test loss overfits (increases).
On the other hand, when training classifiers,
practitioners care about the test \emph{classification error},
and often continue training even as test \emph{loss} increases.
Most trained LLMs are therefore locally-loss-optimal w.r.t. test loss (and thus calibrated),
but trained classifiers might have high test loss (and thus be miscalibrated).
This perspective on the calibration of deep networks was articulated in Section 1.1 of \citet{blasiok2023when}.

\section{Conclusion}
We find that base LLMs,
despite being trained with a \emph{token-level syntactic} objective,
are remarkably calibrated 
with respect to the \emph{sequence-level semantics} of their generations.
Our central contribution is a principled mechanism behind this emergence,
building on recent theoretical connections between calibration and loss-optimality \citep{blasiok2023when,blasiok2024loss}.
This theory provides a unified lens through which to understand
the nuanced calibration behavior of models in practice,
distinguishing settings which are calibrated from
those which are not.
More generally, our work can be seen as a step towards 
understanding the structure of LLMs' output distribution:
$B$-calibration is one formal way of quantifying how close the LLM's
distribution is to the ground-truth pretraining distribution.

For the interested reader, 
we have an extended discussion and technical remarks
in \Cref{app:discuss}.

\subsection{Limitations}
\label{app:limits}

{\bf Types of Calibration.} 
One limitation of our paper is that we focus on a very specific
type of calibration, which is essentially a sampling-based notion
($B$-confidence-calibration).
It is possible that other types of calibration (e.g. verbalized calibration \citep{tian-etal-2023-just,mielke-etal-2022-reducing})
also emerge for certain types of LLM training; we consider this possibility interesting
but out-of-scope for the current work.

{\bf Practical Implications.} Our work is primarily scientifically motivated, 
and so we do not fully explore practical considerations or implications.
For example, we do not consider the computational efficiency of our confidence measurements.
This is a limitation to using such measures in practice,
since computing semantic confidence requires sampling an LLM
multiple times for the same question.
We consider translating our scientific results into
real-world improvements to be an important direction for future work.

{\bf Datasets.} Although we evaluate on a variety of different models,
we only evaluate on 4 selected datasets.
We chose these datasets to cover a diversity of domains and problem difficulties,
from questions about world-knowledge to mathematical reasoning problems.
Further, we chose datasets with \emph{open-ended} answers,
since calibration of multiple-choice datasets is already extensively studied
\citep{kadavath2022language, zhu-etal-2023-calibration}.
Although we do not expect our results to depend significantly
on the choice of dataset, it is possible that certain other datasets
have different calibration behavior; this is a limitation of our experiments.

\begin{remark}
Notably, there are some datasets which
we would expect to behave differently,
such as TruthfulQA \citep{lin2022truthfulqa},
which is a dataset containing common human misconceptions.
This dataset fails to satisfy the ``in-distribution'' requirement 
of our results (e.g. \Cref{corr:main_heuristic_semantic} and Footnote~\ref{footnote:multical}),
and so it is consistent with our theory for models to be miscalibrated.
\end{remark}

{\bf Theory Formalism.}
There remain several steps in our conjectured mechanism (Fig~\ref{fig:mechanism})
lacking formal definitions and proofs.
It is an open question to formalize these in meaningful and tractable ways.

\subsubsection*{Acknowledgments}
We are grateful for discussion and feedback from
Parikshit Gopalan,
Eran Malach,
Aravind Gollakota,
Omid Saremi,
Madhu Advani,
Etai Littwin,
Josh Susskind,
and
Russ Webb.

\applefootnote{ \textcolor{textgray}{\sffamily Apple and the Apple logo are trademarks of Apple Inc., registered in the U.S.~and other countries and regions.}}

\bibliography{refs}

@inproceedings{kuhn2023semantic,
  title={Semantic uncertainty: Linguistic invariances for uncertainty estimation in natural language generation},
  author={Kuhn, Lorenz and Gal, Yarin and Farquhar, Sebastian},
  booktitle={International Conference on Learning Representations},
  year={2023}
}

@inproceedings{blasiok2023when,
title={When Does Optimizing a Proper Loss Yield Calibration?},
author={Jaros{\l}aw B{\l}asiok and Parikshit Gopalan and Lunjia Hu and Preetum Nakkiran},
booktitle={Advances in Neural Information Processing Systems},
year={2023},
}

@book{cover1999elements,
  title={Elements of information theory},
  author={Cover, Thomas M and Thomas, Joy A},
  year={1999},
  publisher={John Wiley \& Sons}
}

@article{hendrycks2021measuring,
  title={Measuring mathematical problem solving with the math dataset},
  author={Hendrycks, Dan and Burns, Collin and Kadavath, Saurav and Arora, Akul and Basart, Steven and Tang, Eric and Song, Dawn and Steinhardt, Jacob},
  journal={arXiv preprint arXiv:2103.03874},
  year={2021}
}

@inproceedings{lamb2025semantic,
  title={Semantic-Level Confidence Calibration of Language Models via Temperature Scaling},
  author={Lamb, Tom A and Ivanova, Desi R and Torr, Philip and Rudner, Tim GJ},
  booktitle={ICLR Workshop: Quantify Uncertainty and Hallucination in Foundation Models: The Next Frontier in Reliable AI},
  year={2025}
}

@inproceedings{desai2020calibration,
  title={Calibration of Pre-trained Transformers},
  author={Desai, Shrey and Durrett, Greg},
  booktitle={Empirical Methods in Natural Language Processing},
  pages={295--302},
  year={2020}
}

@inproceedings{
zhai2025normalizing,
title={Normalizing Flows are Capable Generative Models},
author={Shuangfei Zhai and Ruixiang Zhang and Preetum Nakkiran and David Berthelot and Jiatao Gu and Huangjie Zheng and Tianrong Chen and Miguel {\'A}ngel Bautista and Navdeep Jaitly and Joshua M. Susskind},
booktitle={International Conference on Machine Learning},
year={2025}
}

@inproceedings{
blasiok2024smooth,
title={Smooth {ECE}: Principled Reliability Diagrams via Kernel Smoothing},
author={Jaros{\l}aw B{\l}asiok and Preetum Nakkiran},
booktitle={International Conference on Learning Representations},
year={2024}
}

@inproceedings{hebert2018multicalibration,
  title={Multicalibration: Calibration for the (computationally-identifiable) masses},
  author={H{\'e}bert-Johnson, Ursula and Kim, Michael and Reingold, Omer and Rothblum, Guy},
  booktitle={International Conference on Machine Learning},
  pages={1939--1948},
  year={2018},
  organization={PMLR}
}

@inproceedings{lin2022truthfulqa,
  title={TruthfulQA: Measuring How Models Mimic Human Falsehoods},
  author={Lin, Stephanie and Hilton, Jacob and Evans, Owain},
  booktitle={Association for Computational Linguistics (Volume 1: Long Papers)},
  pages={3214--3252},
  year={2022}
}

@article{vashurin-etal-2025-benchmarking,
    title = "Benchmarking Uncertainty Quantification Methods for Large Language Models with {LM}-Polygraph",
    author = "Vashurin, Roman  and
      Fadeeva, Ekaterina  and
      Vazhentsev, Artem  and
      Rvanova, Lyudmila  and
      Vasilev, Daniil  and
      Tsvigun, Akim  and
      Petrakov, Sergey  and
      Xing, Rui  and
      Sadallah, Abdelrahman  and
      Grishchenkov, Kirill  and
      Panchenko, Alexander  and
      Baldwin, Timothy  and
      Nakov, Preslav  and
      Panov, Maxim  and
      Shelmanov, Artem",
    journal = "Transactions of the Association for Computational Linguistics",
    volume = "13",
    year = "2025",
    doi = "10.1162/tacl_a_00737",
    pages = "220--248"
}

@inproceedings{gopalan2024computationally,
  title={On computationally efficient multi-class calibration},
  author={Gopalan, Parikshit and Hu, Lunjia and Rothblum, Guy N},
  booktitle={Conference on Learning Theory},
  pages={1983--2026},
  year={2024},
  organization={PMLR}
}

@inproceedings{blasiok2024loss,
  title={Loss minimization yields multicalibration for large neural networks},
  author={B{\l}asiok, Jaros{\l}aw and Gopalan, Parikshit and Hu, Lunjia and Kalai, Adam Tauman and Nakkiran, Preetum},
  booktitle={Innovations in Theoretical Computer Science Conference},
  volume={287},
  pages={17--1},
  year={2024},
}

@article{beck2003mirror,
  title={Mirror descent and nonlinear projected subgradient methods for convex optimization},
  author={Beck, Amir and Teboulle, Marc},
  journal={Operations Research Letters},
  volume={31},
  number={3},
  pages={167--175},
  year={2003},
  publisher={Elsevier}
}

@inproceedings{blasiok2023unifying,
  title={A unifying theory of distance from calibration},
  author={B{\l}asiok, Jaros{\l}aw and Gopalan, Parikshit and Hu, Lunjia and Nakkiran, Preetum},
  booktitle={ACM Symposium on Theory of Computing},
  pages={1727--1740},
  year={2023},
  organization={ACM}
}

@inproceedings{guo2017calibration,
  title={On calibration of modern neural networks},
  author={Guo, Chuan and Pleiss, Geoff and Sun, Yu and Weinberger, Kilian Q},
  booktitle={International Conference on Machine Learning},
  pages={1321--1330},
  year={2017},
  organization={PMLR}
}

@article{nesterov2005smooth,
  title={Smooth minimization of non-smooth functions},
  author={Nesterov, Yurii},
  journal={Mathematical Programming},
  volume={103},
  number={1},
  pages={127--152},
  year={2005},
  publisher={Springer}
}

@article{
wei2022emergent,
title={Emergent Abilities of Large Language Models},
author={Jason Wei and Yi Tay and Rishi Bommasani and Colin Raffel and Barret Zoph and Sebastian Borgeaud and Dani Yogatama and Maarten Bosma and Denny Zhou and Donald Metzler and Ed H. Chi and Tatsunori Hashimoto and Oriol Vinyals and Percy Liang and Jeff Dean and William Fedus},
journal={Transactions on Machine Learning Research},
issn={2835-8856},
year={2022},
}

@inproceedings{yin-etal-2023-large,
    title = "Do Large Language Models Know What They Don{'}t Know?",
    author = "Yin, Zhangyue  and
      Sun, Qiushi  and
      Guo, Qipeng  and
      Wu, Jiawen  and
      Qiu, Xipeng  and
      Huang, Xuanjing",
    booktitle = "Findings of the Association for Computational Linguistics: ACL 2023",
    year = "2023",
    pages = "8653--8665",
}

@article{krakauer2025large,
  title={Large Language Models and Emergence: A Complex Systems Perspective},
  author={Krakauer, David C and Krakauer, John W and Mitchell, Melanie},
  journal={arXiv preprint arXiv:2506.11135},
  year={2025}
}

@article{farquhar2024detecting,
  title={Detecting hallucinations in large language models using semantic entropy},
  author={Farquhar, Sebastian and Kossen, Jannik and Kuhn, Lorenz and Gal, Yarin},
  journal={Nature},
  volume={630},
  number={8017},
  pages={625--630},
  year={2024},
  publisher={Nature Publishing Group UK London}
}

@inproceedings{zhu-etal-2023-calibration,
    title = "On the Calibration of Large Language Models and Alignment",
    author = "Zhu, Chiwei  and
      Xu, Benfeng  and
      Wang, Quan  and
      Zhang, Yongdong  and
      Mao, Zhendong",
    booktitle = "Findings of the Association for Computational Linguistics: EMNLP",
    year = "2023",
    doi = "10.18653/v1/2023.findings-emnlp.654",
    pages = "9778--9795"
}

@inproceedings{zhang-etal-2024-study,
    title = "A Study on the Calibration of In-context Learning",
    author = "Zhang, Hanlin  and
      Zhang, YiFan  and
      Yu, Yaodong  and
      Madeka, Dhruv  and
      Foster, Dean  and
      Xing, Eric  and
      Lakkaraju, Himabindu  and
      Kakade, Sham",
    booktitle = "Conference of the North American Chapter of the Association for Computational Linguistics: Human Language Technologies (Volume 1: Long Papers)",
    year = "2024",
    doi = "10.18653/v1/2024.naacl-long.340",
    pages = "6118--6136",
}

@inproceedings{wang-etal-2025-towards-objective,
    title = "Towards Objective Fine-tuning: How {LLM}s' Prior Knowledge Causes Potential Poor Calibration?",
    author = "Wang, Ziming  and
      Shi, Zeyu  and
      Zhou, Haoyi  and
      Gao, Shiqi  and
      Sun, Qingyun  and
      Li, Jianxin",
    booktitle = "Association for Computational Linguistics (Volume 1: Long Papers)",
    year = "2025",
    doi = "10.18653/v1/2025.acl-long.722",
    pages = "14830--14853",
    ISBN = "979-8-89176-251-0"
}

@inproceedings{xiao2025restoring,
  title={Restoring calibration for aligned large language models: A calibration-aware fine-tuning approach},
  author={Xiao, Jiancong and Hou, Bojian and Wang, Zhanliang and Jin, Ruochen and Long, Qi and Su, Weijie J and Shen, Li},
  booktitle={International Conference on Machine Learning},
  year={2025}
}

@inproceedings{leng2024taming,
  title={Taming overconfidence in llms: Reward calibration in rlhf},
  author={Leng, Jixuan and Huang, Chengsong and Zhu, Banghua and Huang, Jiaxin},
  booktitle={International Conference on Learning Representations},
  year={2025}
}

@article{openai2023gpt,
  title={Gpt-4 technical report},
  author={OpenAI},
  journal={arXiv preprint arXiv:2303.08774},
  year={2023}
}

@inproceedings{
wang2023selfconsistency,
title={Self-Consistency Improves Chain of Thought Reasoning in Language Models},
author={Xuezhi Wang and Jason Wei and Dale Schuurmans and Quoc V Le and Ed H. Chi and Sharan Narang and Aakanksha Chowdhery and Denny Zhou},
booktitle={International Conference on Learning Representations },
year={2023}
}

@article{savage1971elicitation,
  title={Elicitation of personal probabilities and expectations},
  author={Savage, Leonard J},
  journal={Journal of the American Statistical Association},
  volume={66},
  number={336},
  pages={783--801},
  year={1971},
  publisher={Taylor \& Francis}
}

@article{schervish1989general,
  title={A general method for comparing probability assessors},
  author={Schervish, Mark J},
  journal={The Annals of Statistics},
  volume={17},
  number={4},
  pages={1856--1879},
  year={1989},
  publisher={Institute of Mathematical Statistics}
}

@article{wolf2019huggingface,
  title={Huggingface's transformers: State-of-the-art natural language processing},
  author={Wolf, Thomas and Debut, Lysandre and Sanh, Victor and Chaumond, Julien and Delangue, Clement and Moi, Anthony and Cistac, Pierric and Rault, Tim and Louf, R{\'e}mi and Funtowicz, Morgan and others},
  journal={arXiv preprint arXiv:1910.03771},
  year={2019}
}

@misc{mistral2024small31,
  author = {{Mistral AI Team}},
  title = {Mistral Small 3.1: Efficient, Powerful, and Affordable},
  howpublished = {\url{https://mistral.ai/news/mistral-small-3-1}},
  year = {2024},
  month = {December},
  day = {18},
  note = {Accessed: 2025-09-22}
}

@misc{mistral2023mixtral,
  author = {{Mistral AI Team}},
  title = {Mixtral of experts},
  howpublished = {\url{https://mistral.ai/news/mixtral-of-experts}},
  year = {2023},
  month = {December},
  day = {11},
  note = {Accessed: 2025-09-22}
}

@article{
lin2022teaching,
title={Teaching Models to Express Their Uncertainty in Words},
author={Stephanie Lin and Jacob Hilton and Owain Evans},
journal={Transactions on Machine Learning Research},
issn={2835-8856},
year={2022}
}

@inproceedings{ulmer-etal-2024-calibrating,
    title = "Calibrating Large Language Models Using Their Generations Only",
    author = "Ulmer, Dennis  and
      Gubri, Martin  and
      Lee, Hwaran  and
      Yun, Sangdoo  and
      Oh, Seong",
    booktitle = "Association for Computational Linguistics (Volume 1: Long Papers)",
    year = "2024",
    doi = "10.18653/v1/2024.acl-long.824",
    pages = "15440--15459"
    }

@inproceedings{kwon2023efficient,
  title={Efficient memory management for large language model serving with pagedattention},
  author={Kwon, Woosuk and Li, Zhuohan and Zhuang, Siyuan and Sheng, Ying and Zheng, Lianmin and Yu, Cody Hao and Gonzalez, Joseph and Zhang, Hao and Stoica, Ion},
  booktitle={Symposium on Operating Systems Principles},
  pages={611--626},
  year={2023}
}

@techreport{tibshirani2023conformal,
  title={Conformal prediction},
  author={Tibshirani, Ryan},
  institution={UC Berkeley},
  year={2023},
  url={https://www.stat.berkeley.edu/~ryantibs/statlearn-s23/lectures/conformal.pdf}
}

@article{angelopoulos2023conformal,
  title={Conformal prediction: A gentle introduction},
  author={Angelopoulos, Anastasios N and Bates, Stephen and others},
  journal={Foundations and trends{\textregistered} in machine learning},
  volume={16},
  number={4},
  pages={494--591},
  year={2023},
  publisher={Now Publishers, Inc.}
}

@article{lewkowycz2022solving,
  title={Solving quantitative reasoning problems with language models},
  author={Lewkowycz, Aitor and Andreassen, Anders and Dohan, David and Dyer, Ethan and Michalewski, Henryk and Ramasesh, Vinay and Slone, Ambrose and Anil, Cem and Schlag, Imanol and Gutman-Solo, Theo and others},
  journal={Advances in Neural Information Processing Systems},
  volume={35},
  pages={3843--3857},
  year={2022}
}

@article{grattafiori2024llama,
  title={The llama 3 herd of models},
  author={Grattafiori, Aaron and Dubey, Abhimanyu and Jauhri, Abhinav and Pandey, Abhinav and Kadian, Abhishek and Al-Dahle, Ahmad and Letman, Aiesha and Mathur, Akhil and Schelten, Alan and Vaughan, Alex and others},
  journal={arXiv preprint arXiv:2407.21783},
  year={2024}
}

@article{lhoest2021datasets,
  title={Datasets: A community library for natural language processing},
  author={Lhoest, Quentin and Del Moral, Albert Villanova and Jernite, Yacine and Thakur, Abhishek and Von Platen, Patrick and Patil, Suraj and Chaumond, Julien and Drame, Mariama and Plu, Julien and Tunstall, Lewis and others},
  journal={arXiv preprint arXiv:2109.02846},
  year={2021}
}

@article{abdin2024phi,
  title={Phi-4 technical report},
  author={Abdin, Marah and Aneja, Jyoti and Behl, Harkirat and Bubeck, S{\'e}bastien and Eldan, Ronen and Gunasekar, Suriya and Harrison, Michael and Hewett, Russell J and Javaheripi, Mojan and Kauffmann, Piero and others},
  journal={arXiv preprint arXiv:2412.08905},
  year={2024}
}

@article{tunstall2023zephyr,
  title={Zephyr: Direct distillation of lm alignment},
  author={Tunstall, Lewis and Beeching, Edward and Lambert, Nathan and Rajani, Nazneen and Rasul, Kashif and Belkada, Younes and Huang, Shengyi and Von Werra, Leandro and Fourrier, Cl{\'e}mentine and Habib, Nathan and others},
  journal={arXiv preprint arXiv:2310.16944},
  year={2023}
}

@misc{nousresearch2024hermes2mixtraldpo,
  author = {{Nous Research}},
  title = {Nous-Hermes-2-Mixtral-8x7B-DPO},
  howpublished = {\url{https://huggingface.co/NousResearch/Nous-Hermes-2-Mixtral-8x7B-DPO}},
  year = {2024},
  month = {January},
  note = {Hugging Face Model Repository. Accessed: 2025-09-22}
}

@misc{nousresearch2024hermes2mixtralsft,
  author = {{Nous Research}},
  title = {Nous-Hermes-2-Mixtral-8x7B-SFT},
  howpublished = {\url{https://huggingface.co/NousResearch/Nous-Hermes-2-Mixtral-8x7B-SFT}},
  year = {2024},
  month = {January},
  note = {Hugging Face Model Repository. Accessed: 2025-09-22}
}

@misc{mistral2024ministraux,
  author = {{Mistral AI Team}},
  title = {Ministraux: A Structural and Semantic Evolution for Ministral},
  howpublished = {\url{https://mistral.ai/news/ministraux}},
  year = {2024},
  month = {December},
  day = {18},
  note = {Accessed: 2025-09-22}
}

@article{Jiang2023Mistral7,
  title={Mistral 7B},
  author={Albert Qiaochu Jiang and Alexandre Sablayrolles and Arthur Mensch and Chris Bamford and Devendra Singh Chaplot and Diego de Las Casas and Florian Bressand and Gianna Lengyel and Guillaume Lample and Lucile Saulnier and L{\'e}lio Renard Lavaud and Marie-Anne Lachaux and Pierre Stock and Teven Le Scao and Thibaut Lavril and Thomas Wang and Timoth{\'e}e Lacroix and William El Sayed},
  journal={arXiv preprint arXiv:2310.06825},
  year={2023}
}

@article{yang2025qwen3,
  title={Qwen3 technical report},
  author={Yang, An and Li, Anfeng and Yang, Baosong and Zhang, Beichen and Hui, Binyuan and Zheng, Bo and Yu, Bowen and Gao, Chang and Huang, Chengen and Lv, Chenxu and others},
  journal={arXiv preprint arXiv:2505.09388},
  year={2025}
}

@article{yang2024qwen2,
  title={Qwen2.5-math technical report: Toward mathematical expert model via self-improvement},
  author={Yang, An and Zhang, Beichen and Hui, Binyuan and Gao, Bofei and Yu, Bowen and Li, Chengpeng and Liu, Dayiheng and Tu, Jianhong and Zhou, Jingren and Lin, Junyang and others},
  journal={arXiv preprint arXiv:2409.12122},
  year={2024}
}

@article{Yang2024Qwen25TR,
  title={Qwen2.5 Technical Report},
  author={Qwen An Yang and Baosong Yang and Beichen Zhang and Binyuan Hui and Bo Zheng and Bowen Yu and Chengyuan Li and Dayiheng Liu and Fei Huang and Guanting Dong and Haoran Wei and Huan Lin and Jian Yang and Jianhong Tu and Jianwei Zhang and Jianxin Yang and Jiaxin Yang and Jingren Zhou and Junyang Lin and Kai Dang and Keming Lu and Keqin Bao and Kexin Yang and Le Yu and Mei Li and Mingfeng Xue and Pei Zhang and Qin Zhu and Rui Men and Runji Lin and Tianhao Li and Tingyu Xia and Xingzhang Ren and Xuancheng Ren and Yang Fan and Yang Su and Yi-Chao Zhang and Yunyang Wan and Yuqi Liu and Zeyu Cui and Zhenru Zhang and Zihan Qiu and Shanghaoran Quan and Zekun Wang},
  journal={arXiv preprint arXiv:2412.15115},
  year={2024},
}

@article{team2025gemma,
  title={Gemma 3 technical report},
  author={Gemma Team, The and Kamath, Aishwarya and Ferret, Johan and Pathak, Shreya and Vieillard, Nino and Merhej, Ramona and Perrin, Sarah and Matejovicova, Tatiana and Ram{\'e}, Alexandre and Rivi{\`e}re, Morgane and others},
  journal={arXiv preprint arXiv:2503.19786},
  year={2025}
}

@article{team2024gemma,
  title={Gemma 2: Improving open language models at a practical size},
  author={Gemma Team, The and Riviere, Morgane and Pathak, Shreya and Sessa, Pier Giuseppe and Hardin, Cassidy and Bhupatiraju, Surya and Hussenot, L{\'e}onard and Mesnard, Thomas and Shahriari, Bobak and Ram{\'e}, Alexandre and others},
  journal={arXiv preprint arXiv:2408.00118},
  year={2024}
}

@inproceedings{wei2024measuring,
  title={Measuring short-form factuality in large language models},
  author={Wei, Jason and Karina, Nguyen and Chung, Hyung Won and Jiao, Yunxin Joy and Papay, Spencer and Glaese, Amelia and Schulman, John and Fedus, William},
  booktitle={Advances in Neural Information Processing Systems},
  year={2024}
}

@inproceedings{joshi-etal-2017-triviaqa,
    title = "{T}rivia{QA}: A Large Scale Distantly Supervised Challenge Dataset for Reading Comprehension",
    author = "Joshi, Mandar  and
      Choi, Eunsol  and
      Weld, Daniel  and
      Zettlemoyer, Luke",
    booktitle = "Association for Computational Linguistics (Volume 1: Long Papers)",
    year = "2017",
    doi = "10.18653/v1/P17-1147",
    pages = "1601--1611"
}

@inproceedings{toshniwal2024openmathinstruct,
  title={Openmathinstruct-2: Accelerating ai for math with massive open-source instruction data},
  author={Toshniwal, Shubham and Du, Wei and Moshkov, Ivan and Kisacanin, Branislav and Ayrapetyan, Alexan and Gitman, Igor},
  booktitle={International Conference on Learning Representations},
  year={2025}
}

@article{cobbe2021gsm8k,
  title={Training Verifiers to Solve Math Word Problems},
  author={Cobbe, Karl and Kosaraju, Vineet and Bavarian, Mohammad and Chen, Mark and Jun, Heewoo and Kaiser, Lukasz and Plappert, Matthias and Tworek, Jerry and Hilton, Jacob and Nakano, Reiichiro and Hesse, Christopher and Schulman, John},
  journal={arXiv preprint arXiv:2110.14168},
  year={2021}
}

@inproceedings{li2025calibrating,
  title={Calibrating LLMs with information-theoretic evidential deep learning},
  author={Li, Yawei and R{\"u}gamer, David and Bischl, Bernd and Rezaei, Mina},
  booktitle={International Conference on Learning Representations},
  year={2025}
}

@inproceedings{yang2023bayesian,
  title={Bayesian low-rank adaptation for large language models},
  author={Yang, Adam X and Robeyns, Maxime and Wang, Xi and Aitchison, Laurence},
  booktitle={International Conference on Learning Representations},
  year={2024}
}

@article{gneiting2007strictly,
  title={Strictly proper scoring rules, prediction, and estimation},
  author={Gneiting, Tilmann and Raftery, Adrian E},
  journal={Journal of the American statistical Association},
  volume={102},
  number={477},
  pages={359--378},
  year={2007},
  publisher={Taylor \& Francis}
}

@InProceedings{pmlr-v235-shen24c,
  title = 	 {Thermometer: Towards Universal Calibration for Large Language Models},
  author =       {Shen, Maohao and Das, Subhro and Greenewald, Kristjan and Sattigeri, Prasanna and Wornell, Gregory W. and Ghosh, Soumya},
  booktitle = 	 {International Conference on Machine Learning},
  pages = 	 {44687--44711},
  year = 	 {2024}
}

@inproceedings{xie-etal-2024-calibrating,
    title = "Calibrating Language Models with Adaptive Temperature Scaling",
    author = "Xie, Johnathan  and
      Chen, Annie S  and
      Lee, Yoonho  and
      Mitchell, Eric  and
      Finn, Chelsea",
    booktitle = "Empirical Methods in Natural Language Processing",
    year = "2024",
    doi = "10.18653/v1/2024.emnlp-main.1007",
    pages = "18128--18138",
}

@inproceedings{jiang2023calibrating,
  title={Calibrating language models via augmented prompt ensembles},
  author={Jiang, Mingjian and Ruan, Yangjun and Huang, Sicong and Liao, Saifei and Pitis, Silviu and Grosse, Roger Baker and Ba, Jimmy},
  booktitle={International Conference on Machine Learning Workshop on Deployable Generative AI},
  year={2023}
}

@inproceedings{hou2023decomposing,
  title={Decomposing uncertainty for large language models through input clarification ensembling},
  author={Hou, Bairu and Liu, Yujian and Qian, Kaizhi and Andreas, Jacob and Chang, Shiyu and Zhang, Yang},
  booktitle={International Conference on Machine Learning},
  year={2024}
}

@article{mielke-etal-2022-reducing,
    title = "Reducing Conversational Agents' Overconfidence Through Linguistic Calibration",
    author = "Mielke, Sabrina J.  and
      Szlam, Arthur  and
      Dinan, Emily  and
      Boureau, Y-Lan",
    journal = "Transactions of the Association for Computational Linguistics",
    volume = "10",
    year = "2022",
    doi = "10.1162/tacl_a_00494",
    pages = "857--872",
    }

@inproceedings{tian-etal-2023-just,
    title = "Just Ask for Calibration: Strategies for Eliciting Calibrated Confidence Scores from Language Models Fine-Tuned with Human Feedback",
    author = "Tian, Katherine  and
      Mitchell, Eric  and
      Zhou, Allan  and
      Sharma, Archit  and
      Rafailov, Rafael  and
      Yao, Huaxiu  and
      Finn, Chelsea  and
      Manning, Christopher",
    booktitle = "Empirical Methods in Natural Language Processing",
    year = "2023",
    doi = "10.18653/v1/2023.emnlp-main.330",
    pages = "5433--5442",
}

@article{nakkiran2020distributional,
  title={Distributional generalization: A new kind of generalization},
  author={Nakkiran, Preetum and Bansal, Yamini},
  journal={arXiv preprint arXiv:2009.08092},
  year={2020}
}

@article{yoon2025reasoning,
  title={Reasoning models better express their confidence},
  author={Yoon, Dongkeun and Kim, Seungone and Yang, Sohee and Kim, Sunkyoung and Kim, Soyeon and Kim, Yongil and Choi, Eunbi and Kim, Yireun and Seo, Minjoon},
  journal={arXiv preprint arXiv:2505.14489},
  year={2025}
}

@article{mei2025reasoning,
  title={Reasoning about Uncertainty: Do Reasoning Models Know When They Don't Know?},
  author={Mei, Zhiting and Zhang, Christina and Yin, Tenny and Lidard, Justin and Shorinwa, Ola and Majumdar, Anirudha},
  journal={arXiv preprint arXiv:2506.18183},
  year={2025}
}

@article{minderer2021revisiting,
  title={Revisiting the calibration of modern neural networks},
  author={Minderer, Matthias and Djolonga, Josip and Romijnders, Rob and Hubis, Frances and Zhai, Xiaohua and Houlsby, Neil and Tran, Dustin and Lucic, Mario},
  journal={Advances in neural information processing systems},
  volume={34},
  pages={15682--15694},
  year={2021}
}

@inproceedings{kapoor2024large,
  title={Large language models must be taught to know what they don’t know},
  author={Kapoor, Sanyam and Gruver, Nate and Roberts, Manley and Collins, Katie and Pal, Arka and Bhatt, Umang and Weller, Adrian and Dooley, Samuel and Goldblum, Micah and Wilson, Andrew G},
  booktitle={Advances in Neural Information Processing Systems},
  year={2024}
}

@article{kadavath2022language,
  title={Language models (mostly) know what they know},
  author={Kadavath, Saurav and Conerly, Tom and Askell, Amanda and Henighan, Tom and Drain, Dawn and Perez, Ethan and Schiefer, Nicholas and Hatfield-Dodds, Zac and DasSarma, Nova and Tran-Johnson, Eli and others},
  journal={arXiv preprint arXiv:2207.05221},
  year={2022}
}

@inproceedings{hu2022lora,
  title={Lora: Low-rank adaptation of large language models.},
  author={Hu, Edward J and Shen, Yelong and Wallis, Phillip and Allen-Zhu, Zeyuan and Li, Yuanzhi and Wang, Shean and Wang, Lu and Chen, Weizhu and others},
  booktitle={International Conference on Learning Representations},
  year={2022}
}

@misc{vonwerra2022trl,
  author = {Leandro von Werra and Younes Belkada and Lewis Tunstall and Edward Beeching and Tristan Thrush and Nathan Lambert and Shengyi Huang and Kashif Rasul and Quentin Gallouédec},
  title = {TRL: Transformer Reinforcement Learning},
  year = {2020},
  publisher = {GitHub},
  journal = {GitHub repository},
  howpublished = {\url{https://github.com/huggingface/trl}}
}

@article{
plaut2025probabilities,
title={Probabilities of Chat {LLM}s Are Miscalibrated but Still Predict Correctness on Multiple-Choice Q\&A},
author={Benjamin Plaut and Khanh Xuan Nguyen and Tu Trinh},
journal={Transactions on Machine Learning Research},
issn={2835-8856},
year={2025},
}

@misc{band2024linguisticcalibrationlongformgenerations,
      title={Linguistic Calibration of Long-Form Generations}, 
      author={Neil Band and Xuechen Li and Tengyu Ma and Tatsunori Hashimoto},
      year={2024},
      eprint={2404.00474},
      archivePrefix={arXiv},
      primaryClass={cs.LG},
}

@misc{rafailov2024directpreferenceoptimizationlanguage,
      title={Direct Preference Optimization: Your Language Model is Secretly a Reward Model}, 
      author={Rafael Rafailov and Archit Sharma and Eric Mitchell and Stefano Ermon and Christopher D. Manning and Chelsea Finn},
      year={2024},
      eprint={2305.18290},
      archivePrefix={arXiv},
      primaryClass={cs.LG},
}
\bibliographystyle{iclr2026_conference}

\newpage

\appendix
\crefalias{section}{appendix}

\startcontents[appendix]
\printcontents[appendix]{}{1}{\section*{Appendix Contents}}

\newpage
\section{Additional Related Works}

\paragraph{Recalibration Methods.}
A number of prior works study methods to improve the calibration of LLMs, ranging from temperature-scaling at inference-time \citep[e.g.,][]{xie-etal-2024-calibrating,pmlr-v235-shen24c}
to training calibration-specific probes that predict correctness \citep{mielke-etal-2022-reducing} or training with calibration-improving regularization terms \citep{wang-etal-2025-towards-objective}. Other approaches attempt to cluster questions and predict per-cluster accuracy \citep{lin2022teaching,ulmer-etal-2024-calibrating}, or make use of the fact that ensembling models tends to improve calibration \citep{jiang2023calibrating,hou2023decomposing}. Probabilistic approaches (such as Bayesian deep learning, or evidential deep learning) have been found to often yield better calibration \citep[e.g.,][]{li2025calibrating,yang2023bayesian}.

\paragraph{Sampling-based Confidences.}
A number of prior works have proposed sampling-based approaches to defining LLM uncertainty.
Both \citet{wang2023selfconsistency} and \citet{wei2024measuring}
sample multiple answers per-question, and define confidence
as the frequency of the most-common answer.
\citet{wei2024measuring} additionally groups answers together by string-matching,
which allows for some degree of semantic equivalence.
This approach was extended and popularized by
the notion of \emph{semantic entropy} \citep{farquhar2024detecting}.
Semantic entropy clusters sampled answers together
by semantic content, and then measures the empirical entropy of clustered answers.
Recently, \citet{lamb2025semantic} define Empirical Semantic Confidence,
which is essentially an empirical version of our notion of semantic confidence.
Note that one distinguishing aspect of our formalism is,
we parameterize the notion of calibration by the choice of collapsing function $B$.
This allows us to develop somewhat more general theoretical insights,
which are not tied to a fixed notion of semantics.

\paragraph{Factors which Harm LLM Calibration.}
Various factors have been observed in prior work to harm LLM calibration.
It is well-known that RLHF often harms calibration in multiple-choice QA settings \citep{kadavath2022language,openai2023gpt}.
Other RL post-training methods such as DPO have also been observed to harm calibration \citep{leng2024taming,xiao2025restoring}.
Some studies have also found chain-of-thought responses to harm calibration,
agreeing with our results \citep{zhang-etal-2024-study}.
However, we warn that not all of these works use the same notion of confidence and calibration as we do,
and so are not directly comparable.

\section{Extended Discussion and Remarks}
\label{app:discuss}

\subsection{Potential Extensions}

The theoretical framework described here is fairly general, and extends beyond the
setting of confidence-calibration in LLMs.
Briefly, since most of our theory
is stated in the language of \emph{weighted calibration} \citep{gopalan2024computationally},
it applies
to any property that can be written as weighted calibration.
This includes slightly stronger notions of calibration,
such as top-label calibration,
and also includes conformal-prediction type of guarantees (more details in \Cref{subsec:Conformal_Calibration}.
See \citet{gopalan2024computationally} for a number of properties which can be expressed
as weighted calibration, and \Cref{app:conformal} for the connection to conformal prediction.
Our general theoretical results appear in \Cref{app:theory}.

Intuitively, the high-level message of our results is that if a model is trained
with a max-likelihood / log-loss objective, then we should expect it to
satisfy weighted calibration 
for a ``simple'' family of weight functions.
The appropriate notion of simplicity depends on the model architecture;
simple weight functions should roughly correspond to easy-to-learn perturbations
to the model's output distribution.
At this level of generality, we expect some version of our results to apply
even for real-valued density models, such as continuous normalizing flows
(e.g. \citet{zhai2025normalizing}), which are also trained with
the log-likelihood objective. That is, we should expect such normalizing flows
to also exhibit certain (weak) types of calibration.
We believe this is a promising avenue for future work.\looseness=-1

\begin{remark}
This high-level message can be interpreted as
``models should match the true distribution with respect to all
easy-to-learn features.''
Interestingly, a very similar statement was conjectured for
\emph{interpolating} classifiers in \citet{nakkiran2020distributional},
called ``Feature Calibration.''
The exact relation to our present work is unclear.
\end{remark}

\arxiv{

\paragraph{Other choices of $B$}. 
\pnote{I'm working on this para.}

Talk about the Kalai OAI paper on Hallucinations.
$B(x, z)$ that is ``easy''
\pnote{
 if you ask it a question like “When was YOURNAME born?” , it will answer with a date, not like “Strawberry.” This is because it’s very easy for an LLM to be B-calibrated w.r.t. the predicate
B(x, z) := "is question x about a date, and is generation z a date"
And being B-calibrated implies that it should never answer a non-date to a question that’s obviously about a date.
In general, by the above reasoning, we should expect that when LLMs are wrong, they are wrong in “believable” ways (i.e. wrong in ways that are hard-to-distinguish-from-truth) --- and we call such things “hallucinations”
}

For the sake of clarity, we focused our exposition on the special case of semantic calibration. 
However, our theory can be applied more generally,
for other choices of function $B$.
To give one interesting example, consider ...
}

\arxiv{

\paragraph{Question-dependent Confidences vs. Q-and-Answer-dependent confidences}

\pnote{Importance of log loss: loss on tokens --> proper loss on entire sequence.
Fails for MSE loss.
From Tokens to Concepts.}

}

\subsection{Technical Remarks}
\label{subsec:remark_theory_section}
We collect several technical remarks regarding the theory of \Cref{sec:theory}.

\begin{remark}[The Distribution]
One detail of the theory worth discussing further is the role of the
\emph{distribution $\cD$}.
Technically, our theory only implies calibration if the base model
has been trained on the exact same distribution on which calibration is evaluated.
This is obviously not strictly true (e.g. models are not pretrained only on GSM8K).
However, we may imagine modern pretrained models to behave ``as if''
they were trained on the evaluation distributions, since the pretraining distribution
is large and diverse.
Notably, this reasoning requires the evaluation dataset and
prompt choice to be reasonably in-distribution for the pretraining
(see \citet{zhang-etal-2024-study} for settings where the
prompting scheme affects calibration).
We discuss a more formal way to think about the choice of distribution in \Cref{remark:multical} below.
\end{remark}

\begin{remark}[Multicalibration]
\label{remark:multical}
For clarity of exposition, we described the theory as if there is only one
distribution $\cD$ of interest, but in reality, we evaluate calibration across
multiple distributions (TriviaQA, GSM8K, etc),
and we pretrain on yet another distribution.
Moreover, we find that a single model can be
simultaneously calibrated across many evaluation distributions.
We touched upon this issue in Footnote~\ref{footnote:multical},
but there is a theoretically cleaner (though more involved)
way to think about multiple distributions, which we outline now.

Formally, requiring $B$-calibration across multiple distributions simultaneously
can be thought of as a
\emph{multi-calibration} property \citep{hebert2018multicalibration}.
Suppose for example that the pretraining distribution $\cD$
is some mixture of disjoint sub-distributions:
$\cD = \alpha_1D_1 + \alpha_2D_2 + \dots $.
Suppose we are interested in $B$-calibration simultaneously for distributions $D_1$
and $D_2$.
Then, it is possible to show a generalization of \Cref{thm:calibration_equivalences}:
\begin{quote}
\emph{A model is $B$-confidence-calibrated across both $D_1$ and $D_2$
if and only if it is locally-loss-optimal on $\cD$ w.r.t.
an expanded class of perturbations $\cW_B^*$.}
\end{quote}

Informally, the class of perturbations $\cW_B^*$ is essentially
the usual class $\cW_B$ (of \Cref{def:perturbation_classes})
augmented by indicator functions 
$\1\{x \in \cD_1\},~\1\{x \in \cD_2\}$
for membership in each sub-distribution.

We will not get into the technical details,
but using this version of \Cref{thm:calibration_equivalences},
it is possible to carry out the remaining steps
of the argument from \Cref{sec:theory} and \Cref{fig:mechanism}.
Applying the same heuristics, for example, we would conclude:
an LLM will be simultaneously $B$-confidence-calibrated
on distributions $\cD_1, \cD_2$ if
it is easy for the LLM to
(1) estimate its own distribution on $B$-classes
and (2) identify samples as either $x \in \cD_1$ or $x \in \cD_2$.

The second condition is likely to be satisfied in all our experiments,
since all our evaluation datasets are distinct and easy to identify.
Thus, the predictions of our theory remain unchanged,
justifying our choice to avoid discussing multicalibration in the main body.
\end{remark}

\begin{remark}[Full calibration]
At first glance, it may seem that a
minor generalization of our mechanism (\Cref{fig:mechanism})
would also imply \emph{full} $B$-calibration (i.e., canonical calibration of the $B$-induced classifier),
rather than just \emph{confidence}-calibration.
After all, \Cref{thm:calibration_equivalences} formally generalizes to 
arbitrary weight families $\cW$
(see \Cref{thm:bcal_calibration_equivalences}),
including the family corresponding to full $B$-calibration
(defined as $\WBcal$ in \Cref{def:bcal_perturbation_classes}).
However, full $B$-calibration is too strong a property to hold in general\footnote{
For example, when $K$ (the number of $B$-classes) is large, full $B$-calibration
would be computationally intractable to even estimate \citep{gopalan2024computationally}.}.
So, which part of our argument in \Cref{fig:mechanism} breaks for full calibration?
The culprit is the step (B) $\implies$ (C).
The weight family $\WBcal$ relevant for full calibration is, roughly speaking, ``too large''
for this step to hold.

To better understand why the heuristic fails, here is 
more general version of the (B) $\implies$ (C) step in \Cref{fig:mechanism},
which we believe is plausible for arbitrary weight families $\cW$.

\begin{quote}
\begin{claim}[assumption, informal]
\label{claim:w-general}
If a perturbation family
$\cW$ is easy-to-learn for a pretrained LLM, meaning:
for all perturbations $w \in \cW$,
the LLM $p_\theta: \cV^* \to \Delta(\cV^N)$ can be easily LoRA-fine-tuned to
match the distribution of a perturbed-model $G: \cV^* \to \Delta(\cV^N)$,
\[
G: x \mapsto p_x \star w_x \equiv p_\theta( \cdot \mid x) \star w(x, p_x)
\]
then $p_\theta$ will be $\cW$-locally-loss-optimal w.r.t. its pretraining loss.
\end{claim}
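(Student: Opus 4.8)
The plan is \emph{not} to derive this statement from first principles --- it is flagged as an informal assumption precisely because it cannot be --- but rather to exhibit the idealized argument that motivates it, isolate the genuinely non-mathematical hypotheses, and then sketch a rigorous surrogate via the representational analogue of \citet[Theorem 1.2]{blasiok2024loss}. Concretely, I would first fix a perturbation $w \in \cW$ and unpack the ``easy-to-learn'' hypothesis into two quantitative pieces: (i) an \emph{expressivity} piece --- there is a LoRA fine-tuning $\theta_w$ of $\theta$, of bounded rank, whose induced model $p_{\theta_w}$ is $\eps$-close (in expected KL, or in a pointwise multiplicative sense) to the perturbed model $x \mapsto p_x \star w_x$; and (ii) an \emph{optimization-reachability} piece --- that $\theta_w$ lies in the region of parameter space explored/reachable during pretraining. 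Piece (ii) is the step that is, by design, an assumption rather than a theorem: it conflates representational capacity with success of the training procedure.

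Second, I would record the local-optimality hypothesis on the pretrained model: $\theta$ is (approximately) a local minimum of $\theta' \mapsto \E_{(x,y)\sim\cD}[\ell(y, p_{\theta'})]$ over the reachable parameter region. Combined with piece (ii) this gives $\E[\ell(y, p_{\theta_w})] \ge \E[\ell(y, p_\theta)] - \delta$ for some optimization slack $\delta$. I would then transfer this bound from $p_{\theta_w}$ to $p_\theta \star w$ using piece (i): the cross-entropy loss is Lipschitz in the model under a mild regularity condition (e.g.\ the two models assign probabilities multiplicatively within $1 \pm \eps$ on the support of $\cD$, which follows from a small symmetrized-KL gap plus absolute continuity of $\cD$ w.r.t.\ $p_\theta$), yielding $\E[\ell(y, p_x \star w_x)] \ge \E[\ell(y, p_\theta)] - \delta - O(\eps)$. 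This is exactly approximate $\cW$-local-loss-optimality, and it becomes the exact statement in the idealized $\eps,\delta \to 0$ limit. Note that the exact idealization is not actually needed downstream: the robust form of \Cref{thm:calibration_equivalences} (namely \Cref{thm:bcal_calibration_equivalences}) converts approximate local-loss-optimality into approximate $B$-calibration.

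For a genuinely rigorous (rather than heuristic) version, I would instead invoke \citet[Theorem 1.2]{blasiok2024loss} directly: take the LoRA-augmented hypothesis class as the ``slightly larger'' model family, take the base-model-plus-single-perturbation circuits as the ``small'' class representing $\cW$ --- here \Cref{thm:ar-b-cal} is precisely what certifies that each $w \in \WBconf$ is computable by a shallow circuit on top of $p_\theta$, so $\cW$ is representable in the required sense --- and conclude that ERM over the augmented class is $\cW$-locally-loss-optimal. The pretrained model then inherits $\cW$-local-loss-optimality to the extent that it is an approximate ERM over that augmented class, which is once again where the empirical assumption enters.

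The main obstacle is exactly this last point, and it is not a calculational one: there is an inherent circularity between ``$w$ is easy to learn,'' phrased as reachability by the training procedure, and ``$\theta$ is a local optimum of that same procedure,'' so the claim cannot be cleanly divorced from assumptions about optimization dynamics --- which is why the honest formulation is an assumption. A secondary technical obstacle is that log-loss is unbounded, so the loss-transfer step in the second paragraph requires a regularity hypothesis controlling density ratios among $\cD$, $p_\theta$, and $p_\theta \star w$; quantifying the needed closeness $\eps$ in terms of the actual LoRA fine-tuning objective (typically a KL to $p_x \star w_x$, a \emph{different} reference measure than $\cD$) is the one place where real care is needed.
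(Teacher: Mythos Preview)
Your reading is correct: the paper does not prove this claim at all --- it is explicitly labeled ``assumption, informal'' and the paper offers only the intuition (``pretraining should not leave easy wins on the table'') together with a pointer to \citet[Theorem 1.2]{blasiok2024loss} as a partial representational analogue. Your proposal is entirely consistent with this treatment, and in fact goes further than the paper does: you decompose the hypothesis into expressivity and optimization-reachability pieces, identify the latter as the genuinely non-mathematical step, and sketch both an idealized loss-transfer argument and the ERM-over-augmented-class route via \citet{blasiok2024loss}. The paper contains none of this unpacking --- it simply asserts the claim as plausible and moves on --- so your analysis is a strict elaboration of, rather than a departure from, the paper's stance.
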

\end{quote}

In other words, if all perturbations in the family $\cW$
can be ``easily learnt,'' then we should expect the LLM to be loss-optimal w.r.t. $\cW$. 

If we believe \Cref{claim:w-general}, we can see why our mechanism would apply 
to confidence-calibration but not to full-calibration:
For confidence-calibration, the perturbation class $\WBconf$ (\Cref{def:perturbation_classes})
is simple enough to be learnable,
while for full calibration, the corresponding perturbation class
$\WBcal$ (\Cref{def:bcal_perturbation_classes}) is too large to be efficiently learnable from samples.
To gain intuition for this, it helps to directly compare \Cref{def:bcal_perturbation_classes} to \Cref{def:perturbation_classes}.
From this discussion, we can see it is likely possible to extend our results
to certain types of calibration which are weaker than full-calibration,
but stronger than confidence-calibration.
We leave this direction for future work.
\end{remark}

\section{Additional Experimental Results}
\label{app:additional_experimental_results}

Due to their volume, disaggregated reliability diagram results are reported separately in \Cref{app:encyclopedia}.

\begin{figure}[H]
    \centering
    \includegraphics[width=0.55\linewidth]{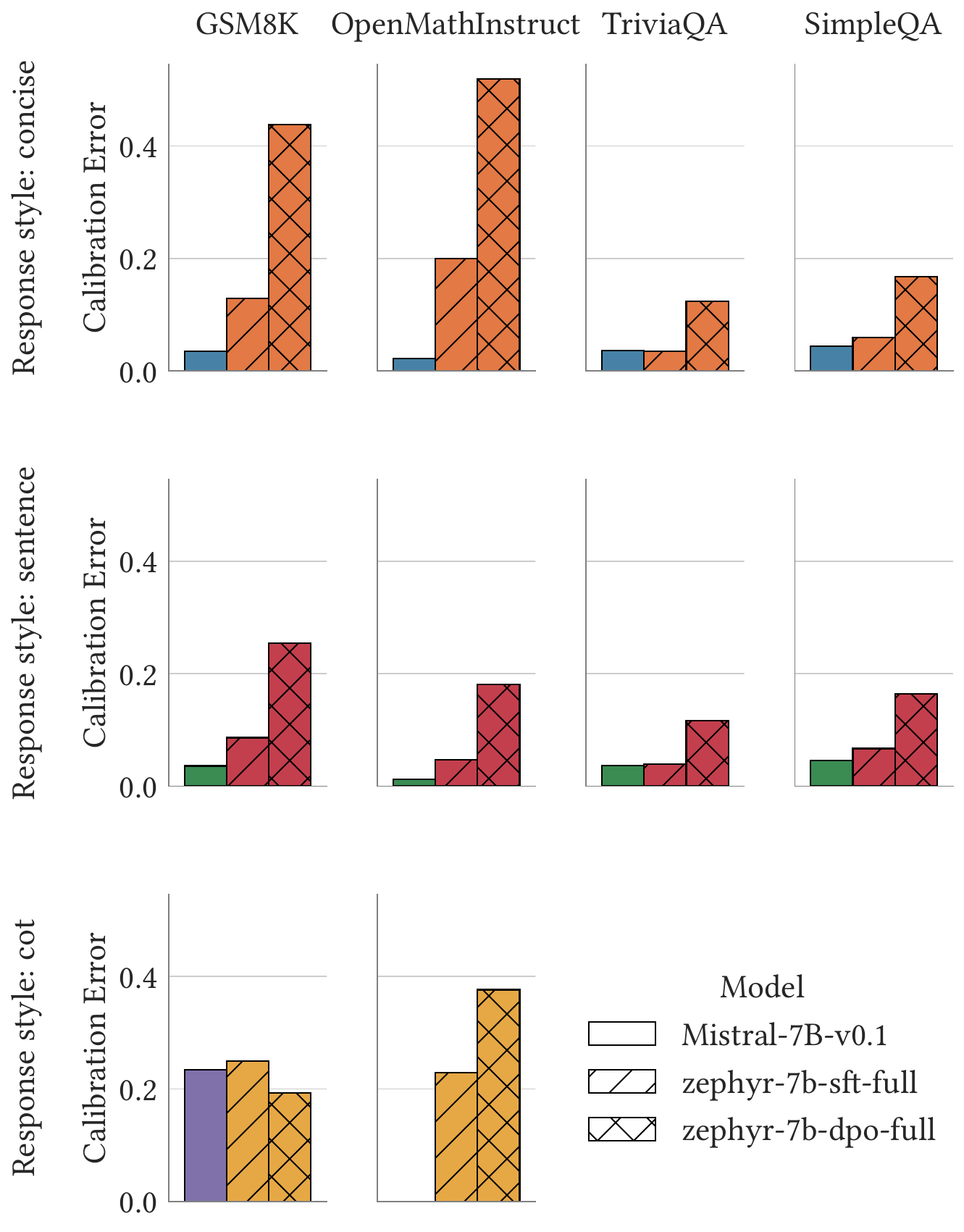}
    \caption{
    Calibration error for three models based on Mistral-7B-
    v0.1: pretrained-only, instruction-SFT model (zephyr-7b-sft-full),
    DPO model (zephyr-7b-dpo-full). 
    We did not evaluate TriviaQA and SimpleQA for the ``cot'' response style.
    The ``cot'' result for Mistral-7B-v0.1 for OpenMathInstruct is missing due to the model not terminating generation within its maximum context length.
    }
    \label{fig:sft_dpo_full}
\end{figure}

\section{Additional Experimental Details}
\label{app:experiments}

\paragraph{Datasets.}
We focus on open-ended question-answer (QA) settings,
since calibration for multiple-choice QA is
already well-studied \citep{kadavath2022language, zhu-etal-2023-calibration}, and a special case of our results.
We evaluate on:
GSM8K \citep{cobbe2021gsm8k},
OpenMathInstruct-2 \citep{toshniwal2024openmathinstruct},
TriviaQA \citep{joshi-etal-2017-triviaqa},
and SimpleQA \citep{wei2024measuring},
from Huggingface datasets \citep{wolf2019huggingface,lhoest2021datasets}.

\paragraph{Models.}
We evaluate on models including
the Qwen, Gemini, Mistral, and Llama family,
of sizes from 0.5B to 72B.
The full list of models we evaluate is in \Cref{app:list-of-llms}.
We use vLLM \citep{kwon2023efficient} for inference.

\paragraph{Prompt format.}
See \Cref{app:prompt-examples} for the exact phrasing used in prompts.
All of our experiments include $5$-shot examples in the prompt.
We use three different prompt types, designed to elicit three different styles
of responses from the model: ``sentence'', ``concise'', and ``chain-of-thought (cot)''. The few-shot examples are formatted in the desired style (e.g. for the ``sentence'' type, the few-shot examples have complete-sentence answers). For instruct models, in addition to formatted few-shot examples, the prompt also includes explicit formatting instructions.
The ``concise'' prompt type encourages the model to respond with just the final answer (a single word, phrase, or number).
The ``sentence'' prompt type asks the model to answer each question in a complete sentence (making it likely to phrase the same semantic answer in different ways, so the $B$-collapsing function is essential for a meaningful notion of semantic calibration).
The ``cot'' prompt type elicits chain-of-thought reasoning from the model; this prompt
type is only used for math datasets.

These prompts are typically successful in eliciting the desired type of responses from the model.
However, in some cases we observed models (especially Qwen models)
produce ``chain-of-thought'' responses
even when prompted to reply in a single word.
To exclude such cases, we exclude any responses for the
``concise'' prompt on math datasets which are too long
(heuristically, more than 15 characters before the first newline).

\paragraph{The semantic collapsing function.}
Recall, the function $B$ is intended to collapse semantically-equivalent
generations into a single class, an idea proposed by \citet{kuhn2023semantic}.
We implement the function $B$ with a two-stage procedure as follows.

The first stage is canonicalization: we extract a short ``canonical form'' answer
from the LLM's response. For ``concise'' and ``cot'' prompt types, this is done via simple
string parsing (for ``cot'', extracting only the final answer).
For the ``sentence'' type, we use a strong LLM (Qwen3-14B-Instruct) prompted
to extract a short-answer from the generation, given the question as context.
The prompts used for canonicalization are in \Cref{app:prompt-examples}:
Prompt~\ref{canonicalize} for non-math settings, and
Prompt~\ref{canonicalize-math} for math settings.
We also normalize strings at this stage, converting to lower-case and stripping spaces,
including a math-specific normalization for domains with LaTeX outputs.
Specifically, we use the MATH string-normalization
from Minerva, given in Listing 1, Appendix D.1 of \citet{lewkowycz2022solving}.

The second stage, used only for non-math settings, is semantic clustering:
we prompt an LLM judge (Qwen3-14B-Instruct) to assess whether two responses to a question are semantically equivalent, and use the output to cluster responses\footnote{This is a slight variation of the two-way entailment method used by \citet{farquhar2024detecting}.}.
This is necessary for non-math settings to handle irrelevant differences in canonical forms
(e.g. ``Seattle, WA'' vs ``Seattle'').
The prompt used for semantic equivalence is Prompt~\ref{prompt:sem-equiv} in \Cref{app:prompt-examples}.
For math settings, the second stage is unnecessary, since the first stage already outputs a
number or symbol that can be directly compared.

\paragraph{Measuring calibration.}
We first produce an LLM-induced semantic classifier, following the experimental procedure described in
\Cref{sec:semantic-calibration} and illustrated in \Cref{fig:B-collapse-intrp}.
For each dataset, we take 10K random evaluation samples
(or the entire dataset for those with fewer than 10K total samples).
For each question, we construct the appropriate $5$-shot prompt,
sample $M=50$ responses from the LLM at temperature $1$,
and then apply the semantic collapsing function (described above) to each response.
The semantic confidence is defined as the empirical frequency of the plurality semantic class,
and the semantic accuracy is the $0/1$ indicator of whether this
plurality class matches the ground-truth's semantic class.
This yields, for each question, a pair of
(semantic-confidence, semantic-accuracy) $\in [0, 1] \times \{0, 1\}$.
We then evaluate the calibration of the resulting classifier
over the entire dataset of questions using SmoothECE (smECE, \citet{blasiok2024smooth}),
a theoretically-principled version of the Expected Calibration Error (ECE),
as described below.

\subsection{Calibration Metric: SmoothECE}
\label{app:smECE}
We measure calibration error using SmoothECE (smECE) \citet{blasiok2024smooth},
which is essentially a kernel-smoothed version of Expected-Calibration-Error
with better theorhetical properties.

For interpretability reasons,
we chose to use a \emph{fixed} bandwidth of $\sigma=0.05$,
rather than the automatic bandwidth defined by SmoothECE.
This fixed choice makes smECE behave closer to a ``smoothed'' version of
BinnedECE with bin-width $=0.05$, which makes the metric more directly comparable to prior works.
Fixing the bandwidth comes at the cost of slightly weaker theoretical guarantees; however the smECE at scale $\sigma$
still bounds the distance-to-calibration \citep{blasiok2023unifying} in the following way:

\[
(\underline{\mathrm{dCE}} - \sigma) \leq  \mathrm{smECE}_\sigma \leq
\left(1 + \frac{1}{\sigma} \right) \underline{\mathrm{dCE}}.
\]
The LHS is \citet[Lemma 8]{blasiok2024smooth}
and the RHS is \citet[Lemma 9]{blasiok2024smooth}.

We use the SmoothECE implementation provided by: \url{https://github.com/apple/ml-calibration}.
Specifically, we use \verb|relplot.smECE_sigma| with $\sigma=0.05$.

\subsection{Visualizing calibration: reliability diagrams}
\label{app:rel-diagram}
We follow the guidance of \citet{blasiok2024smooth},
and visualize calibration using kernel-smoothed reliability diagrams.

{\bf Reading the Diagram.}
\Cref{fig:calib_works_grid} gives several examples of reliability diagrams.
The solid red line is the regression line, 
an estimate of 
$\mu(c) := \E[ \textrm{semantic accuracy} \mid \textrm{semantic confidence} = c]$.
The black cross is the point 
$(  \E[ \textrm{semantic confidence}] ,  \E[ \textrm{semantic accuracy}] ) \in [0, 1] \times [0, 1]$, that is, the average semantic confidence and accuracy.
The gray histograms at the bottom of the plot visualize the
density of semantic confidences.
We plot two overlaid histograms,
one for the confidence distribution of correct predictions
(i.e. the confidence of samples where semantic-accuracy=1),
and another for the confidence distribution of incorrect predictions.
The width of the red regression line varies with the overall density of
semantic-confidences.

{\bf Implementation Details.} For reliability diagrams, we use the implementation of 
\texttt{relplot} (\url{https://github.com/apple/ml-calibration}) 
with minor modifications:
we use a fixed kernel bandwidth $\sigma=0.05$ for the regression line,
and we visualize the density of confidences using histogram binning with 15 constant-width bins.

\subsection{LoRA Fine-Tuning}
\label{app:lora}

To test \Cref{claim:main_heuristic} more quantitatively, we  train a
LoRA version of the LLM to explicitly learn the function $G$ defined in \Cref{claim:main_heuristic}.
We do this as follows.
Let $p_\theta$ be the base model.
Instantiate a rank=8 LoRA adapter \citep{hu2022lora}
on top of the original model $p_\theta$,
which we denote $p_\phi$.

We want to train $p_\phi$ to behave as the ``semantically-collapsed'' version of $p_\theta$.
That is, when prompted with a question $x$, the model $p_\phi$ should generate
a distribution on answers $b$ which imitates the base model's semantic answers $B_x(z)$:
\begin{equation}
\label{eqn:phi-match}
\quad p_\phi(b \mid x) \approx 
\Pr_{z \sim p_\theta(\cdot \mid x)}[ B_x(z)=b ] \equiv (B_x \sharp p_x)(b)
\end{equation}
Since our implementation of the collapsing function $B$ produces string outputs (canonical answers),
we can train $p_\phi$ as a standard autoregressive model.
Explicitly:

\begin{enumerate}
    \item For each question in the dataset $x$,
    sample the original model 50 times, and apply the collapsing function $B$ to each generation.
    This produces 50 samples $\{(x, b_i)\}$ of question $x$ and canonical-answer $b_i$ for each 
    original question $x$, effectively expanding the original dataset size by 50 times.
    \item Train $p_\phi$ with the standard autoregressive objective,
    on the prompt-completion pairs $\{(x, b_i)\}$ from above.
    That is, train $p_\phi$ to complete prompt $x$ with generation $b_i$.
\end{enumerate}
Our training procedure is similar to the procedure used to train ``P(IK)'' in \citet{kadavath2022language},
in that we also train on an ``expanded'' training set defined by base model samples.
Similar to \citet{kadavath2022language}, we do this mainly for convenience.

For GSM8K, we hold-out 2000 questions for evaluation, and use the remainder for training as above.
We train all models on an 8xA100 node for 1 epoch on the expanded dataset, using the \verb|SFTTrainer| implementation
from Huggingface TRL \citep{vonwerra2022trl} with the following parameters in \Cref{tab:hparams}.
Note, we shuffle the expanded training set manually beforehand,
so we do not ask the dataloader to shuffle.

\begin{table}[H]
\centering
\caption{Hyperparameters for Supervised Fine-Tuning (SFT).}
\label{tab:sft_hyperparams}
\begin{tabular}{ll}
\toprule
\textbf{Parameter} & \textbf{Value} \\
\midrule
\multicolumn{2}{l}{\textit{Training \& Hardware}} \\ %
\quad \texttt{num\_train\_epochs} & 1 \\
\quad \texttt{per\_device\_train\_batch\_size} & 4 \\
\quad \texttt{gradient\_accumulation\_steps} & 2 \\
\quad (Effective Batch Size) & 64 (4 x 8 GPUs x 2) \\
\quad \texttt{bf16} & True \\
\midrule
\multicolumn{2}{l}{\textit{Optimizer \& Scheduler}} \\ %
\quad \texttt{optim} & \texttt{adamw\_torch\_fused} \\
\quad \texttt{learning\_rate} & 5e-5 \\
\quad \texttt{weight\_decay} & 0.0 \\
\quad \texttt{warmup\_ratio} & 0.05 \\
\midrule
\multicolumn{2}{l}{\textit{PEFT (LoRA) Configuration}} \\
\quad \texttt{use\_peft} & True \\
\quad \texttt{lora\_r} & 8 \\
\quad \texttt{lora\_alpha} & 16 \\
\quad \texttt{lora\_dropout} & 0.0 \\
\quad \texttt{lora\_target\_modules} & \texttt{all-linear} \\
\quad \texttt{task\_type} & \texttt{CAUSAL\_LM} \\
\quad \texttt{bias} & \texttt{none} \\
\midrule
\multicolumn{2}{l}{\textit{Data Handling}} \\
\quad \texttt{packing} & False \\
\quad \texttt{dataloader\_shuffle} & False \\
\bottomrule
\end{tabular}
\label{tab:hparams}
\end{table}

After training, we evaluate how closely \Cref{eqn:phi-match} holds,
by estimating the KL divergence between RHS and LHS of \Cref{eqn:phi-match}.
This KL measures how well our LoRA $p_\phi$ matches its training distribution.
Conveniently, the KL can be written as the difference between
the \emph{negative-log-loss} of $p_\phi$ and 
the \emph{semantic entropy} of the original model $p_\theta$:

\begin{align}
\textrm{Gap to optimality}
&:= 
KL(~ (B_x \sharp p_x) ~~\Vert~~ p_\phi( \cdot \mid x)   ~)  \\
&= 
\underbrace{
\E_{\substack{
x \sim \cD\\
z \sim p_\theta( z \mid x )
}}
[
-\log p_\phi( B(z) \mid x )
]}_{\textrm{Eval NLL loss of $p_\phi$}}
-
\underbrace{H( B_x \sharp p_x )}_{\textrm{Semantic entropy of $p_\theta$}}
\label{eqn:KL-gap}
\end{align}

This is particularly convenient because the eval log-loss is a
standard metric tracked during training.
Note that for our purposes, it is important to compute the
\emph{unnormalized} log-loss (i.e., not normalized by sequence-length).

In {\bf \Cref{fig:scatterplot}},
we plot the KL gap of \Cref{eqn:KL-gap}
on the x-axis, and the SmoothECE of the original model $p_\theta$ on the y-axis.
We evaluate base models: Qwen2.5-\{0.5B, 1.5B, 3B, 7B, 14B\},
with all three response styles:
\textcolor{AppleBlue5}{concise},
\textcolor{AppleGreen5}{sentence},
\textcolor{ApplePurple5}{cot}.
This results in 15 points plotted in \Cref{fig:scatterplot}, 
colored according to response style using the color scheme of \Cref{fig:ece-grid}.
We observe that, consistent with \Cref{claim:main_heuristic},
configurations where the semantic class distribution is easy-to-learn
(low KL gap) also have small calibration error.
The points with high KL (and high calibration error) are 
the \textcolor{ApplePurple5}{chain-of-thought} experiments,
as well as the small 0.5B model with the ``sentence'' response type.

\subsection{LLMs evaluated}
\label{app:list-of-llms}

Below, we list all models evaluated in this paper.
All were obtained from HuggingFace.

\begin{longtable}{@{}lll@{}}
\caption{Pretrained-only base models evaluated in this paper. Models sharing a prefix and reference are grouped.}
\label{tab:base-models-final} \\
\toprule
\textbf{Family Prefix} & \textbf{Model Suffix} & \textbf{Reference} \\
\midrule
\endfirsthead

\caption[]{} \\ 
\toprule
\textbf{Family Prefix} & \textbf{Model Suffix} & \textbf{Reference} \\
\midrule
\endhead

\bottomrule
\endfoot
\endlastfoot

\multirow{7}{*}{\texttt{google/}} 
& \texttt{gemma-2-2b} & \multirow{3}{*}{\citep{team2024gemma}} \\
& \texttt{gemma-2-9b} & \\
& \texttt{gemma-2-27b} & \\
\cmidrule(r){2-3}
& \texttt{gemma-3-1b-pt} & \multirow{4}{*}{\citep{team2025gemma}} \\
& \texttt{gemma-3-4b-pt} & \\
& \texttt{gemma-3-12b-pt} & \\
& \texttt{gemma-3-27b-pt} & \\
\midrule

\multirow{15}{*}{\texttt{Qwen/}} 
& \texttt{Qwen2.5-0.5B} & \multirow{7}{*}{\citep{Yang2024Qwen25TR}} \\
& \texttt{Qwen2.5-1.5B} & \\
& \texttt{Qwen2.5-3B} & \\
& \texttt{Qwen2.5-7B} & \\
& \texttt{Qwen2.5-14B} & \\
& \texttt{Qwen2.5-32B} & \\
& \texttt{Qwen2.5-72B} & \\
\cmidrule(r){2-3}
& \texttt{Qwen2.5-Math-1.5B} & \multirow{3}{*}{\citep{yang2024qwen2}} \\
& \texttt{Qwen2.5-Math-7B} & \\
& \texttt{Qwen2.5-Math-72B} & \\
\cmidrule(r){2-3}
& \texttt{Qwen3-0.6B-Base} & \multirow{5}{*}{\citep{yang2025qwen3}} \\
& \texttt{Qwen3-1.7B-Base} & \\
& \texttt{Qwen3-4B-Base} & \\
& \texttt{Qwen3-8B-Base} & \\
& \texttt{Qwen3-14B-Base} & \\
\midrule

\multirow{4}{*}{\texttt{mistralai/}} 
& \texttt{Mistral-7B-v0.1} & \multirow{2}{*}{\citep{Jiang2023Mistral7}} \\
& \texttt{Mistral-7B-v0.3} & \\
\cmidrule(r){2-3}
& \texttt{Mistral-Small-24B-Base-2501} & \citep{mistral2024small31} \\
\cmidrule(r){2-3}
& \texttt{Mixtral-8x7B-v0.1} & \citep{mistral2023mixtral} \\
\midrule

\multirow{2}{*}{\texttt{meta-llama/}} 
& \texttt{Llama-3.1-8B} & \multirow{2}{*}{\citep{grattafiori2024llama}} \\
& \texttt{Llama-3.1-70B} & \\
\end{longtable}

\newpage

\begin{longtable}{@{}lll@{}}
\caption{Instruction-tuned models evaluated in this paper. Models sharing a prefix and reference are grouped.}
\label{tab:instruct-models-final} \\
\toprule
\textbf{Family Prefix} & \textbf{Model Suffix} & \textbf{Reference} \\
\midrule
\endfirsthead

\caption[]{} \\
\toprule
\textbf{Family Prefix} & \textbf{Model Suffix} & \textbf{Reference} \\
\midrule
\endhead

\bottomrule
\endfoot
\endlastfoot

\multirow{7}{*}{\texttt{google/}}
& \texttt{gemma-2-2b-it} & \multirow{3}{*}{\citep{team2024gemma}} \\
& \texttt{gemma-2-9b-it} & \\
& \texttt{gemma-2-27b-it} & \\
\cmidrule(r){2-3}
& \texttt{gemma-3-1b-it} & \multirow{4}{*}{\citep{team2025gemma}} \\
& \texttt{gemma-3-4b-it} & \\
& \texttt{gemma-3-12b-it} & \\
& \texttt{gemma-3-27b-it} & \\
\midrule

\multirow{16}{*}{\texttt{Qwen/}}
& \texttt{Qwen2.5-0.5B-Instruct} & \multirow{7}{*}{\citep{Yang2024Qwen25TR}} \\
& \texttt{Qwen2.5-1.5B-Instruct} & \\
& \texttt{Qwen2.5-3B-Instruct} & \\
& \texttt{Qwen2.5-7B-Instruct} & \\
& \texttt{Qwen2.5-14B-Instruct} & \\
& \texttt{Qwen2.5-32B-Instruct} & \\
& \texttt{Qwen2.5-72B-Instruct} & \\
\cmidrule(r){2-3}
& \texttt{Qwen2.5-Math-1.5B-Instruct} & \multirow{3}{*}{\citep{yang2024qwen2}} \\
& \texttt{Qwen2.5-Math-7B-Instruct} & \\
& \texttt{Qwen2.5-Math-72B-Instruct} & \\
\cmidrule(r){2-3}
& \texttt{Qwen3-0.6B} & \multirow{6}{*}{\citep{yang2025qwen3}} \\
& \texttt{Qwen3-1.7B} & \\
& \texttt{Qwen3-4B} & \\
& \texttt{Qwen3-8B} & \\
& \texttt{Qwen3-14B} & \\
& \texttt{Qwen3-32B} & \\
\midrule

\multirow{4}{*}{\texttt{mistralai/}}
& \texttt{Mistral-7B-Instruct-v0.1} & \multirow{2}{*}{\citep{Jiang2023Mistral7}} \\
& \texttt{Mistral-7B-Instruct-v0.3} & \\
\cmidrule(r){2-3}
& \texttt{Ministral-8B-Instruct-2410} & \citep{mistral2024ministraux} \\
\cmidrule(r){2-3}
& \texttt{Mistral-Small-24B-Instruct-2501} & \citep{mistral2024small31} \\
\midrule

\multirow{2}{*}{\texttt{NousResearch/}}
& \texttt{Nous-Hermes-2-Mixtral-8x7B-SFT} & \citep{nousresearch2024hermes2mixtralsft} \\
\cmidrule(r){2-3}
& \texttt{Nous-Hermes-2-Mixtral-8x7B-DPO} & \citep{nousresearch2024hermes2mixtraldpo} \\
\midrule

\texttt{alignment-}
& \texttt{zephyr-7b-dpo-full} & \multirow{2}{*}{\citep{tunstall2023zephyr}} \\
\texttt{handbook/} & \texttt{zephyr-7b-sft-full} & \\
\midrule

\multirow{3}{*}{\texttt{meta-llama/}}
& \texttt{Llama-3.1-8B-Instruct} & \multirow{3}{*}{\citep{grattafiori2024llama}} \\
& \texttt{Llama-3.1-70B-Instruct} & \\
& \texttt{Llama-3.3-70B-Instruct} & \\
\midrule

\texttt{microsoft/} & \texttt{phi-4} & \citep{abdin2024phi} \\
\end{longtable}

\newpage
\subsection{Prompts}
\label{app:prompt-examples}

\newtcolorbox[ auto counter ]{prompt}[2][]{%
breakable,
fonttitle=\bfseries,
fontupper=\small,
fontlower=\small,
title=Prompt~\thetcbcounter: #2,
#1}

\lstset{
basicstyle=\tiny\ttfamily,
columns=flexible,
breaklines=true,
breakautoindent=false,
breakindent=0pt,  %
}

We use 3 different prompt styles: concise, sentence, and chain-of-thought (cot).
All prompts use 5 few-shot examples from the dataset.
We describe the prompt formatting here by way of example, using our prompts for the GSM8K dataset.
For base models, we use the full prompt text as context,
while for instruct models we format
the few-shot examples using the model-specific chat template (per Huggingface).

\texttt{Prompt~\ref{gsm8k-concise}} shows the \emph{``concise"} prompt for GSM8K.
This prompt style uses only the final answers provided by the dataset
(excluding any chain-of-thought).

\texttt{Prompt~\ref{gsm8k-sentence}} shows the \emph{``sentence"} prompt type.
This prompt formats the few-shot answers in complete sentences,
and also includes instructions to format answers accordingly.
Note that we intentionally varied the sentence structure of the few-shot examples,
to encourage the model to use a diversity of phrasings.
This makes the ``sentence'' responses more syntactically complex than the ``concise'' responses,
though not more \emph{semantically} complex --- thus testing the limits of our theory.

\texttt{Prompt~\ref{gsm8k-cot}} shows the \emph{``cot"} prompt type.
This includes reasoning and formatting instructions, as well
as few-shot examples that include reasoning-traces (provided by the dataset).

The prompt formatting for other datasets follow the same conventions as these GSM8K examples.
We exclude the ``cot'' prompt type for non-math datasets.

\begin{prompt}[label=gsm8k-concise]{GSM8K-concise}
\begin{lstlisting}
Question: Natalia sold clips to 48 of her friends in April, and then she sold half as many clips in May. How many clips did Natalia sell altogether in April and May?
Answer: 72

Question: Weng earns $12 an hour for babysitting. Yesterday, she just did 50 minutes of babysitting. How much did she earn?
Answer: 10

Question: Betty is saving money for a new wallet which costs $100. Betty has only half of the money she needs. Her parents decided to give her $15 for that purpose, and her grandparents twice as much as her parents. How much more money does Betty need to buy the wallet?
Answer: 5

Question: Julie is reading a 120-page book. Yesterday, she was able to read 12 pages and today, she read twice as many pages as yesterday. If she wants to read half of the remaining pages tomorrow, how many pages should she read?
Answer: 42

Question: James writes a 3-page letter to 2 different friends twice a week.  How many pages does he write a year?
Answer: 624

Question: {QUESTION}
Answer:
\end{lstlisting}
\end{prompt}

\begin{prompt}[label=gsm8k-sentence]{GSM8K-sentence}
\begin{lstlisting}
Answer the following question in a single brief but complete sentence.
Question: Natalia sold clips to 48 of her friends in April, and then she sold half as many clips in May. How many clips did Natalia sell altogether in April and May?
Answer: Natalia sold 72 clips in April and May combined.

Answer the following question in a single brief but complete sentence.
Question: Weng earns $12 an hour for babysitting. Yesterday, she just did 50 minutes of babysitting. How much did she earn?
Answer: Weng earned only $10 yesterday.

Answer the following question in a single brief but complete sentence.
Question: Betty is saving money for a new wallet which costs $100. Betty has only half of the money she needs. Her parents decided to give her $15 for that purpose, and her grandparents twice as much as her parents. How much more money does Betty need to buy the wallet?
Answer: Betty needs $5 more to buy the wallet.

Answer the following question in a single brief but complete sentence.
Question: Julie is reading a 120-page book. Yesterday, she was able to read 12 pages and today, she read twice as many pages as yesterday. If she wants to read half of the remaining pages tomorrow, how many pages should she read?
Answer: She would need to read 42 pages tomorrow.

Answer the following question in a single brief but complete sentence.
Question: James writes a 3-page letter to 2 different friends twice a week.  How many pages does he write a year?
Answer: James writes 624 pages per year.

Answer the following question in a single brief but complete sentence.
Question: {QUESTION}
Answer:
\end{lstlisting}
\end{prompt}

\begin{prompt}[label=gsm8k-cot]{GSM8K-cot}
\begin{lstlisting}
Answer the following question. To do that, first reason about it by saying 'Reasoning:' and then derive the answer. After that, when you are done, write 'My answer is: ' and write a short and concise answer to the question.Last, write <DONE>.
Question: Natalia sold clips to 48 of her friends in April, and then she sold half as many clips in May. How many clips did Natalia sell altogether in April and May?
Answer: Reasoning: Natalia sold 48/2 = <<48/2=24>>24 clips in May.
Natalia sold 48+24 = <<48+24=72>>72 clips altogether in April and May.
My answer is: 72<DONE>

Answer the following question. To do that, first reason about it by saying 'Reasoning:' and then derive the answer. After that, when you are done, write 'My answer is: ' and write a short and concise answer to the question.Last, write <DONE>.
Question: Weng earns $12 an hour for babysitting. Yesterday, she just did 50 minutes of babysitting. How much did she earn?
Answer: Reasoning: Weng earns 12/60 = $<<12/60=0.2>>0.2 per minute.
Working 50 minutes, she earned 0.2 x 50 = $<<0.2*50=10>>10.
My answer is: 10<DONE>

Answer the following question. To do that, first reason about it by saying 'Reasoning:' and then derive the answer. After that, when you are done, write 'My answer is: ' and write a short and concise answer to the question.Last, write <DONE>.
Question: Betty is saving money for a new wallet which costs $100. Betty has only half of the money she needs. Her parents decided to give her $15 for that purpose, and her grandparents twice as much as her parents. How much more money does Betty need to buy the wallet?
Answer: Reasoning: In the beginning, Betty has only 100 / 2 = $<<100/2=50>>50.
Betty's grandparents gave her 15 * 2 = $<<15*2=30>>30.
This means, Betty needs 100 - 50 - 30 - 15 = $<<100-50-30-15=5>>5 more.
My answer is: 5<DONE>

Answer the following question. To do that, first reason about it by saying 'Reasoning:' and then derive the answer. After that, when you are done, write 'My answer is: ' and write a short and concise answer to the question.Last, write <DONE>.
Question: Julie is reading a 120-page book. Yesterday, she was able to read 12 pages and today, she read twice as many pages as yesterday. If she wants to read half of the remaining pages tomorrow, how many pages should she read?
Answer: Reasoning: Maila read 12 x 2 = <<12*2=24>>24 pages today.
So she was able to read a total of 12 + 24 = <<12+24=36>>36 pages since yesterday.
There are 120 - 36 = <<120-36=84>>84 pages left to be read.
Since she wants to read half of the remaining pages tomorrow, then she should read 84/2 = <<84/2=42>>42 pages.
My answer is: 42<DONE>

Answer the following question. To do that, first reason about it by saying 'Reasoning:' and then derive the answer. After that, when you are done, write 'My answer is: ' and write a short and concise answer to the question.Last, write <DONE>.
Question: James writes a 3-page letter to 2 different friends twice a week.  How many pages does he write a year?
Answer: Reasoning: He writes each friend 3*2=<<3*2=6>>6 pages a week
So he writes 6*2=<<6*2=12>>12 pages every week
That means he writes 12*52=<<12*52=624>>624 pages a year
My answer is: 624<DONE>

Answer the following question. To do that, first reason about it by saying 'Reasoning:' and then derive the answer. After that, when you are done, write 'My answer is: ' and write a short and concise answer to the question.Last, write <DONE>.
Question: {QUESTION}
Answer:
\end{lstlisting}
\end{prompt}

\begin{prompt}[label=canonicalize]{Canonicalization}
\begin{lstlisting}
Question: "{QUESTION}"
Response: "{RESPONSE}"

Your task is to return **only** the core answer from this response.
Follow these rules:
- Keep only the core answer (e.g., a number, a name, or a short phrase).
- Remove all extra words and filler.
- Expand all abbreviations to their full form (e.g., 'USA' -> 'United States of America').
- Write all numbers with digits, not as words (e.g., 'eight' -> '8').
- For locations, output only the highest-precision part (e.g. 'Seattle, Washington' -> 'Seattle')
- For dates, unless otherwise specified, format as YYYY-MM-DD (e.g. "August 1, 1990" -> "1990-08-01"). If only a month or year is specified, leave as-is (e.g. "August" or "2003" or "July, 2000"). Do not make up unspecified information.
- No explaining or reasoning. Output the core answer only.
- If the response does not address the question, or if you are unsure what to do, return the response unchanged.
- Never alter the meaning of the response, even if it is incorrect.
- Do not infer missing information; only rephrase what is given in the response.
\end{lstlisting}
\end{prompt}

\begin{prompt}[label=canonicalize-math]{Canonicalization (math)}
\begin{lstlisting}
Response: "{RESPONSE}"

Your task is to return **only** the core answer from this response.
Follow these rules:
- Keep only the core answer, as a raw number or LaTeX string (e.g. '0.5' or '\frac{1}{2}').
- If the answer is the value of a variable, only output the value itself (e.g. 'x=10' -> '10').
- Write all numbers with digits, not as words (e.g., 'eight' -> '8').
- Remove all extra words and filler.
- No explaining or reasoning. Output the core answer only.
- If the response does not contain a numeric value, or if you are unsure what to do, return the response unchanged.
- Never alter the value of the response, even if it is incorrect.
- Do not infer missing information; only extract what is given in the response.
\end{lstlisting}
\end{prompt}

\begin{prompt}[label=prompt:sem-equiv]{Semantic Equivalence}
\begin{lstlisting}
You will be given a question, and two possible responses. Your task is to determine whether the two answers are semantically consistent, i.e., whether the two responses agree on what the answer to the question is. 

Question: {QUESTION}
Response 1: {RESPONSE1}
Response 2: {RESPONSE2}

Are these two responses semantically aligned responses to the question? Respond only with either the string "Yes" or the string "No".

\end{lstlisting}
\end{prompt}

\newpage
\section{Theory}
\label{app:theory}

\subsection{Quick Reference}

In this section, we provide proofs of the theorems presented in the main text.
\begin{itemize}
    \item \Cref{thm:calibration_equivalences}
    is proved in \Cref{app:pf-of-calib-equiv}.
    \item \Cref{thm:ar-b-cal}
    is formally restated and proved as \Cref{thm:simple_conf_circuit}
    in \Cref{app:ar-conf-proofs}.
\end{itemize}

Proving these theorems involves some additional theoretical machinery, particularly \emph{weighted calibration},
which we develop here.
We restate some of the notation
and definitions from the main body for convenience.
We also give more general versions of several of our results in this section:
\begin{itemize}
    \item \Cref{app:full_b_cal} gives \emph{full calibration} analogs of our confidence-calibration results.
    \item \Cref{app:calib_gap_proper} extends from cross-entropy loss to general \emph{proper losses}, providing quantitative bounds between post-processing and calibration gap.
\end{itemize}

\subsection{Weighted Calibration}

A key object in our theory is the notion of \emph{weighted calibration},
from \citet{gopalan2024computationally},
which is capable of expressing many different types of calibration.
We will use a version of this definition suitable for our LLM setting,
stated below.

\begin{definition}[Weighted Calibration, \citet{gopalan2024computationally}]
\label{def:weighted-cal}
For a set $\cW$ of weight functions
$w: \cV^* \x \Delta(\cV^N) \to \R^N$,
and a distribution $\cD$ over pairs $(x, y) \in \cV^* \times \cV^N$,
a model $p_\theta$ is \emph{perfectly $\cW$-weighted-calibrated on $\cD$} 
if:
\begin{align*}
\E_{(x,y) \sim \cD} \left[ 
\left\langle \tilde{y} - p_x, w(x, p_x) \right\rangle \right] \equiv 0
\end{align*}
where $p_x := p_\theta(\cdot \mid x) \in \Delta(\cV^N) \subset \R^{|\cV^N|}$
is the model's output distribution on input $x$,
and $\tilde{y} \in \{0, 1\}^{|\cV^N|}$ is the one-hot-encoding of $y$.
\end{definition}

\begin{remark}
For the reader familiar with multi-calibration \citep{hebert2018multicalibration}:
note that in our definition above, the weight functions $w$ are allowed to depend
on the prompt $x$. This allows weighted calibration to capture various kinds of multi-calibration.
\end{remark}

\subsection{Equivalence between Weighted Calibration and Local Loss Optimality}
\label{subsec:Calib_loss_optimality}
Weighted calibration is equivalent to local loss optimality w.r.t.~perturbations in the weight class. In this section, we prove this in a special case relevant to our framework; a more general result presented in \Cref{app:calib_gap_proper}.

For the log-loss $\ell(y, f) := -\sum_i y_i\log(f_i)$, we can analyze perturbations more easily through its dual representation. The dual loss, which operates on a logit vector $z$ is defined as
$$\ell^{\star}(y, z) = \log\left(\sum_{j=1}^K e^{z_j}\right) - y^T z 
\text{ and }
\nabla_z \ell^{\star}(y, z) = \text{softmax}(z) - y = f - y
$$
The primal and dual views are connected by the variable mapping $z = \log f$, which provides the key equality $\ell(y, f) = \ell^{\star}(y, z)$. (This is a special case of a more general primal/dual framework for proper losses; c.f. \Cref{tab:xent_duality}.) The relationship allows us to translate complex perturbations in the probability space into simple ones in the logit space. A multiplicative re-weighting of the probabilities, defined as $f \star w := \text{softmax}(\log f + w) = \text{softmax}(z + w)$, is equivalent to a simple additive perturbation $w$ on the logits. Therefore, the loss of the perturbed model can be expressed in either world:
\[
\underbrace{\ell(y, f \star w)}_{\text{Loss on perturbed probabilities}} \quad = \quad \underbrace{\ell^{\star}(y, z + w)}_{\text{Loss on perturbed logits}}
\]

\begin{theorem}[Equivalence of Calibration and Local Loss Optimality]
\label{thm:bcal_calibration_equivalences}
For all models $p_\theta$, distributions $\cD$, 
proper losses $\ell$
and families of weight functions $\cW$ (Definition~\ref{def:weighted-cal}):
the model $p_\theta$ is perfectly \text{$\cW$-weighted-calibrated}
on $\cD$
if and only if
it is $\cW$-locally loss-optimal on $\cD$ w.r.t. loss $\ell$.
\end{theorem}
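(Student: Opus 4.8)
The plan is to pass to the dual (logit) representation of the proper loss, where the perturbation operator $\star$ becomes an additive shift and the loss becomes convex. Recall from the discussion preceding the theorem — and its general-proper-loss form summarized in \Cref{tab:xent_duality} and developed in \Cref{app:calib_gap_proper} — that each proper loss $\ell$ admits a dual form $\ell^\star(y,\cdot)$, convex in its second (logit) argument, together with a link $f \mapsto z$ satisfying $\ell(y,f) = \ell^\star(y,z)$, $\nabla_z \ell^\star(y,z) = f - \tilde y$, and $\ell(y, f \star w) = \ell^\star(y, z + w)$. Writing $z_x$ for the dual variable of $p_x := p_\theta(\cdot \mid x)$ and $w_x := w(x,p_x)$, this yields the identity
\[
\E_{(x,y)\sim\cD}\big[\ell(y, p_x \star w_x) - \ell(y, p_x)\big] = \E_{(x,y)\sim\cD}\big[\ell^\star(y, z_x + w_x) - \ell^\star(y, z_x)\big].
\]
For cross-entropy this is exactly the content spelled out in the main text: $\ell^\star(y,z) = \log\sum_j e^{z_j} - y^\top z$, $z = \log f$, and $\nabla_z\ell^\star(y,z) = \mathrm{softmax}(z) - y$.

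For the direction ``$\cW$-weighted-calibrated $\Rightarrow$ $\cW$-locally-loss-optimal'', I would fix $w \in \cW$ and apply convexity of $\ell^\star(y,\cdot)$ via the first-order inequality, pointwise in $(x,y)$:
\[
\ell^\star(y, z_x + w_x) - \ell^\star(y, z_x) \;\ge\; \big\langle \nabla_z \ell^\star(y, z_x),\, w_x\big\rangle \;=\; \big\langle p_x - \tilde y,\, w_x\big\rangle .
\]
Taking expectations and combining with the identity above gives $\E[\ell(y, p_x \star w_x)] - \E[\ell(y, p_x)] \ge -\,\E[\langle \tilde y - p_x, w_x\rangle]$, and the right-hand side is zero precisely because $p_\theta$ is $\cW$-weighted-calibrated (\Cref{def:weighted-cal}). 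Since $w$ was arbitrary, this is exactly \Cref{def:local_optimality_condition}.

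For the converse I would use that the weight classes in play — in particular $\WBconf$ of \Cref{def:perturbation_classes} (and $\WBcal$) — are closed under scaling by $[-1,1]$, so $w \in \cW$ implies $c\,w \in \cW$ for all $c \in [-1,1]$. Fix $w$ and set $\psi_w(c) := \E[\ell^\star(y, z_x + c\,w_x)]$, which is convex in $c$ (an expectation of the convex $\ell^\star(y,\cdot)$ composed with an affine map), finite on $[-1,1]$, and differentiable at $0$ with $\psi_w'(0) = \E[\langle p_x - \tilde y, w_x\rangle]$; the interchange of derivative and expectation is justified since difference quotients of a convex function are monotone and $w_x$ is bounded with the loss finite in expectation. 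Applying $\cW$-local-loss-optimality to $c\,w \in \cW$ for every $c \in [-1,1]$ gives $\psi_w(c) \ge \psi_w(0)$, so $0$ is a local minimizer of the convex $\psi_w$, hence $\psi_w'(0) = 0$, i.e.\ $\E[\langle \tilde y - p_x, w_x\rangle] = 0$. As $w$ ranges over $\cW$ this is exactly $\cW$-weighted-calibration.

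The main obstacle is not the one-line convexity argument but setting up the general proper-loss machinery cleanly: establishing the primal/dual correspondence with the stated gradient formula $\nabla_z\ell^\star(y,z) = f - \tilde y$ and the compatibility $\ell(y, f \star w) = \ell^\star(y, z+w)$ for the loss-dependent perturbation operator — the standard Savage/Bregman representation, which is precisely what \Cref{tab:xent_duality} and \Cref{app:calib_gap_proper} supply — together with the mild regularity (finiteness of the loss in expectation, boundedness of the perturbation class, smoothness of $\ell^\star$) needed for the limit/derivative interchanges, and the observation that the relevant weight classes are scale-closed so the converse goes through. For the cross-entropy loss of the main text all of this is explicit, so the argument specializes directly; combined with the identification of $\WBconf$ as the weight class corresponding to confidence-calibration, it yields \Cref{thm:calibration_equivalences} as a special case.
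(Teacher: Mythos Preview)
Your proposal is correct and mirrors the paper's proof: both pass to the dual (logit) parametrization where the perturbation becomes additive and the loss convex, then use the first-order convexity inequality $\ell^\star(y,z+w) \ge \ell^\star(y,z) + \langle p_x - \tilde y, w\rangle$ for the ``calibrated $\Rightarrow$ loss-optimal'' direction and a derivative-at-zero argument (via $\epsilon w$ or $c\,w$) for the converse. You are slightly more explicit than the paper about the scale-closure of $\cW$ needed for the converse and about the general-proper-loss setup, but the substance is identical.
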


\begin{proof}

We apply the first-order optimality condition to the dual loss $\ell^{\star}(y, z)$ with a simple additive perturbation $w$ on the logits $z$. With the perturbed loss function, for $\epsilon>0$, 
$$\mathcal{L}(\epsilon) = \ell^{\star}(y, z + \epsilon w) \text{ and } \frac{d \mathcal{L}}{d \epsilon}(\epsilon) = \langle \nabla_z \ell^{\star}(y, z + \epsilon w), w \rangle$$
By local loss optimality
\begin{align*}
0 &\leq \frac{\mathcal{L}(\epsilon) - \mathcal{L}(0)}{\epsilon} = \frac{d \mathcal{L}}{d \epsilon}(0) + \frac{o(\epsilon)}{\epsilon} \longrightarrow \langle \nabla_z \ell^{\star}(y, z), w \rangle
\end{align*}
The same reasoning replacing $w$ by $-w$, we also have $\langle \nabla_z \ell^{\star}(y, z), w \rangle \leq 0$. Thus 

$$
\ell^{\star}(y, z) \leq \ell^{\star}(y, z + \epsilon w) \Longrightarrow \langle \nabla_z \ell^{\star}(y, z), w \rangle = 0
$$

The opposite implication follow from convexity, we have:
$$
\ell^{\star}(y, z + w) \geq \ell^{\star}(y, z) + \langle \nabla_z \ell^{\star}(y, z), w \rangle.
$$
Thus, if $\langle \nabla_z \ell^{\star}(y, z), w \rangle = 0$ holds, the inequality simplifies to:
$
\ell^{\star}(y, z + w) \geq \ell^{\star}(y, z).
$

Taking the expectation on both side
\begin{align*}
    \E_{\substack{(x, y) \sim \cD}}
    [ \ell(y, f) ] \leq
    \E_{\substack{(x, y) \sim \cD}}
    [ \ell(y,  f \star w ) ]
&\Longleftrightarrow
    \E_{\substack{(x, y) \sim \cD}}
    [ \ell^\star(y, z) ] \leq
    \E_{\substack{(x, y) \sim \cD}}
    [ \ell^\star(y,  z + w ) ] \\
&\Longleftrightarrow
\E_{\substack{(x, y) \sim \cD}} \langle f - y, w \rangle = \E_{\substack{(x, y) \sim \cD}} \langle \nabla_z \ell^{\star}(y, z), w \rangle = 0 
\end{align*}

A model is calibrated under the log-loss if and only if its expected prediction error $f - y$ is orthogonal to any systematic perturbation $w$ of its logits.
\end{proof}

\subsection{Equivalence between $B$-confidence-calibration and weighted calibration}
\label{app:b_cal}

In this section we prove that $B$-confidence-calibration can be characterized in terms of weighted calibration (\Cref{def:weighted-cal}).

\paragraph{Notation and Setup}
There are two relevant output spaces:
the space $\cV^N$ of long-form answer strings,
and the space $[K]$ of semantic answer classes.
Let $M := |\cV^N|$.
It will be convenient to identify strings $z \in \cV^N$
with an index in $[M]$, and we will abuse notation by writing $z \in [M]$.

To simplify some of the proofs, we will rely on an explicit one-hot representation.
For a string $y \in \cV^N$, we denote its one-hot representation as
$\tilde y \in \{0,1\}^M$.
For a given prompt $x \in \cV^*$, the model's distribution over completions is 
$p_\theta(\cdot \mid x) \in \Delta(\cV^N) \subset \R^{M}$,
which we treat as a vector embedded in $\R^M$.
We write $p_x := p_\theta(\cdot \mid x)$ for convenience.

A collapsing function $B: \cV^* \times \cV^N \to [K]$
assigns to each prompt $x \in \cV^*$ and long-answer $y \in \cV^N$
a $B$-class $B_x(y) \in [K]$.
Moreover, the function $B$ along with the model $p_\theta$
induces a distribution on classes $[K]$ as follows.
For a given input $x \in \cV^*$, we take the model's distribution $p_\theta(\cdot \mid x)$ and push it forward through $B_x$ to obtain a categorical distribution
$\pi_x$ defined as
\[
\pi_x := B_x \sharp p_\theta(\cdot \mid x) \;\in\; \Delta_K.
\]
Explicitly, the probability assigned to a category $c \in [K]$ is:
\begin{equation}
\pi_x(c) = \left(B_x \sharp p_x\right)(c)
= \Pr_{z \sim p_\theta(\cdot \mid x)}\left[B_x(z) = c\right]
= \sum_{z : B_x(z) = c} p_\theta(z \mid x).
\label{eq:pi_c_explicit}
\end{equation}

\paragraph{Definitions}
In the main text, we defined confidence calibration and $B$-confidence calibration via  \Cref{def:confidence_calibration} and \Cref{def:B-conf-cal}. We formally restate these definitions below.

\begin{definition}[Confidence Calibration]
\label{def:app-confidence_calibration}
A distribution $\cD$ over prediction-output pairs $(c, y) \in \Delta_K \times \eK$ is perfectly confidence-calibrated if, conditioned on the model's top predicted probability, that probability matches the expected outcome. Formally, 
\begin{equation}
    \E_{(c, y) \sim \cD} \left[ y_{k^\star} - c_{k^\star} \mid c_{k^\star} \right] \equiv 0 \text{ where } k^\star = \argmax_{k \in [K]} c_k.
\end{equation}
\end{definition}
Applying to our LLM setting, we say that a model is $B$-confidence-calibrated if the categorical distribution it induces is confidence-calibrated:

\begin{definition}[$B$-Confidence-Calibration]
\label{def:app-B-conf-cal}
A model $p_\theta$ is $B$-confidence-calibrated on a distribution $\cD$
if the induced distribution over pairs $(\pi_x, B_x(y))$ is perfectly
confidence-calibrated according to Definition~\ref{def:app-confidence_calibration}.
This requires that, for $k^\star = \argmax_{k \in [K]} \pi_x(k)$,
\begin{equation}
\E_{(x, y) \sim \cD} \left[
   \mathds{1}\{B_x(y) = k^\star\} - \pi_x(k^\star) 
   \;\middle|\; \pi_x(k^\star)
\right] = 0.
\end{equation}
\end{definition}

We also restate \Cref{def:perturbation_classes} here for convenience:
\begin{definition}[Semantic Perturbation Function Classes]
Given an arbitrary collapsing function $B_x(z) \in [K]$, we define the class
$\WBconf$ of perturbation functions $w(x, p_x) \in \R^{|\cV^N|}$ as follows.
These functions generate a perturbation vector based on the prompt $x$ and the
model's predictive distribution $p_x$:
\begin{align*}
\WBconf := 
\Big\{ w \;\Big|\;& \exists \tau: [0, 1] \to [-1, 1] \;\; 
\forall z \in \cV^N:\;
w(x,p_x)[z] = 
\tau\big(\pi_x(k^\star)\big)\cdot \mathds{1}\{B_x(z) = k^\star\} 
\Big\}, \\
&\text{where } \pi_x := B_x \sharp p_x, 
\quad k^\star := \argmax_{k \in [K]} \pi_x(k).
\end{align*}
\end{definition}

\paragraph{Equivalence Theorem}
Using the above definitions, we have the following equivalence.

\begin{theorem}[B-Confidence-Calibration as Weighted Calibration]
\label{thm:conf_cal_equivalence}
A model $p_\theta$ is perfectly $B$-confidence-calibrated
if and only if it is perfectly $\WBconf$-weighted-calibrated.
\end{theorem}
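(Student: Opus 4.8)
The plan is to unwind both sides of the equivalence into concrete expectations over $(x,y)\sim\cD$ and show they match, using the specific structure of the perturbation class $\WBconf$. First I would write out what $\WBconf$-weighted-calibration means: by Definition~\ref{def:weighted-cal}, $p_\theta$ is $\WBconf$-weighted-calibrated iff for every $\tau:[0,1]\to[-1,1]$,
\[
\E_{(x,y)\sim\cD}\Big[\sum_{z\in\cV^N}\big(\tilde y[z]-p_x[z]\big)\,\tau(\pi_x(k^\star_x))\,\mathds{1}\{B_x(z)=k^\star_x\}\Big]=0,
\]
where $k^\star_x=\argmax_k\pi_x(k)$. The key observation is that the weight is constant across all strings $z$ in the top $B$-class, so the inner sum factors: $\sum_z\tilde y[z]\mathds{1}\{B_x(z)=k^\star_x\}=\mathds{1}\{B_x(y)=k^\star_x\}$ (since $\tilde y$ is one-hot at $y$), while $\sum_z p_x[z]\mathds{1}\{B_x(z)=k^\star_x\}=\pi_x(k^\star_x)$ by the pushforward identity \eqref{eq:pi_c_explicit}. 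Hence the condition collapses to
\[
\E_{(x,y)\sim\cD}\big[\tau(\pi_x(k^\star_x))\cdot\big(\mathds{1}\{B_x(y)=k^\star_x\}-\pi_x(k^\star_x)\big)\big]=0\quad\text{for all }\tau.
\]

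Next I would connect this to $B$-confidence-calibration (Definition~\ref{def:app-B-conf-cal}), which asks that $\E[\mathds{1}\{B_x(y)=k^\star_x\}-\pi_x(k^\star_x)\mid\pi_x(k^\star_x)]=0$. Writing $r:=\pi_x(k^\star_x)\in[0,1]$ for the confidence and $e:=\mathds{1}\{B_x(y)=k^\star_x\}-\pi_x(k^\star_x)$ for the ``error,'' the forward direction is immediate: if the conditional expectation of $e$ given $r$ vanishes, then $\E[\tau(r)e]=\E[\tau(r)\E[e\mid r]]=0$ by the tower property, for any (bounded measurable) $\tau$. For the converse, I would run the standard argument that ``uncorrelated with all bounded test functions of $r$'' implies ``conditionally mean-zero given $r$'': the function $h(r):=\E[e\mid r]$ is bounded in $[-1,1]$ (since $e\in[-1,1]$), so taking $\tau=h$ (or, to stay within the stated class, noting $h$ is $[-1,1]$-valued and measurable, hence admissible) gives $\E[h(r)^2]=\E[h(r)e]=0$, forcing $h\equiv0$ $\cD$-almost surely, which is exactly $B$-confidence-calibration. (If one prefers only continuous or simple $\tau$, a routine approximation argument suffices since $r$ lives in the compact interval $[0,1]$.)

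The main obstacle is the measurability/admissibility bookkeeping in the converse direction: one must verify that the conditional expectation $h(r)=\E[e\mid r]$ is itself a legitimate member of the index set $\{\tau:[0,1]\to[-1,1]\}$ (or approximable by such), and that conditioning on the real-valued statistic $r=\pi_x(k^\star_x)$ is well-defined — i.e., that $\pi_x$ and $k^\star_x$ are measurable functions of $x$. Since $B$ is assumed computable and $\cV^N$ is finite, $k^\star_x$ and $\pi_x(k^\star_x)$ are measurable, and $h$ is bounded by $1$ in absolute value, so it does lie in the admissible class; the only subtlety is ties in the $\argmax$, which one handles by fixing a deterministic tie-breaking rule (e.g.\ smallest index), consistent with how $k^\star$ is defined throughout. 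Everything else is bookkeeping with the tower property and the pushforward identity, so I would keep that part terse.
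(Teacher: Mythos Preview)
Your proposal is correct and follows essentially the same route as the paper: expand the inner product $\langle \tilde y - p_x, w_\tau(x,p_x)\rangle$, collapse the two sums to $\tau(\pi_x(k^\star))\big(\mathds{1}\{B_x(y)=k^\star\}-\pi_x(k^\star)\big)$ via the one-hot and pushforward identities, and then invoke the standard equivalence between ``orthogonal to all bounded test functions of $r$'' and ``conditionally mean-zero given $r$.'' If anything you are more careful than the paper, which dispatches the converse with a one-line appeal to ``properties of conditional expectation'' and does not spell out the $\tau=h$ trick or the tie-breaking/measurability checks you flag.
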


\begin{proof}

The model is $\WBconf$-weighted-calibrated if, for all $w \in \WBconf$, the following holds:
\begin{equation*}
    \E_{(x,y) \sim \cD} \left[ \langle \tilde{y} - p_x, w(x, p_x) \rangle \right] = 0.
\end{equation*}
For a given $w$ defined by a function $\tau: [0,1] \to [-1,1]$, since $\tilde y$ is a one-hot vector with a $1$ in the coordinate $z=y$, the first term evaluates to
\[
\langle \tilde y, w(x,p_x)\rangle
= \sum_z \tilde y[z] \, w(x,p_x)[z]
= w(x,p_x)[y],
\]

Substituting the definition of $w$:
    \begin{equation*}
    w(x, p_x)[y] = \tau\left(v_x^\star\right) \cdot \mathds{1}_{\{B_x(y) = k^\star\}} \text{ where } v_x^\star:= \pi_x(k^\star).
    \end{equation*}

The second term is $\langle p_x, w(x, p_x) \rangle = \sum_z p_x(z) w(x, p_x)[z]$. Substituting the definition of $w$:
    \begin{align*}
    \sum_z p_x(z) w(x, p_x)[z] &= \sum_z p_x(z) \left( \tau\left(v_x^\star\right) \cdot \mathds{1}_{\{B_x(z) = k^\star\}} \right) \\
    &= \tau\left(v_x^\star\right) \cdot \sum_z p_x(z) \mathds{1}_{\{B_x(z) = k^\star\}}  \\
    &= \tau\left(v_x^\star\right) \cdot \Pr[B_x(z)=k^\star] 
    = \tau\left(v_x^\star\right) \cdot v_x^\star
    \end{align*}

Putting these together, the weighted calibration condition becomes:
\begin{equation*}
\E_{(x,y) \sim \cD} \left[ \tau\left(v_x^\star\right) \cdot \mathds{1}_{\{B_x(y) = k^\star\}} - \tau\left(v_x^\star\right) \cdot v_x^\star \right] = 0
\Longleftrightarrow
\E_{(x,y) \sim \cD} \left[ \tau\left(v_x^\star\right) \cdot \left( \mathds{1}_{\{B_x(y) = k^\star\}} - v_x^\star \right) \right] = 0.
\end{equation*}
This condition must hold for all functions $\tau: [0,1] \to [-1,1]$. By the properties of conditional expectation, this is true if and only if the term being multiplied by the arbitrary function of $v_x^\star$ has a conditional expectation of zero. This gives us:
\begin{equation*}
\E \left[ \mathds{1}_{\{B_x(y) = k^\star\}} - v_x^\star \mid v_x^\star \right] = 0,
\end{equation*}
which is precisely the definition of $B$-confidence-calibration.
\end{proof}

\subsection{Proof of \Cref{thm:calibration_equivalences}}
\label{app:pf-of-calib-equiv}

We can now combine the above ingredients to 
directly prove \Cref{thm:calibration_equivalences} from the main body.

\begin{proof}
Recall we have a model $p_\theta$, a collapsing function $B$, and a distribution $\cD$.

We have the following equivalences:
\begin{align*}    
\textrm{$p_\theta$ is $B$-confidence-calibrated on $\cD$}
&\iff
\textrm{$p_\theta$ is $\WBconf$-weighted-calibrated on $\cD$}
\tag{by \Cref{thm:conf_cal_equivalence}} \\
&\iff
\textrm{$p_\theta$ is $\WBconf$-locally-loss-optimal on $\cD$}
\tag{by \Cref{thm:bcal_calibration_equivalences}} 
\end{align*}
\end{proof}

\subsection{Proof of \Cref{thm:ar-b-cal}:
A Simple Circuit for B-Confidence-Perturbations}
\label{app:ar-conf-proofs}

Recall \Cref{def:perturbation_operator} of the perturbation operator:
\begin{align}
\forall z \in \cV^N: \quad
(f \star w)[z] &:= \mathrm{softmax}\big(w[z] + \log f[z] \big) =
\frac{f[z] \exp(w[z])}{\sum_{z' \in \cV^N} f[z'] \exp(w[z'])} 
\end{align}
which highlights that this transformation is a multiplicative reweighting of the reference distribution $f$ by $e^{w[z]}$, followed by a renormalization to get a valid distribution. %
We will show that perturbations of this form can be implemented autoregressively via a small, efficient arithmetic circuit.
The key is to define two ``intermediate top-1 confidence'' vectors that can be tracked during generation.

\begin{definition}[Intermediate Top-1 Confidence]
\label{def:ar-b-conf-scalar}
Given a model $p_x$ and mapping $B_x$, let $\pi_x = B_x \sharp p_x$ be the initial categorical distribution, and let $k^\star:= \operatorname{argmax}_{k \in [K]} (\pi_x)_k$ be the single most likely category. We define:
\begin{enumerate}
    \item The top confidence value $v_x^\star \in [0,1]$, which is the model's confidence in this top category:
    \begin{equation}
        v_x^\star:= (\pi_x)_{k^\star}.
    \end{equation}
    \item The conditional probability of hitting the top category, $g_i^{(\mathrm{conf})}(x, z_{\leq i}) \in [0,1]$, which is the probability of eventually generating a sequence in category $k^\star$, given the prefix $z_{\leq i}$:
    \begin{equation}
        g_i^{(\mathrm{conf})}(x, z_{\leq i}):= \Pr_{z' \sim p_x(\cdot \mid z_{\leq i})}[B_x(z_{\leq i}, z') = k^\star].
    \end{equation}
\end{enumerate}
\end{definition}

With these scalars, the autoregressive update becomes a simple linear transformation.

\begin{theorem}
\label{thm:simple_conf_circuit}
For any perturbation $w \in \WBconf$ (defined by a function $\tau$), the perturbed next-token probability is proportional to the original probability modified by a simple scalar circuit $C_w$:
\begin{equation}
(p_x \star w_x)(z_i \mid z_{<i}) \propto p_x(z_i \mid z_{<i}) \cdot C_w(v_x^\star, g_i^{(\mathrm{conf})}(x, z_{\leq i})),
\end{equation}
where the circuit $C_w$ is a linear function of $g_i^{(\mathrm{conf})}$:
\begin{equation}
    C_w(v, g):= 1 + \left(\exp(\tau(v)) - 1\right) \times g.
\end{equation}
\end{theorem}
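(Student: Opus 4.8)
The plan is to compute the perturbed next-token conditional directly from \Cref{def:perturbation_operator}, exploiting the fact that the exponential tilt applied by any $w \in \WBconf$ depends on a full generation $z \in \cV^N$ only through the single bit $\mathds{1}\{B_x(z) = k^\star\}$, which marginalizes cleanly over the completions of any prefix.

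First, fix the prompt $x$, and abbreviate $v := v_x^\star = \pi_x(k^\star)$ and $\alpha := \tau(v) \in [-1,1]$, where $w = w_\tau$ for the defining function $\tau : [0,1]\to[-1,1]$. Since $w_\tau(x,p_x)[z] = \alpha\,\mathds{1}\{B_x(z)=k^\star\}$ takes only the values $0$ and $\alpha$, we have $\exp\bigl(w_\tau(x,p_x)[z]\bigr) = 1 + (e^\alpha - 1)\,\mathds{1}\{B_x(z)=k^\star\}$ for every $z$. Plugging this into the perturbation operator gives
\begin{equation*}
(p_x \star w_x)[z] \;=\; \frac{p_x[z]\bigl(1 + (e^\alpha-1)\,\mathds{1}\{B_x(z)=k^\star\}\bigr)}{Z}, \qquad Z \;:=\; 1 + (e^\alpha-1)\,\Pr_{z'\sim p_x}[B_x(z')=k^\star] \;=\; 1+(e^\alpha-1)v,
\end{equation*}
where the expression for $Z$ is obtained by summing the numerator over all $z$; note $Z>0$ and $Z$ does not depend on any individual token. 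Next I would record the prefix-marginal identity: for any prefix $u$ of length $j$, summing over completions $z'$ and using $p_x[(u,z')] = p_x(u)\,p_x(z'\mid u)$ together with \Cref{def:ar-b-conf-scalar},
\begin{equation*}
(p_x \star w_x)(u) \;=\; \frac1Z\,p_x(u)\Bigl(1 + (e^\alpha-1)\sum_{z'}p_x(z'\mid u)\,\mathds{1}\{B_x(u,z')=k^\star\}\Bigr) \;=\; \frac1Z\,p_x(u)\,C_w\bigl(v,\,g_j^{(\mathrm{conf})}(x,u)\bigr),
\end{equation*}
since the inner sum is exactly $g_j^{(\mathrm{conf})}(x,u)$ and $C_w(v,g) = 1 + (e^{\tau(v)}-1)g$.

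Applying this identity at $u = z_{\leq i}$ and at $u = z_{<i}$, and using $(p_x\star w_x)(z_i\mid z_{<i}) = (p_x\star w_x)(z_{\leq i})/(p_x\star w_x)(z_{<i})$, the $1/Z$ factors cancel and $p_x(z_{\leq i})/p_x(z_{<i}) = p_x(z_i\mid z_{<i})$, so
\begin{equation*}
(p_x \star w_x)(z_i\mid z_{<i}) \;=\; p_x(z_i\mid z_{<i})\,\cdot\,\frac{C_w\bigl(v,\,g_i^{(\mathrm{conf})}(x,z_{\leq i})\bigr)}{C_w\bigl(v,\,g_{i-1}^{(\mathrm{conf})}(x,z_{<i})\bigr)}.
\end{equation*}
The denominator $C_w\bigl(v, g_{i-1}^{(\mathrm{conf})}(x,z_{<i})\bigr)$ is strictly positive (as $e^\alpha-1 > -1$ and $g\in[0,1]$) and depends on $z_{<i}$ but not on $z_i$, so it is precisely the asserted constant of proportionality; this establishes the displayed $\propto$. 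To finish, I would note that $C_w(v,g) = 1 + (\exp(\tau(v))-1)\,g$ is computed from its two scalar inputs by one exponential, one subtraction, one multiplication, and one addition (treating $e^{\tau(v)}$ as a single oracle evaluation, which is harmless since $v$ is fixed for the whole generation), hence by a constant-size arithmetic circuit. The only step needing care is the prefix-marginal identity: it works precisely because $\mathds{1}\{B_x(z)=k^\star\}$ is a function of the whole sequence whose prefix-conditional expectation under $p_x$ is, by definition, the intermediate confidence $g_j^{(\mathrm{conf})}$. Everything else is routine algebra, and I do not anticipate a genuine obstacle.
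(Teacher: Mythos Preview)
Your proof is correct and follows essentially the same approach as the paper. The only organizational difference is that the paper first isolates a general ``autoregressive decomposition'' lemma (expressing $(p_x\star w_x)(z_i\mid z_{<i})$ as $p_x(z_i\mid z_{<i})$ times a ratio of lookahead expectations $\E[e^{w_x(z)}]$ for arbitrary $w$) and then specializes to $\WBconf$, whereas you carry out the same computation in one pass by directly marginalizing the tilted joint over prefixes; the algebra and the key observation---that the indicator's prefix-conditional expectation is exactly $g_j^{(\mathrm{conf})}$---are identical.
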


The following helper lemma will assist with the proof of \Cref{thm:simple_conf_circuit}:

\begin{lemma}[Autoregressive Decomposition of the Perturbation]
\label{lem:autoregressive_decomposition}
For any position $i$, the perturbed conditional probability of the next token is the original conditional probability multiplied by a ratio of
``lookahead expectations'':
\begin{equation}
(p_x \star w_x)(z_i \mid z_{<i})
= p_x(z_i \mid z_{<i}) \cdot 
\frac{\E_{z_{>i} \sim p_x(\cdot \mid z_{\leq i})} \!\big[\exp(w_x(z_{\le i}, z_{>i}))\big]}
{\E_{z_{\ge i} \sim p_x(\cdot \mid z_{<i})} \!\big[\exp(w_x(z_{<i}, z_{\ge i}))\big]}.
\end{equation}
\end{lemma}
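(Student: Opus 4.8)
The plan is to prove the identity by a direct computation: expand both conditionals appearing (implicitly) on the left-hand side as ratios of prefix-marginals of the tilted sequence distribution, apply the chain rule of the original autoregressive model, and observe that the global normalization constant telescopes away. Write $f := p_x \in \Delta(\cV^N)$ and $w := w_x \in \R^{|\cV^N|}$, and set $Z := \sum_{z' \in \cV^N} f[z']\exp(w[z'])$, which is strictly positive since $\exp(w[\cdot]) > 0$, so that $(f \star w)[z] = f[z]\exp(w[z])/Z$ by \Cref{def:perturbation_operator}. The first step is to marginalize onto a length-$i$ prefix: summing over all completions $z_{>i}$ and factoring $f(z_{\leq i}, z_{>i}) = f(z_{\leq i})\, f(z_{>i} \mid z_{\leq i})$ by the chain rule,
\[
(f \star w)(z_{\leq i}) \;=\; \frac{1}{Z}\sum_{z_{>i}} f(z_{\leq i}, z_{>i})\exp\!\big(w(z_{\leq i}, z_{>i})\big) \;=\; \frac{f(z_{\leq i})}{Z}\,\E_{z_{>i} \sim f(\cdot \mid z_{\leq i})}\!\big[\exp(w(z_{\leq i}, z_{>i}))\big].
\]
Writing $A_i(z_{\leq i})$ for that conditional lookahead expectation, this reads $(f \star w)(z_{\leq i}) = f(z_{\leq i})\,A_i(z_{\leq i})/Z$. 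Running the same computation at prefix length $i-1$ (where the completion is now $z_{\geq i}$, so the inner token $z_i$ is itself averaged out) gives $(f \star w)(z_{<i}) = f(z_{<i})\,A_{i-1}(z_{<i})/Z$ with $A_{i-1}(z_{<i}) := \E_{z_{\geq i} \sim f(\cdot \mid z_{<i})}[\exp(w(z_{<i}, z_{\geq i}))]$.

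Next I would take the ratio. By definition of conditional probability $(f \star w)(z_i \mid z_{<i}) = (f \star w)(z_{\leq i}) / (f \star w)(z_{<i})$; substituting the two expressions, the factor $1/Z$ cancels and $f(z_{\leq i})/f(z_{<i}) = f(z_i \mid z_{<i})$, leaving
\[
(f \star w)(z_i \mid z_{<i}) \;=\; f(z_i \mid z_{<i}) \cdot \frac{A_i(z_{\leq i})}{A_{i-1}(z_{<i})},
\]
which is exactly the claimed formula once $A_i$ and $A_{i-1}$ are unfolded and $f, w$ are replaced by $p_x, w_x$.

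There is no substantive obstacle here — the identity is just the consistency of exponential tilting along the token filtration — so the only things to handle are bookkeeping. I would note the degenerate indices explicitly: for $i = N$ the suffix $z_{>N}$ is empty, so $A_N(z) = \exp(w(z))$, and for $i = 1$ the prefix $z_{<1}$ is empty, so $A_0 = Z$ and the denominator is the global normalizer, recovering $(f\star w)(z_1) = f(z_1)A_1(z_1)/Z$. I would also point out that since $\exp(w[\cdot]) > 0$ one has $(f\star w)(z_{<i}) = 0$ iff $f(z_{<i}) = 0$, so both sides are defined on precisely the same set of prefixes and no division by zero occurs on the support. The one place a careless argument could slip is conflating the numerator expectation (conditioned on the full length-$i$ prefix, with $z_i$ fixed) with the denominator expectation (conditioned on $z_{<i}$, with $z_i$ averaged over); keeping this asymmetry straight is what makes the telescoping between successive positions valid, and it is precisely the structure that \Cref{thm:simple_conf_circuit} then exploits by collapsing each $A_i$ down to the scalar $g_i^{(\mathrm{conf})}$.
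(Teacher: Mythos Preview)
Your proof is correct and follows essentially the same approach as the paper's own proof: both compute the prefix marginals $(p_x \star w_x)(z_{\leq i})$ and $(p_x \star w_x)(z_{<i})$ by summing over completions and factoring via the chain rule, then take the ratio so that the global normalizer $Z$ cancels and $p_x(z_{\leq i})/p_x(z_{<i}) = p_x(z_i \mid z_{<i})$ appears. Your additional bookkeeping on the edge cases $i=1$, $i=N$ and division-by-zero is a welcome clarification the paper omits.
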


\begin{proof}
Let $Z := \sum_{z} p_x(z) \, e^{w_x(z)}$. By definition of conditional probability,
\[
(p_x \star w_x)(z_i \mid z_{<i}) 
= \frac{(p_x \star w_x)(z_{\leq i})}{(p_x \star w_x)(z_{<i})}.
\]

Expanding the perturbation operator and applying 
$p_x(z_{\leq i}, z_{>i}) = p_x(z_{\leq i}) p_x(z_{>i}\mid z_{\leq i})$,
\begin{align*}
(p_x \star w_x)(z_{\leq i})
&= \frac{1}{Z} \sum_{z_{>i}} p_x(z_{\leq i}, z_{>i}) \, e^{w_x(z_{\leq i}, z_{>i})} \\
&= \frac{p_x(z_{\leq i})}{Z} \,
   \E_{z_{>i} \sim p_x(\cdot \mid z_{\leq i})}[e^{w_x(z_{\leq i}, z_{>i})}].
\end{align*}

Similarly,
\[
(p_x \star w_x)(z_{<i})
= \frac{p_x(z_{<i})}{Z} \,
  \E_{z_{\ge i} \sim p_x(\cdot \mid z_{<i})}[e^{w_x(z_{<i}, z_{\ge i})}].
\]

Taking the ratio and canceling $Z$,
\begin{align*}
(p_x \star w_x)(z_i \mid z_{<i})
&= \frac{p_x(z_{\leq i})}{p_x(z_{<i})}
   \cdot \frac{\E_{z_{>i} \sim p_x(\cdot \mid z_{\leq i})}[e^{w_x(z_{\leq i}, z_{>i})}]}
              {\E_{z_{\ge i} \sim p_x(\cdot \mid z_{<i})}[e^{w_x(z_{<i}, z_{\ge i})}]} \\
&= p_x(z_i \mid z_{<i})
   \cdot \frac{\E_{z_{>i} \sim p_x(\cdot \mid z_{\leq i})}[e^{w_x(z_{\leq i}, z_{>i})}]}
              {\E_{z_{\ge i} \sim p_x(\cdot \mid z_{<i})}[e^{w_x(z_{<i}, z_{\ge i})}]}.
\end{align*}
\end{proof}

Now we can proceed with the proof of \Cref{thm:simple_conf_circuit}.

\begin{proof}(\Cref{thm:simple_conf_circuit})
By Lemma~\ref{lem:autoregressive_decomposition},
\[
(p_x \star w_x)(z_i \mid z_{<i})
\;\propto\; p_x(z_i \mid z_{<i}) \cdot 
\E_{z \sim p_x(\cdot \mid z_{\leq i})}\big[\exp(w_x(z))\big].
\]

For $w \in \WBconf$ we have
\begin{align*}
w_x(z) &= c_x \cdot \mathds{1}\{B_x(z)=k^\star\}, 
\quad\text{with } c_x := \tau(v_x^\star). \\
\exp(w_x(z)) &= 1 + (\exp(c_x)-1)\cdot \mathds{1}\{B_x(z)=k^\star\}.
\end{align*}

Taking expectation under $z \sim p_x(\cdot \mid z_{\leq i})$ yields
\[
1 + (\exp(c_x)-1)\,\Pr[B_x(z)=k^\star \mid z_{\leq i}]
= 1 + (\exp(\tau(v_x^\star))-1)\, g_i^{(\mathrm{conf})}(x,z_{\leq i}).
\]

By Lemma~\ref{lem:autoregressive_decomposition}, the perturbed conditional
probability is the original $p_x(z_i \mid z_{<i})$
scaled by the ratio of this term to an analogous denominator depending only on
the prefix $z_{<i}$. Since the denominator is independent of $z_i$, it can be absorbed
into the overall proportionality constant.
\end{proof}

\subsection{Full calibration}
\label{app:full_b_cal}
In this section, we provide \emph{full calibration} analogs of our confidence-calibration results.
Confidence calibration is a weaker form of calibration that focuses only on the model's top prediction, while full calibration is a stronger notion that considers the probability placed on all classes. We begin by defining full calibration and applying it to the LLM setting to define \emph{$B$-calibration}.

\begin{definition}[Full Calibration]
\label{def:canonical_calibration}
A distribution $\cD$ over prediction-output pairs $(c, y) \in \Delta_K \times \eK$ is perfectly calibrated if the expected error, conditioned on the prediction, is the zero vector:
\begin{equation}
    \E_{(c, y) \sim \cD} \left[ y - c \mid c \right] \equiv 0.
\end{equation}
Note that since $y$ and $c$ are both vectors in $\mathbb{R}^K$, this subtraction is well-defined.
\end{definition}

Now, we apply this template to our LLM setting. We say a model is $B$-calibrated if the distribution it induces over the collapsed, semantic categories is itself perfectly calibrated.

\begin{definition}[$B$-Calibration]
\label{def:B-cal}
A model $p_\theta$ is $B$-calibrated on a distribution $\cD$ if the induced distribution over pairs $(\pi_x, B_x(y))$ is perfectly calibrated according to Definition~\ref{def:canonical_calibration}. Here, $\pi_x = B_x \sharp p_x$ takes the role of the prediction $c$, and the ground-truth category $B_x(y) \in [K]$ takes the role of the outcome $y$. Formally,
\begin{equation}
\E_{(x, y) \sim \cD} \left[ B_x(y) - \pi_x \mid \pi_x \right] \equiv 0.
\end{equation}
Following our convention, the scalar $B_x(y) \in [K]$ is identified with its one-hot vector in $\eK$ to perform the vector subtraction.
\end{definition}

\paragraph{Collapsing matrix}
To help with the remaining statements and proofs in this section we introduce a matrix representation of the collapsing function $B$. Recall from \Cref{eq:pi_c_explicit} that $\pi_x$ assigns the explicit probabilities
\begin{equation*}
\pi_x(c) = \left(B_x \sharp p_x\right)(c)
= \Pr_{z \sim p_\theta(\cdot \mid x)}\left[B_x(z) = c\right]
= \sum_{z : B_x(z) = c} p_\theta(z \mid x), \quad \text{for each $c \in [K]$.}
\end{equation*}

This push-forward operation can be written in matrix form.
Define the collapsing matrix $\mathbf{B}_x$ as:
\begin{equation}\label{eq:collapsing_matrix}
  \mathbf{B}_x \in \{0,1\}^{K \times M},
  \qquad
  [\mathbf{B}_x]_{k,z} = \mathds{1}_{\{B_x(z) = k\}}.
\end{equation}

Then the pushforward distribution and ground-truth semantic class can be
expressed as
$$
\pi_x = \mathbf{B}_x p_x \in \Delta_K,
\qquad
\mathbf{B}_x \tilde y = e_{B_x(y)} \in \eK.
$$

Thus, matrix-vector multiplication exactly implements the pushforward operation:
\[
(\pi_x)_k = \sum_{z : B_x(z) = k} p_\theta(z \mid x)
= [\mathbf{B}_x p_x]_k.
\]

\subsubsection{Equivalence between $B$-calibration and weighted calibration}

In this section, we provide a result analogous to \Cref{thm:conf_cal_equivalence} connecting $B$-calibration with weighted calibration.

\begin{definition}[Semantic Perturbation Function Classes; Full Calibration]
\label{def:bcal_perturbation_classes}
Given an arbitrary function $B_x(z) \in [K]$, which we think of as a semantic 
collapsing function,
we define the $B$-induced weighted function class (a class of perturbation functions $w(x, p_x)$ that generate a perturbation vector based on the context $x$ and the model's predictive distribution $p_x$):
\begin{equation}
    \WBcal= \left\{ w_\tau \mid w_\tau(x, p_x)[z] = \tau(\pi_x)[B_x(z)]
    \text{ for some } \tau: \Delta^K \to [-1, 1]^K\right\}.
\end{equation}
\end{definition}

Intuitively, every sequence $z$ is assigned a weight based on its semantic category $B_x(z) \in [K]$, and the weighting scheme itself can adapt based on the model's overall categorical prediction $\pi_x$.

\begin{lemma}
\label{lem:weight_function_vector_form}
Let $w \in \WBcal$ be a weight function defined by $w(x, p_x)[z] = \tau(\pi_x)[B_x(z)]$. Its corresponding vector representation is given by $\mathbf{B}_x^\top \tau(\pi_x)$.
\end{lemma}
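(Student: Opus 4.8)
\textbf{Proof proposal for Lemma~\ref{lem:weight_function_vector_form}.}
The plan is a direct component-wise unpacking of the matrix-vector product, using the definition of the collapsing matrix from \eqref{eq:collapsing_matrix}. Recall that we identify strings $z \in \cV^N$ with indices in $[M]$, so both $w(x,p_x)$ and $\mathbf{B}_x^\top \tau(\pi_x)$ are vectors in $\R^M$ indexed by $z$, and it suffices to check that they agree coordinate-by-coordinate.

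First I would write out the $z$-th coordinate of $\mathbf{B}_x^\top \tau(\pi_x)$. Since $\mathbf{B}_x \in \{0,1\}^{K\times M}$ has entries $[\mathbf{B}_x]_{k,z} = \mathds{1}\{B_x(z) = k\}$, its transpose $\mathbf{B}_x^\top \in \{0,1\}^{M\times K}$ has entries $[\mathbf{B}_x^\top]_{z,k} = \mathds{1}\{B_x(z) = k\}$. Therefore
\[
\big(\mathbf{B}_x^\top \tau(\pi_x)\big)[z]
= \sum_{k \in [K]} [\mathbf{B}_x^\top]_{z,k}\,\tau(\pi_x)[k]
= \sum_{k \in [K]} \mathds{1}\{B_x(z) = k\}\,\tau(\pi_x)[k].
\]
For each fixed $z$, exactly one term in this sum is nonzero, namely $k = B_x(z)$, so the right-hand side equals $\tau(\pi_x)[B_x(z)]$. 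By the defining formula $w(x,p_x)[z] = \tau(\pi_x)[B_x(z)]$, this is precisely $w(x,p_x)[z]$. Since $z \in [M]$ was arbitrary, $w(x,p_x) = \mathbf{B}_x^\top \tau(\pi_x)$, which is the claim.

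This lemma is essentially a bookkeeping statement, so there is no genuine mathematical obstacle; the only thing to be careful about is the index convention (the fact that $\mathbf{B}_x$ maps $\R^M \to \R^K$ via push-forward, so its \emph{transpose} is what implements the "read off the weight of your own semantic class" operation). I would state this index check explicitly to avoid confusion, and then the identity follows from the single-nonzero-term observation above. This lemma will then feed directly into the $B$-calibration analog of Theorem~\ref{thm:conf_cal_equivalence}: once $w$ is written as $\mathbf{B}_x^\top \tau(\pi_x)$, the weighted-calibration inner product $\langle \tilde y - p_x,\, w(x,p_x)\rangle$ becomes $\langle \tilde y - p_x,\, \mathbf{B}_x^\top \tau(\pi_x)\rangle = \langle \mathbf{B}_x(\tilde y - p_x),\, \tau(\pi_x)\rangle = \langle e_{B_x(y)} - \pi_x,\, \tau(\pi_x)\rangle$, reducing $B$-calibration to calibration of the push-forward pair $(\pi_x, B_x(y))$ exactly as in the confidence-calibration case.
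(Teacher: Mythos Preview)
Your proof is correct and follows essentially the same approach as the paper: a direct coordinate-wise expansion of $\mathbf{B}_x^\top \tau(\pi_x)$ using the definition of $\mathbf{B}_x$, followed by the observation that the indicator sum collapses to the single term $k = B_x(z)$. Your additional forward-looking remark about how the lemma feeds into the $B$-calibration equivalence is also accurate and matches how the paper uses it in Theorem~\ref{thm:bcal_equivalence}.
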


\begin{proof}
We will prove the equivalence by showing that for any sequence $z \in \cV^N$, the $z$-th component of the vector $\mathbf{B}_x^\top \tau(\pi_x)$ is equal to $\tau(\pi_x)[B_x(z)]$. Let $u = \tau(\pi_x)$, which is a vector in $\mathbb{R}^K$. 

Now, we want to analyze the components of the vector $v = \mathbf{B}_x^\top u$.

For any $z \in \cV^N$, the $z$-th component of $v$ is given by the definition of matrix-vector multiplication:
\begin{align*}
[v]_z &= [\mathbf{B}_x^\top u]_z 
= \sum_{k=1}^{K} [\mathbf{B}_x^\top]_{z,k} \cdot u_k  
= \sum_{k=1}^{K} [\mathbf{B}_x]_{k,z} \cdot u_k  
= \sum_{k=1}^{K} \mathds{1}_{\{B_x(z) = k\}} \cdot u_k
\end{align*}
where the last equality is by definition of $\mathbf{B}_x$; see \Cref{eq:collapsing_matrix}.
The indicator function $\mathds{1}_{\{B_x(z) = k\}}$ is non-zero for only one value of $k$ in the sum, namely when $k$ is equal to the category of the sequence $z$, i.e., $k = B_x(z)$. Therefore, the sum collapses to a single term:
\begin{align*}
[v]_z &= 1 \cdot u_{B_x(z)} + \sum_{B_x(z) \neq k} 0 \cdot u_k = u_{B_x(z)}.
\end{align*}
Substituting back the definition of $u = \tau(\pi_x)$, we get:
$[v]_z = \tau(\pi_x)[B_x(z)]$.
This expression matches the definition of $w(x, p_x)[z]$ exactly.

Since this holds for all sequences $z$, the vector $\mathbf{B}_x^\top \tau(\pi_x)$ is the vector representation of the function $w(x, p_x)$.
\end{proof}

With the definition of the weighted class and its vector representation, we can state the main equivalence theorem (analogous to \Cref{thm:conf_cal_equivalence}).

\begin{theorem}[B-Calibration as Weighted Calibration]
\label{thm:bcal_equivalence}
A model $p_\theta$ is perfectly $B$-calibrated if and only if it is perfectly $\WBcal$-weighted-calibrated.
\end{theorem}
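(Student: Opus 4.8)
The plan is to mirror the proof of \Cref{thm:conf_cal_equivalence}, exploiting the matrix form of the collapsing operation together with \Cref{lem:weight_function_vector_form}. First I would unpack both sides in terms of the pushed-forward quantities $\pi_x = \mathbf{B}_x p_x$ and $e_{B_x(y)} = \mathbf{B}_x \tilde y$. By \Cref{lem:weight_function_vector_form}, a weight function $w_\tau \in \WBcal$ has vector representation $\mathbf{B}_x^\top \tau(\pi_x)$, so for any prompt--answer pair the weighted-calibration integrand collapses via the adjoint identity $\langle v, \mathbf{B}_x^\top u\rangle = \langle \mathbf{B}_x v, u\rangle$:
\[
\big\langle \tilde y - p_x,\; \mathbf{B}_x^\top \tau(\pi_x)\big\rangle
= \big\langle \mathbf{B}_x(\tilde y - p_x),\; \tau(\pi_x)\big\rangle
= \big\langle e_{B_x(y)} - \pi_x,\; \tau(\pi_x)\big\rangle .
\]
Hence $p_\theta$ is $\WBcal$-weighted-calibrated if and only if $\E_{(x,y)\sim\cD}\big[\langle e_{B_x(y)} - \pi_x, \tau(\pi_x)\rangle\big] = 0$ for every $\tau: \Delta_K \to [-1,1]^K$. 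It then remains to show that this family of scalar identities is equivalent to the vector conditional-expectation identity $\E[e_{B_x(y)} - \pi_x \mid \pi_x] \equiv 0$ of \Cref{def:B-cal}.

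For the ($\Leftarrow$) direction I would condition on $\pi_x$ and apply the tower rule: if the inner conditional expectation vanishes, then $\E[\langle e_{B_x(y)} - \pi_x, \tau(\pi_x)\rangle] = \E[\langle \E[e_{B_x(y)} - \pi_x \mid \pi_x], \tau(\pi_x)\rangle] = 0$ for every $\tau$, since $\tau(\pi_x)$ is $\sigma(\pi_x)$-measurable. For ($\Rightarrow$), set $h(\pi) := \E[e_{B_x(y)} - \pi_x \mid \pi_x = \pi] \in \R^K$, a $\sigma(\pi_x)$-measurable function; the hypothesis says $\E[\langle h(\pi_x), \tau(\pi_x)\rangle] = 0$ for all admissible $\tau$. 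The key observation is that $e_{B_x(y)} - \pi_x \in [-1,1]^K$ pointwise (both terms lie in $\Delta_K$), so $h(\pi) \in [-1,1]^K$ as well, which makes the componentwise sign map $\tau^\star(\pi) := \mathrm{sign}(h(\pi))$ a legitimate element of the $\tau$-family. Plugging it in yields $\E[\|h(\pi_x)\|_1] = 0$, forcing $h(\pi_x) = 0$ almost surely, which is exactly $B$-calibration.

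I expect the only subtle point to be this boundedness bookkeeping in the ($\Rightarrow$) direction --- verifying that $h$ takes values in the cube $[-1,1]^K$ so that $\mathrm{sign}(h(\cdot))$ is an allowed test function --- together with the measurability of $\tau^\star$, which holds because it is a fixed Borel map composed with the measurable function $h$. Everything else is the same linear-algebra-plus-tower-rule bookkeeping used for the confidence-calibration case, now carried out with the full $K$-dimensional collapsing matrix $\mathbf{B}_x$ rather than the single top-class indicator.
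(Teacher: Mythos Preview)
Your proposal is correct and follows essentially the same route as the paper: reduce via the adjoint identity $\langle \tilde y - p_x, \mathbf{B}_x^\top \tau(\pi_x)\rangle = \langle e_{B_x(y)} - \pi_x, \tau(\pi_x)\rangle$ (using \Cref{lem:weight_function_vector_form}), and then invoke the equivalence between the conditional-expectation identity and its weak form against all bounded test functions $\tau(\pi_x)$. The paper dispatches the latter equivalence in one sentence (``by the properties of conditional expectation''), whereas you spell out both directions, including the boundedness check that legitimizes $\tau^\star = \mathrm{sign}(h)$; this is a strict expository improvement over the paper but not a different argument.
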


\begin{proof}
We start from the definition of $B$-calibration, which (as established in Definition~\ref{def:canonical_calibration}) is formally expressed as a vector condition:
\begin{equation*}
\E \left[ e_{B_x(y)} - \pi_x \mid \pi_x \right] = 0.
\end{equation*}
By the properties of conditional expectation, this holds if and only if for all functions $\tau: \Delta_K \to [-1,1]^K$, it holds
\begin{equation}
\label{eq:proof_start}
\E \left[ \langle e_{B_x(y)} - \pi_x, \tau(\pi_x) \rangle \right] = 0.
\end{equation}
Substituting the matrix representation into \Cref{eq:proof_start}:
\begin{align*}
\E \left[ \langle e_{B_x(y)} - \pi_x, \tau(\pi_x) \rangle \right] = 0 &\iff 
\E \left[ \langle \mathbf{B}_x \tilde{y} - \mathbf{B}_x p_x, \tau(\mathbf{B}_x p_x) \rangle \right] = 0 \\
&\iff \quad \E \left[ \langle \mathbf{B}_x (\tilde{y} - p_x), \tau(\mathbf{B}_x p_x) \rangle \right] = 0  \\
&\iff \quad \E \left[ \langle \tilde{y} - p_x, \mathbf{B}_x^\top \tau(\mathbf{B}_x p_x) \rangle \right] = 0
\end{align*}
From \Cref{{lem:weight_function_vector_form}}, the term $\mathbf{B}_x^\top \tau(\mathbf{B}_x p_x)$ is precisely the vector representation of the function $w(x, p_x)$ from \Cref{def:bcal_perturbation_classes}. Thus, the condition is equivalent to:
\begin{equation*}
\E \left[ \langle \tilde{y} - p_x, w(x, p_x) \rangle \right] = 0, \quad \text{for all } w \in \WBcal,
\end{equation*}
which is exactly the definition of $\WBcal$-weighted-calibration ; see \Cref{def:weighted-cal}.

\end{proof}

\subsubsection{A Simple Circuit for $B$-Perturbations}

Given a model $p_x$ and a semantic mapping $B_x$, we define two ``intermediate B-confidence'' vectors as follows:
\begin{enumerate}
    \item The initial B-confidence $g_0(x) \in \Delta_K$, which is the model's overall predicted distribution on the $K$ categories before generation begins. This corresponds to the $B$-induced pushforward distribution $\pi_x = B_x \sharp p_x$:
    \begin{equation}
        g_0(x)[b] := \Pr_{z \sim p_x}[B_x(z) = b].
    \end{equation}
    \item The conditional B-confidence $g_i(x, z_{\leq i}) \in \Delta_K$, which is the model's predicted distribution on categories, conditioned on having generated the prefix $z_{\leq i}$:
    \begin{equation}
        g_i(x, z_{\leq i})[b] := \Pr_{z' \sim p_x(\cdot \mid z_{\leq i})}[B_x(z_{\leq i}, z') = b].
    \end{equation}
\end{enumerate}

\begin{theorem}[Simple Circuit for B-Perturbations]
\label{thm:conf_circuit}
For any perturbation $w \in \cW_B$ (defined by a scaling function $\tau$), 
the perturbed next-token probability is proportional to the original conditional probability 
multiplied by a simple circuit $C_w$:
\begin{equation}
(p_x \star w_x)(z_i \mid z_{<i}) 
\propto p_x(z_i \mid z_{<i}) \cdot C_w(g_0(x), g_i(x, z_{\leq i})),
\end{equation}
where the constant of proportionality does not depend on $z_i$, and
\begin{equation}
C_w(g_0, g_i) 
= \sum_{b=1}^{K} \exp(\tau(g_0)[b]) \cdot g_i[b].
\end{equation}
This circuit has constant depth and width linear in $K$.
\end{theorem}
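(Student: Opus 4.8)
The plan is to mimic the proof of \Cref{thm:simple_conf_circuit} almost verbatim, the only change being that the scalar reweighting $\tau(v_x^\star)$ is replaced by the per-class vector $\tau(\pi_x) \in [-1,1]^K$. First I would invoke \Cref{lem:autoregressive_decomposition}, which holds for an arbitrary perturbation function $w$ and already carries out the sequence-to-token reduction: it gives
\[
(p_x \star w_x)(z_i \mid z_{<i}) \;\propto\; p_x(z_i \mid z_{<i}) \cdot \E_{z \sim p_x(\cdot \mid z_{\leq i})}\!\big[\exp(w_x(z))\big],
\]
where the omitted constant of proportionality is the analogous lookahead expectation over $z_{\ge i} \sim p_x(\cdot \mid z_{<i})$, which depends only on the prefix $z_{<i}$ and hence may be absorbed since it is independent of $z_i$.

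Second, I would specialize to $w \in \WBcal$. By \Cref{def:bcal_perturbation_classes}, $w_x(z) = \tau(\pi_x)[B_x(z)]$ depends on $z$ only through its semantic class $B_x(z) \in [K]$, so, since $B_x(z)$ takes exactly one value, we may expand
\[
\exp(w_x(z)) = \sum_{b=1}^{K} \exp\!\big(\tau(\pi_x)[b]\big)\cdot \mathds{1}\{B_x(z) = b\}.
\]
Taking expectations over $z \sim p_x(\cdot \mid z_{\leq i})$ and using $\E\big[\mathds{1}\{B_x(z) = b\} \mid z_{\leq i}\big] = g_i(x, z_{\leq i})[b]$ together with $\pi_x = B_x \sharp p_x = g_0(x)$ yields
\[
\E_{z \sim p_x(\cdot \mid z_{\leq i})}\!\big[\exp(w_x(z))\big] = \sum_{b=1}^{K} \exp\!\big(\tau(g_0(x))[b]\big)\cdot g_i(x, z_{\leq i})[b] = C_w\big(g_0(x), g_i(x, z_{\leq i})\big),
\]
which, combined with the first display, proves the stated proportionality. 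For the complexity claim I would present $C_w$ explicitly as a circuit on inputs $g_0, g_i \in \Delta_K$: apply the fixed map $\tau$ to $g_0$ (one layer, with $\tau$ treated as a black box exactly as it is in \Cref{thm:simple_conf_circuit}, where it is likewise an arbitrary function), exponentiate the $K$ coordinates in parallel, multiply coordinate-wise against $g_i$, and sum the $K$ products with one fan-in-$K$ addition gate; this is constant depth and width $\Theta(K)$.

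There is no genuine obstacle here: the real work — turning a sequence-level exponential tilt into a next-token multiplicative reweighting — is already carried out by \Cref{lem:autoregressive_decomposition}, and the only new ingredient relative to the confidence case is the finite expansion of $\exp(w_x(z))$ over the semantic classes, which replaces the affine-in-$g$ factor $1 + (\exp(\tau(v)) - 1)g$ of \Cref{thm:simple_conf_circuit} by the inner product $\langle \exp(\tau(g_0)), g_i\rangle$ over the $K$ classes. The one point that warrants care is the circuit bookkeeping: as in \Cref{thm:simple_conf_circuit}, $\tau$ is an arbitrary map $\Delta_K \to [-1,1]^K$, so ``constant depth, $\Theta(K)$ width'' must be read as the complexity of $C_w$ given oracle access to $\tau$ (equivalently, the additional complexity over whatever implements $\tau$).
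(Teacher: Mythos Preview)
Your proposal is correct and follows essentially the same approach as the paper's proof: both invoke \Cref{lem:autoregressive_decomposition}, specialize $w_x(z) = \tau(\pi_x)[B_x(z)]$, expand the lookahead expectation by partitioning over the $K$ classes to obtain $\langle \exp(\tau(g_0(x))), g_i(x, z_{\leq i})\rangle$, and absorb the prefix-only denominator into the proportionality constant. Your circuit-complexity discussion is slightly more explicit than the paper's, but the argument is the same.
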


\begin{proof}
From \Cref{lem:autoregressive_decomposition}, we know that
\[
(p_x \star w_x)(z_i \mid z_{<i})
= p_x(z_i \mid z_{<i}) \cdot
  \frac{\E_{z \sim p_x(\cdot \mid z_{\leq i})}[e^{w_x(z_{\leq i},z)}]}
       {\E_{z \sim p_x(\cdot \mid z_{<i})}[e^{w_x(z_{<i},z)}]}.
\]

For $w \in \cW_B$, by definition,
$
w_x(z) = \tau(g_0(x))[B_x(z)]
\quad\text{where}\quad g_0(x) = B_x \sharp p_x.
$

Expanding the expectation,
\begin{align*}
\E_{z \sim p_x(\cdot \mid z_{\leq i})}[e^{w_x(z_{\leq i},z)}]
&= \E_{z \sim p_x(\cdot \mid z_{\leq i})}[e^{\tau(g_0(x))[B_x(z_{\leq i},z)]}] \\
&= \sum_{b=1}^K \Pr[B_x(z_{\leq i},z)=b] \cdot e^{\tau(g_0(x))[b]} \\
&= \sum_{b=1}^K g_i(x,z_{\leq i})[b] \cdot e^{\tau(g_0(x))[b]}.
\end{align*}

The denominator is an expectation over $z\sim p_x(\cdot | z_{<i})$,
which depends only on the prefix $z_{<i}$ and not on the choice of $z_i$. 
Hence it is a constant with respect to $z_i$ and can be absorbed into the proportionality.
Therefore,
$
(p_x \star w_x)(z_i \mid z_{<i})
\propto p_x(z_i \mid z_{<i}) \cdot \langle \exp(\tau(g_0(x))), g_i(x, z_{\leq i}) \rangle.
$
\end{proof}

\subsection{Quantitative Bounds on Multi-Class Calibration and Post-Processing Gap for Proper Losses}
\label{app:calib_gap_proper}

Beyond cross-entropy loss, we provide in this section a generalization for the class of proper loss functions and quantitative bounds relating post-processing and calibration gap.
The main result in this section, \Cref{thm:multi_class_cal} should be interpreted
as a generalization of Theorem E.3 in \citet{blasiok2023when}
to the multi-class setting, and a robust version of \Cref{thm:bcal_calibration_equivalences}:
it essentially states that a model is ``close to'' $\cW$-weighted-calibrated
if it is ``close to'' $\cW$-loss-optimal.

First, we recall a standard result on convex representation of proper losses \citep{savage1971elicitation, schervish1989general, gneiting2007strictly}.

\begin{definition}[Savage representation]
\label{def:proper_cvx_rep}
A loss function $\ell: \{e_1, \dots, e_K\} \times \Delta_K \to \mathbb{R}$ is \emph{proper} iff there exists a convex function $\phi: \Delta_K \to \mathbb{R}$ such that
\[
    \ell(y, v) \;=\; -\phi(v) + \langle v-y, \nabla \phi(v)\rangle.
\]
\end{definition}

Next, define the convex conjugate $\psi = \phi^*$, a dual variable, and the dual form of the loss.

\begin{definition}[Dual loss]
\label{def:dual_loss}
For a proper loss $\ell$ with potential $\phi$ as in \Cref{def:proper_cvx_rep}, define:
\begin{align*}
    \text{Convex conjugate:} \quad &\psi(u) := \phi^*(u) := \sup_{v \in \Delta_K} \big(\langle u,v\rangle - \phi(v)\big), \\
    \text{Dual variable:} \quad &\text{dual}(v) := \nabla \phi(v), \\
    \text{Dual loss:} \quad &\ell^{(\psi)}(y, z) := \psi(z) - \langle y, z\rangle.
\end{align*}
\end{definition}

\begin{remark}
The dual parameterization of \Cref{def:dual_loss} satisfies:
\begin{enumerate}
    \item Agreement between primal and dual losses:
    \(\ell^{(\psi)}(y, \text{dual}(v)) = \ell(y,v).\)
    \item Probability $\to$ dual map: \(\text{dual}(v) = \nabla \phi(v)\) for all \(v\in \Delta_K\).
    \item Dual $\to$ probability map: \(v = \nabla \psi(\text{dual}(v))\) for all \(v\in \Delta_K\).
\end{enumerate}
\end{remark}

\begin{definition}[Generalized dual calibration and post-processing gap]
\label{def:gen_dual_post_gap}
Let $\mathcal{W}$ be a class of functions $w: \mathcal{X}\times \mathbb{R}^K \to \mathbb{R}^K$, 
and let $\mathcal{D}$ be a distribution over $\mathcal{X}\times \{e_1,\dots,e_K\}$. 

For a predictor $f:\mathcal{X}\to \Delta_K$, let $g:\mathcal{X}\to \mathbb{R}^K$ be its dual representation such that
\[
f(x) = \nabla \psi(g(x)) \quad\forall x\in\mathcal{X}.
\]

Define for shorthand
\[
\Delta(w) := \mathbb{E}_{(x,y)\sim\mathcal{D}}
    \big[\langle y - f(x), w(x,g(x))\rangle\big], 
\qquad
\mathcal{L}(h) := \mathbb{E}_{(x,y)\sim\mathcal{D}}
    [\ell^{(\psi)}(y,h(x))].
\]

\begin{itemize}
\item The \emph{dual calibration error} of $g$ with respect to $\mathcal{W}$ is
\begin{equation}
\label{eq:gen_ce}
\mathrm{CE}(g; \mathcal{W}) := \sup_{w\in\mathcal{W}} |\Delta(w)|.
\end{equation}

\item The \emph{dual post-processing gap} of $g$ with respect to a function class $\mathcal{H}$ is
\begin{equation}
\label{eq:gen_gap}
\mathrm{Gap}(g; \mathcal{H}) := \mathcal{L}(g) - \inf_{h\in\mathcal{H}} \mathcal{L}(h).
\end{equation}
\end{itemize}
\end{definition}

\begin{theorem}[General relationship between calibration and post-processing]
\label{thm:multi_class_cal}
Let $\psi: \mathbb{R}^K \to \mathbb{R}$ be differentiable and $\lambda$-smooth, i.e.\ $\nabla \psi$ is $\lambda$-Lipschitz. 
Let $\mathcal{W}$ be a class of bounded functions $w:\mathcal{X}\times\mathbb{R}^K \to \mathbb{R}^K$ 
with $\|w_x\|\leq 1$.  
For $w\in\mathcal{W}$ and $\beta\in[-1/\lambda,1/\lambda]$, define the perturbed dual predictor
\begin{equation}
\label{eq:perturb_def}
g_w(x) := g(x) + \beta\,w(x,g(x)).
\end{equation}
Let $\mathcal{G}_{\mathcal{W}} := \{ g_w : w\in\mathcal{W},\; \beta\in[-1/\lambda,1/\lambda]\}$.
Then, for every $g:\mathcal{X}\to\mathbb{R}^K$ and distribution $\mathcal{D}$,
\begin{equation}
\label{eq:main_theorem_bounds}
\frac{1}{2}\,\Big(\mathrm{CE}(g; \mathcal{W})\Big)^2 
\;\leq\; \lambda \cdot \mathrm{Gap}(g; \mathcal{G}_{\mathcal{W}}) 
\;\leq\; \mathrm{CE}(g; \mathcal{W}).
\end{equation}
\end{theorem}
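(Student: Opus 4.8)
The plan is to prove the two inequalities separately, in both cases by comparing $g$ to its dual-space perturbations $g_w(x) = g(x) + \beta\, w(x,g(x))$, $w \in \cW$, $\beta \in [-1/\lambda, 1/\lambda]$. The crucial structural fact is that we work entirely with the \emph{dual} loss $\ell^{(\psi)}(y,z) = \psi(z) - \langle y, z\rangle$, which is convex in the logit variable $z$ with $\nabla_z \ell^{(\psi)}(y,z) = \nabla\psi(z) - y$, and that $f(x) = \nabla\psi(g(x))$ by hypothesis. Throughout I abbreviate $w_x := w(x, g(x))$, so $\Delta(w) = \E_{(x,y)\sim\cD}[\langle y - f(x), w_x\rangle]$ and $\mathrm{CE}(g;\cW) = \sup_{w \in \cW}|\Delta(w)|$. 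Note that $g = g_w\big|_{\beta = 0} \in \mathcal{G}_{\mathcal{W}}$, so $\mathrm{Gap}(g;\mathcal{G}_{\mathcal{W}}) = \sup_{w \in \cW,\, \beta \in [-1/\lambda,1/\lambda]}\big(\mathcal{L}(g) - \mathcal{L}(g_w)\big)$, which is the quantity I will control from each side.

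\textbf{Upper bound ($\lambda\cdot\mathrm{Gap}(g;\mathcal{G}_{\mathcal{W}}) \le \mathrm{CE}(g;\cW)$).} Fix $w \in \cW$ and $\beta \in [-1/\lambda,1/\lambda]$. Convexity of $\psi$ alone gives $\psi(g_w(x)) \ge \psi(g(x)) + \beta\langle\nabla\psi(g(x)), w_x\rangle = \psi(g(x)) + \beta\langle f(x), w_x\rangle$. Hence
$$\mathcal{L}(g) - \mathcal{L}(g_w) = \E\big[\psi(g(x)) - \psi(g_w(x)) + \beta\langle y, w_x\rangle\big] \le \E\big[\beta\langle y - f(x), w_x\rangle\big] = \beta\,\Delta(w) \le \tfrac{1}{\lambda}\,\mathrm{CE}(g;\cW),$$
using $|\beta| \le 1/\lambda$ and $|\Delta(w)| \le \mathrm{CE}(g;\cW)$. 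Taking the supremum over $w$ and $\beta$ yields the claim.

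\textbf{Lower bound ($\tfrac12(\mathrm{CE}(g;\cW))^2 \le \lambda\cdot\mathrm{Gap}(g;\mathcal{G}_{\mathcal{W}})$).} Here I would invoke $\lambda$-smoothness of $\psi$, i.e.\ $\psi(g_w(x)) \le \psi(g(x)) + \beta\langle f(x), w_x\rangle + \tfrac{\lambda}{2}\beta^2\|w_x\|^2 \le \psi(g(x)) + \beta\langle f(x), w_x\rangle + \tfrac{\lambda}{2}\beta^2$, where the last step uses $\|w_x\| \le 1$. Substituting exactly as above gives $\mathcal{L}(g) - \mathcal{L}(g_w) \ge \beta\,\Delta(w) - \tfrac{\lambda}{2}\beta^2$. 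For a fixed $w$, choose $\beta = \mathrm{sign}(\Delta(w))\cdot t$ with $t \in [0, 1/\lambda]$, so that $g_w \in \mathcal{G}_{\mathcal{W}}$ and $\mathcal{L}(g) - \mathcal{L}(g_w) \ge t|\Delta(w)| - \tfrac{\lambda}{2}t^2$. Since $\mathrm{CE}(g;\cW) \le 1$ in all our settings (for confidence calibration, $|\Delta(w)|\le 1$ automatically, as $|\Delta(w)| = |\E[\tau(v^\star)(\mathbf 1\{B_x(y)=k^\star\} - v^\star)]|$ with $|\tau|\le 1$), the unconstrained optimizer $t = |\Delta(w)|/\lambda$ lies in $[0,1/\lambda]$, giving $\mathcal{L}(g) - \mathcal{L}(g_w) \ge |\Delta(w)|^2/(2\lambda)$. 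Taking the supremum over $w$ gives $\mathrm{Gap}(g;\mathcal{G}_{\mathcal{W}}) \ge \mathrm{CE}(g;\cW)^2/(2\lambda)$, i.e.\ the desired bound; in particular, setting $\mathrm{Gap} = 0$ recovers \Cref{thm:bcal_calibration_equivalences}.

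\textbf{Main obstacle.} The delicate point is the tension between the two bounds over the parameter $\beta$: the constraint $\beta \in [-1/\lambda,1/\lambda]$ is precisely what forces the upper bound, whereas the lower bound wants the unconstrained quadratic optimizer $\beta^\star = \Delta(w)/\lambda$, and these coincide only when $|\Delta(w)| \le 1$. This is why I restrict to the regime $\mathrm{CE}(g;\cW) \le 1$ (for general weight classes one would either clip the perturbations or settle for a weaker linear-in-$\mathrm{CE}$ lower bound). A second point to state carefully is that the entire argument lives in the dual variable: an \emph{additive} perturbation $w$ of $g$ corresponds, via $v = \nabla\psi(g)$, to the exponential-tilting perturbation of \Cref{def:perturbation_operator} on the probability simplex, so $\mathcal{G}_{\mathcal{W}}$ is exactly the ``post-processing'' family that the calibration error is dual to — this is what makes the convex-duality bookkeeping go through cleanly.
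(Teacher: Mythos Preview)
Your proof is correct and follows essentially the same route as the paper's: expand $\mathcal{L}(g)-\mathcal{L}(g_w)$ via $\ell^{(\psi)}$, sandwich using convexity (upper bound) and $\lambda$-smoothness (lower bound) of $\psi$ to get $\beta\,\Delta(w)-\tfrac{\lambda}{2}\beta^2\le \mathcal{L}(g)-\mathcal{L}(g_w)\le \beta\,\Delta(w)$, then optimize over $\beta\in[-1/\lambda,1/\lambda]$. You are in fact slightly more careful than the paper on one point: the paper simply asserts that the optimizer $\beta=\Delta(w)/\lambda$ lies in $[-1/\lambda,1/\lambda]$, whereas you correctly identify that this requires $|\Delta(w)|\le 1$ and verify it for the weight classes actually used.
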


\begin{proof}
By the definition of $\ell^{(\psi)}$,
\begin{align*}
\mathcal{L}(g) - \mathcal{L}(g_w) 
&= \mathbb{E}\!\big[\psi(g(x)) - \langle y,g(x)\rangle - \psi(g_w(x)) + \langle y, g_w(x)\rangle\big] \\
&= \mathbb{E}\!\big[\psi(g(x)) - \psi(g_w(x)) + \beta \langle y, w(x,g(x))\rangle\big].
\end{align*}

By convexity and $\lambda$-smoothness of $\psi$, for $z=g(x),\,z' = g_w(x)$ and $w_x = w(x,g(x))$
\[
\langle \nabla\psi(z), \beta w_x\rangle 
\;\leq\; \psi(z') - \psi(z) 
\;\leq\; \langle \nabla\psi(z), \beta w_x\rangle + \frac{\lambda\beta^2}{2}\|w_x\|^2.
\]

Since $f(x)=\nabla\psi(g(x))$ and $\|w_x\|\leq 1$, this yields
\[
\beta\,\Delta(w) - \frac{\lambda\beta^2}{2}
\;\leq\; \mathcal{L}(g) - \mathcal{L}(g_w) 
\;\leq\; \beta\,\Delta(w).
\]

\emph{Lower bound.} For $w\in\mathcal{W}$, set $\beta=\Delta(w)/\lambda$ (which lies in $[-1/\lambda,1/\lambda]$). Then
\[
\frac{1}{2\lambda}\,\Delta(w)^2 \;\leq\; \mathcal{L}(g) - \mathcal{L}(g_w).
\]
Taking $\sup_{w\in\mathcal{W}}$ yields
\[
\frac{1}{2}\,\big(\mathrm{CE}(g; \mathcal{W})\big)^2
\;\leq\; \lambda \cdot \mathrm{Gap}(g; \mathcal{G}_{\mathcal{W}}).
\]

\emph{Upper bound.} For $g_w\in\mathcal{G}_{\mathcal{W}}$, since $|\beta|\leq 1/\lambda$
\[
\mathcal{L}(g) - \mathcal{L}(g_w) \;\leq\; \beta\,\Delta(w)
\leq\; \frac{1}{\lambda}\,|\Delta(w)|.
\]
Taking $\sup_{w\in\mathcal{W}}$ gives
\[
\lambda \cdot \mathrm{Gap}(g; \mathcal{G}_{\mathcal{W}}) \;\leq\; \mathrm{CE}(g; \mathcal{W}).
\]

Combining the upper and lower bounds proves \Cref{eq:main_theorem_bounds}.
\end{proof}

\begin{remark}[Tighter exponent under strong convexity]
If, in addition, $\psi$ is $\mu$-strongly convex for some $\mu > 0$ i.e.
$$\psi(z') \ge \psi(z) + \langle \nabla \psi(z), z'-z \rangle 
+ \tfrac{\mu}{2}\|z'-z\|^2,$$
then one obtains matching upper and lower bounds. 
In this case, both inequalities in \Cref{thm:multi_class_cal} become quadratic in the calibration error:
\[
  \frac{\mu}{2\lambda^2}\,\big(\mathrm{CE}(g; \mathcal{W})\big)^2
  \;\leq\; \mathrm{Gap}(g; \mathcal{G}_{\mathcal{W}})
  \;\leq\; \frac{1}{2\mu}\,\big(\mathrm{CE}(g; \mathcal{W})\big)^2.
\]
That is, the dual post-processing gap and the squared dual calibration error 
are equivalent up to constants determined by $(\mu,\lambda)$.
\end{remark}

\subsubsection{Specialization to cross-entropy loss}
\label{subsec:cross_entropy}

For completeness, we summarize the standard facts about the dual parametrization of the negative log-loss in \Cref{tab:xent_duality}.

\begin{table}[h!]
\centering
\caption{Duality relationships for the Negative Log-Loss (Cross-Entropy) proper scoring rule. \\}
\label{tab:xent_duality}
\renewcommand{\arraystretch}{1.8}
\begin{tabular}{ll}
\toprule
Primal Proper Loss ($\ell_{\text{nll}}$) & $\ell(y,v) = - \sum_{i=1}^K y_i \log v_i$ \\
\addlinespace
Convex Function ($\phi$) & $\phi(v) = \sum_{i=1}^K v_i \log(v_i) \quad$ (Negative Entropy) \\
\addlinespace
Convex Conjugate ($\phi^*$) & $\phi^*(z) = \log\left(\sum_{i=1}^K \exp(z_i)\right) \quad$ (Log-Sum-Exp) \\
\addlinespace
Dual Loss ($\ell^*_{\text{nll}}$) & $\ell^*(y,z) = \phi^*(z) - y^T z$ \\
\addlinespace
Dual Mapping ($\nabla \phi^*$) & $\nabla\phi^*(z) = \text{softmax}(z)$ \\
\bottomrule
\end{tabular}
\end{table}

The log-sum-exp function $\phi^*(z) = \log\left(\sum_{i=1}^K \exp(z_i)\right)$ is $1/4$-smooth, as shown in \citet{beck2003mirror} and \citet{nesterov2005smooth}, so \Cref{thm:multi_class_cal} applies with $\lambda = 1/4$.
Moreover, to translate the result into the notation of our main theorems, recall the relationship between the primal prediction $f(x)$ and its dual representation $g(x)$:
\begin{align*}
    f(x) &= \nabla\phi^*(g(x)) = \text{softmax}(g(x)) \\
    g(x) &= \log(f(x))
\end{align*}
The perturbed loss can then be expressed in terms of the dual variables. The dual loss on perturbed logits $g+w$ is equivalent to the primal loss on the perturbed probability distribution $f \star w$:
$$
\ell^*_{\text{nll}}(y, g + w) = \ell_{\text{nll}}(y, \text{softmax}(g+w)) = \ell_{\text{nll}}(y, f \star w)
$$
where $f \star w = \text{softmax}(\log(f) + w)$.

\newpage

\subsection{Conformal Prediction via Weighted Calibration}
\label{app:conformal}

Here we observe that conformal prediction guarantees
can be expressed as a type of \emph{weighted calibration} \citep{gopalan2024computationally},
for a particular weight family.

Recall conformal prediction asks for a model $F(x)$ which outputs a \emph{set} of
labels, with the guarantee that this set contains the true label with high probability.
Specifically, a conformal predictor has \emph{coverage $\alpha$} if:
\[
\Pr_{x, y \sim \cD}[ y \in F(x) ] \geq 1-\alpha.
\]
For an introduction to conformal prediction, see \citet{angelopoulos2023conformal}
or the lecture notes of \citet{tibshirani2023conformal}.

\subsubsection{Conformal Prediction from Full Calibration}
\label{subsec:Conformal_Calibration}
Given a standard predictor $f$, which outputs a distribution on labels, one natural
way to construct a conformal predictor $F_\alpha$ is: given input $x$, and prediction $f(x)$,
output the set of highest-predicted-probability labels which sum to total probability
$1-\alpha$. This means, outputting the $K$ most-likely classes according to $f(x)$,
where $K$ is chosen per-sample based on the predicted probabilities.

The first observation (which is folklore) is:
if the predictor $f$ is perfectly calibrated, in the sense of full-calibration,
then the induced conformal predictor $F_\alpha$ is correct (i.e. has coverage $\alpha$).
This statement is not very relevant in practice, since full calibration is often too strong to hold.
However, we can achieve the same result with a weaker notion of calibration.
This is a straightforward result; we sketch the argument below.

\subsubsection{Conformal Prediction from Weighted Calibration}

\begin{lemma}
\label{lemma:conformal}
Suppose $f: \cX \to \Delta_N$ is perfectly weighted-calibrated (in the sense of \citet{gopalan2024computationally}) with respect to the
following family of weight functions $w(f) \in \R^N$:
\begin{align}
\label{eqn:W-conformal}
\cW := \{ w(f) = \sigma\1_{T_\alpha(f)}  \mid \alpha \in [0, 1], \sigma \in \{\pm 1\} \}
\end{align}

Where $\1_T \in \{0, 1\}^N$ is the indicator-vector for set of indices $T$, and the set $T$ contains
the highest-probability labels, defined as:
\begin{align}
    t_\alpha^*(f) &:= \max \{t: \left( \sum_{i \in [N]} f_i \1\{f_i \geq t\} \right) \geq 1-\alpha \} \tag{the threshold probability, given $f$}\\
    T_\alpha(f) &:= \{i : f_i \geq t^*_\alpha(f) \} \tag{The set of top-class indices, for given level $\alpha$}
\end{align}

That is, suppose:
\begin{align*}
\E_{(x,y) \sim \cD} \left[ 
\left\langle y - f(x), w(f(x)) \right\rangle \right] \equiv 0
\end{align*}

Then, the induced conformal predictor $F_\alpha$ of $f$ is valid at all coverage levels $\alpha$.
\end{lemma}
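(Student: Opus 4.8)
The plan is to read off coverage directly from the weighted-calibration identity for one carefully chosen weight function. Fix a target level $\alpha \in [0,1]$ and apply perfect $\cW$-weighted-calibration to the weight function $w_\alpha(f) := \1_{T_\alpha(f)}$, which is the $\sigma = +1$ member of the family \eqref{eqn:W-conformal} (the $\sigma = -1$ member merely produces the negation of the same identity, so it is redundant for \emph{perfect} calibration). The hypothesis then gives
$$\E_{(x,y)\sim\cD}\Big[\big\langle\, y - f(x),\ \1_{T_\alpha(f(x))}\,\big\rangle\Big] = 0.$$
I would expand this inner product pointwise in $x$. Since $y$ is a standard-basis (one-hot) vector, $\langle y, \1_{T_\alpha(f(x))}\rangle = \1\{y \in T_\alpha(f(x))\}$, and because the induced conformal predictor is by construction $F_\alpha(x) = T_\alpha(f(x))$, this term equals $\1\{y \in F_\alpha(x)\}$. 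The remaining term is $\langle f(x), \1_{T_\alpha(f(x))}\rangle = \sum_{i \in T_\alpha(f(x))} f(x)_i$, the cumulative mass $f(x)$ places on its top classes at level $\alpha$, and this is $\ge 1-\alpha$ by the inequality defining the threshold $t^*_\alpha(f(x))$.

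Combining the two pieces and taking expectations, the identity rearranges to
$$\Pr_{(x,y)\sim\cD}\big[\, y \in F_\alpha(x)\,\big] \;=\; \E_{x}\Big[\sum_{i \in T_\alpha(f(x))} f(x)_i\Big] \;\ge\; 1-\alpha ,$$
which is exactly validity of $F_\alpha$ at coverage level $\alpha$. Since $\alpha$ was arbitrary, $F_\alpha$ is valid at every level, proving the lemma.

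The only place anything subtle happens --- and hence the closest thing to an obstacle --- is the bound $\sum_{i \in T_\alpha(f)} f_i \ge 1-\alpha$, i.e.\ that the supremum defining $t^*_\alpha(f)$ is attained. For $\alpha < 1$ this holds because $f$ has finitely many coordinates: the map $t \mapsto \sum_i f_i \1\{f_i \ge t\}$ is non-increasing, left-continuous, equal to $1$ at $t = 0$, and drops to $0$ for $t$ large, so $\{t : \sum_i f_i \1\{f_i \ge t\} \ge 1-\alpha\}$ is a closed bounded interval $[0, t^*_\alpha(f)]$ and the bound holds at its right endpoint; ties among the $f_i$ are harmless since $T_\alpha$ collects \emph{every} index meeting the threshold, and the case $\alpha = 1$ is vacuous ($\Pr \ge 0$). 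A secondary, purely routine point is measurability of $x \mapsto \1_{T_\alpha(f(x))}$, so that $w_\alpha$ is an admissible weight function and the expectation in the hypothesis is well-defined. Beyond these bookkeeping items the argument is just the one-line decomposition of the inner product into ``is the truth covered?'' and ``how much mass was predicted on the set?'', so I do not anticipate any genuinely hard step.
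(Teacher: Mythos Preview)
Your proposal is correct and follows essentially the same approach as the paper's proof sketch: apply the weighted-calibration identity with $w=\1_{T_\alpha(f)}$, use $\langle f,\1_{T_\alpha(f)}\rangle\ge 1-\alpha$ by construction, and conclude $\Pr[y\in F_\alpha(x)]=\E[\langle y,\1_{T_\alpha(f)}\rangle]\ge 1-\alpha$. Your additional care about attainment of the threshold and measurability goes beyond what the paper records, but the core argument is identical.
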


\begin{proof}(Sketch)
Notice that by construction, 
$\langle f, \1_{T_\alpha(f)} \rangle \geq 1-\alpha$.
Therefore by calibration we must have:
$\langle y, \1_{T_\alpha(f)} \rangle \geq 1-\alpha$.

Moreover, the set $T_\alpha(f)$ is exactly the output of the induced conformal predictor
$F_\alpha$, given base prediction $f$.
Therefore 
\begin{align}
    \Pr[y \in T_\alpha(f(x))] &= \E[ \langle y, \1_{T_\alpha(f)} \rangle ] \\
    &\geq 1-\alpha
\end{align}
\end{proof}

By the general connection of Theorem~\ref{thm:bcal_calibration_equivalences},
if a model $f$ is $\cW$-locally-loss-optimal w.r.t. the weight class of Equation~\eqref{eqn:W-conformal},
then the induced conformal predictor $F_\alpha$ has coverage $\alpha$ for all $\alpha \in [0, 1]$.

\section{Disaggregated Reliability Diagram Results}
\label{app:encyclopedia}

In this section, we report disaggregated reliability diagram results for individual configurations we evaluated. The plots are displayed as follow: 
\begin{itemize}
\item the right three columns present results for instruct models, 
\item the left three columns present results for the corresponding base models.
\end{itemize}
In some cases, there are multiple instruct models trained from a single base models, hence for some base models, their results are being presented multiple times.

Some instruct models do not have a public corresponding base model---in those cases, the left three columns of the row are empty.

As discussed in the \Cref{sec:experiments}, TriviaQA and SimpleQA were not evaluated for the CoT response style.

The figures start on the next page. For a quick references:

\begin{itemize}
\item \texttt{GSM8K} in \Cref{subsec:GSM8K}
\item \texttt{OpenMathInstruct} in \Cref{subsec:OpenMathInstruct}
\item \texttt{TriviaQA} in \Cref{subsection:TriviaQA}
\item \texttt{SimpleQA} in \Cref{subsection:SimpleQA}
\end{itemize}
\clearpage

\subsection{GSM8K}
\label{subsec:GSM8K}

\begin{figure}[!htb]
\centering
\includegraphics[width=\linewidth]{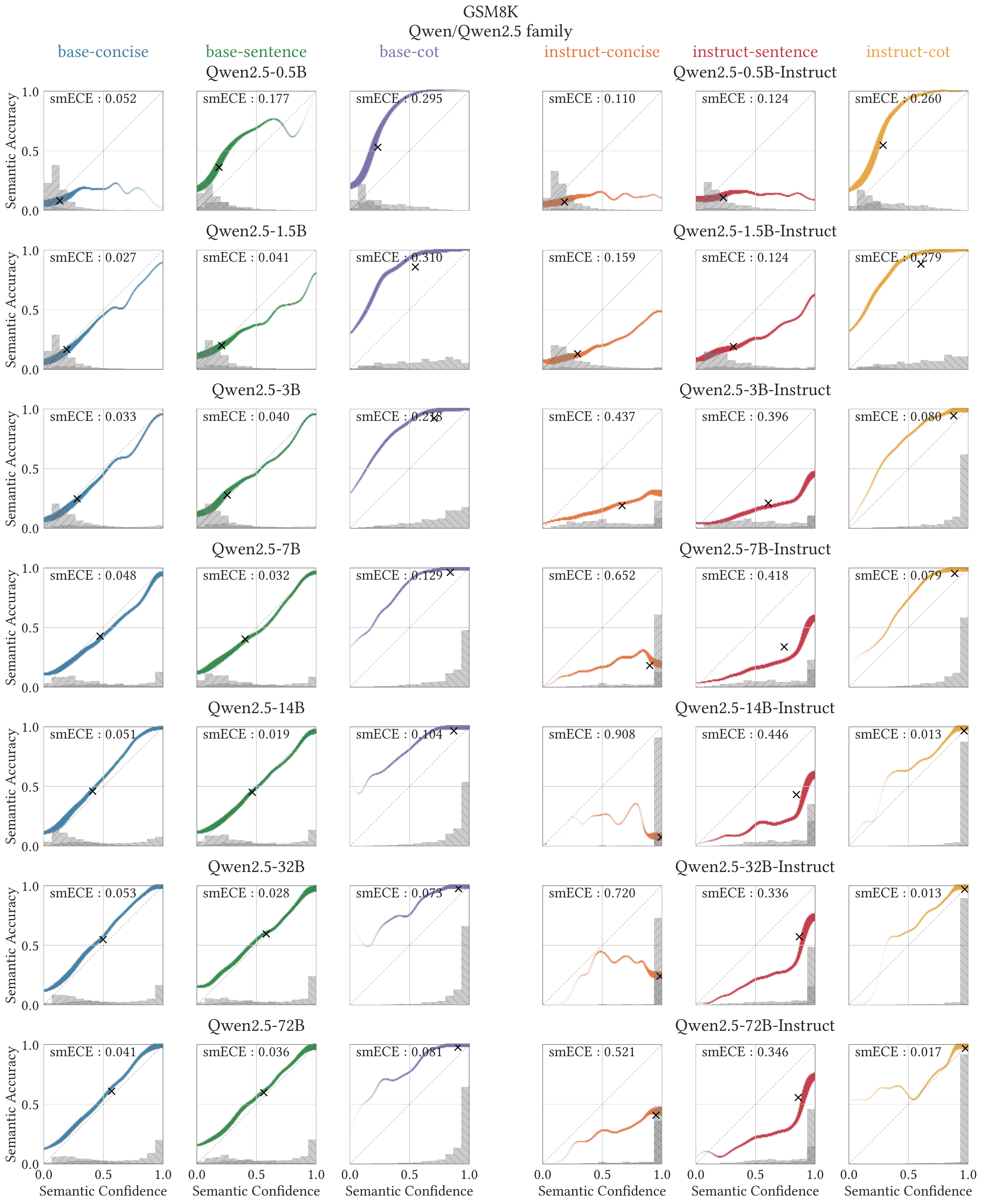}
\end{figure}
\begin{figure}[!htb]
\centering
\includegraphics[width=\linewidth]{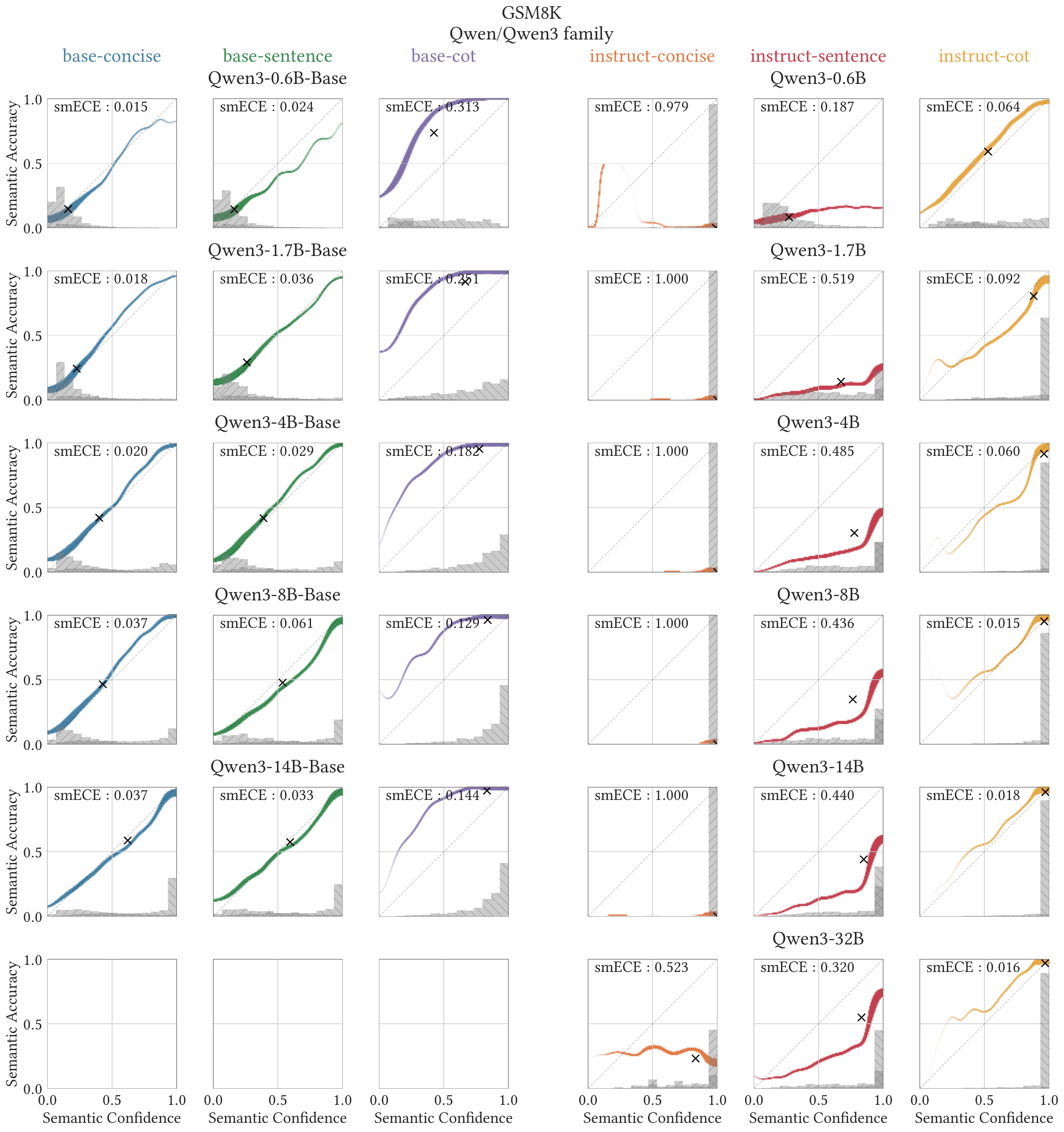}
\end{figure}
\begin{figure}[!htb]
\centering
\includegraphics[width=\linewidth]{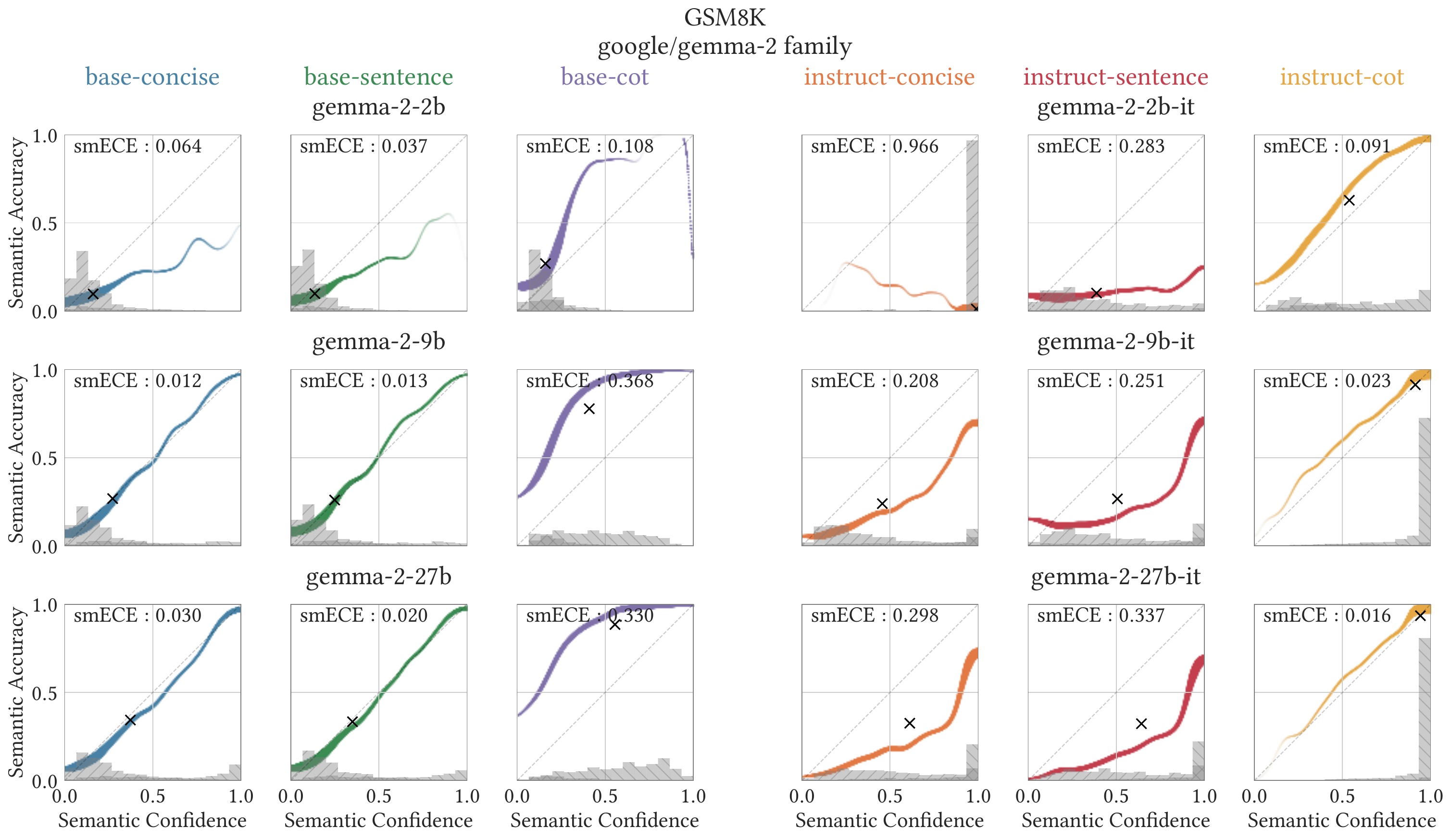}
\end{figure}
\begin{figure}[!htb]
\centering
\includegraphics[width=\linewidth]{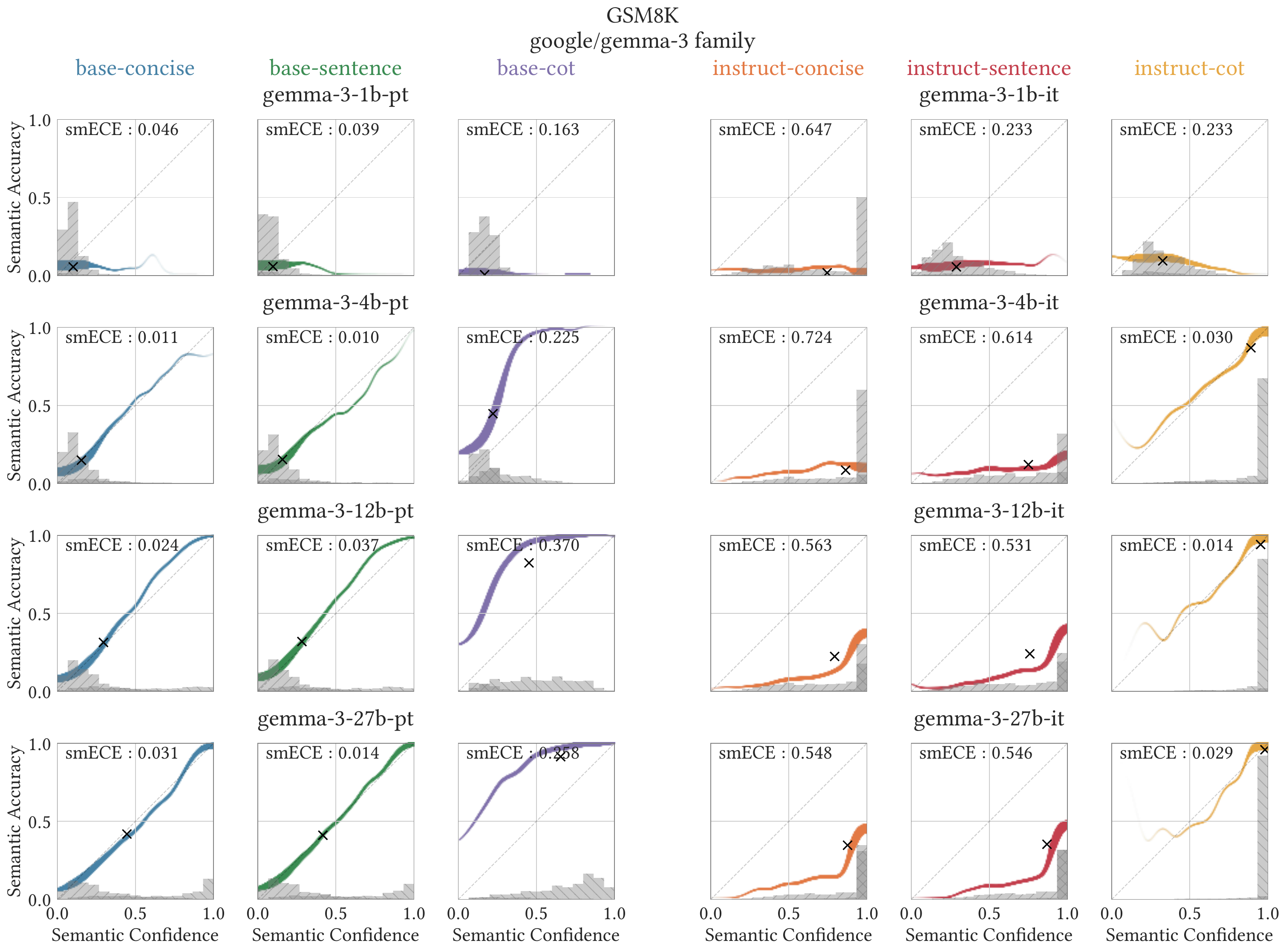}
\end{figure}
\begin{figure}[!htb]
\centering
\includegraphics[width=\linewidth]{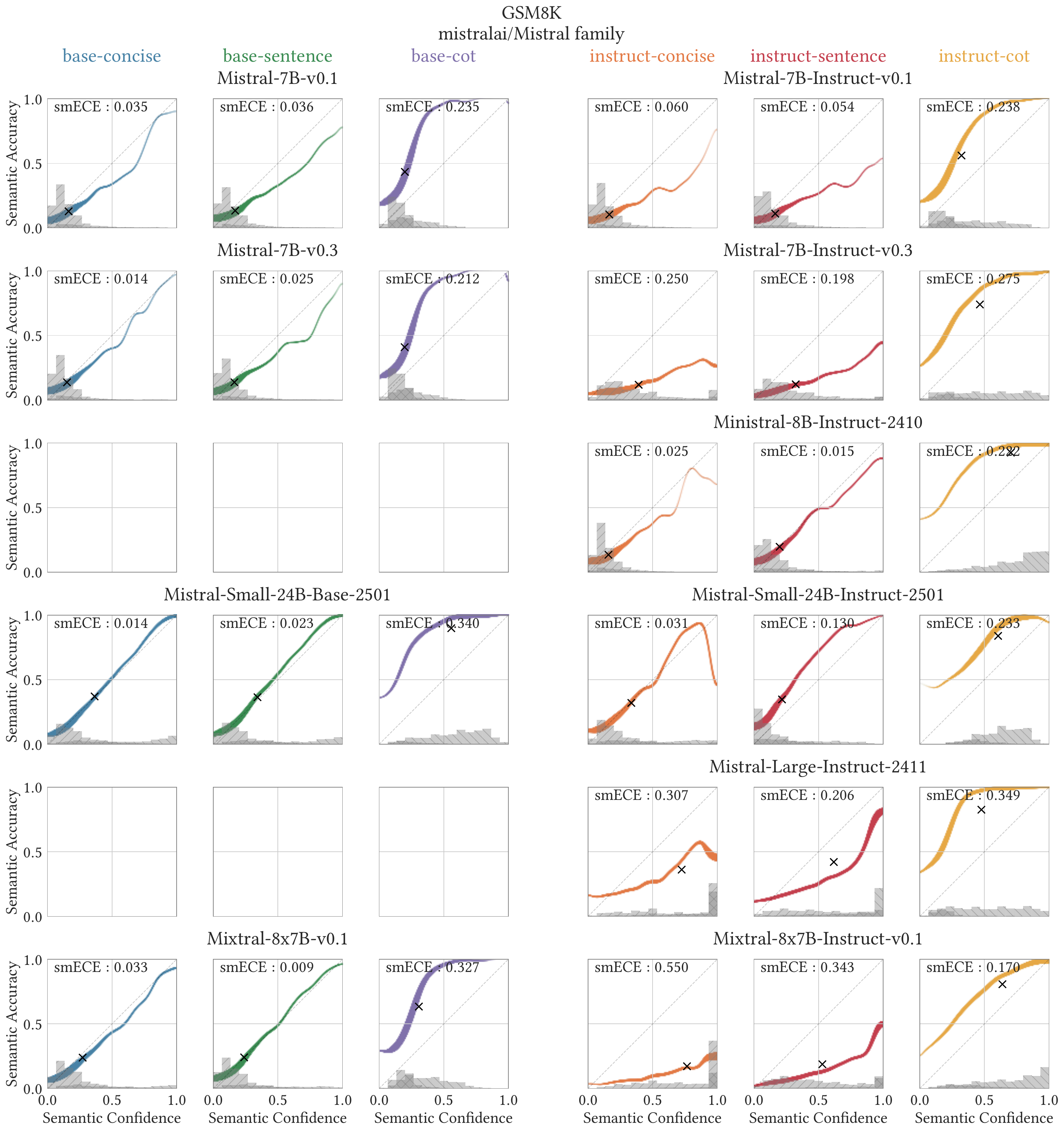}
\end{figure}
\begin{figure}[!htb]
\centering
\includegraphics[width=\linewidth]{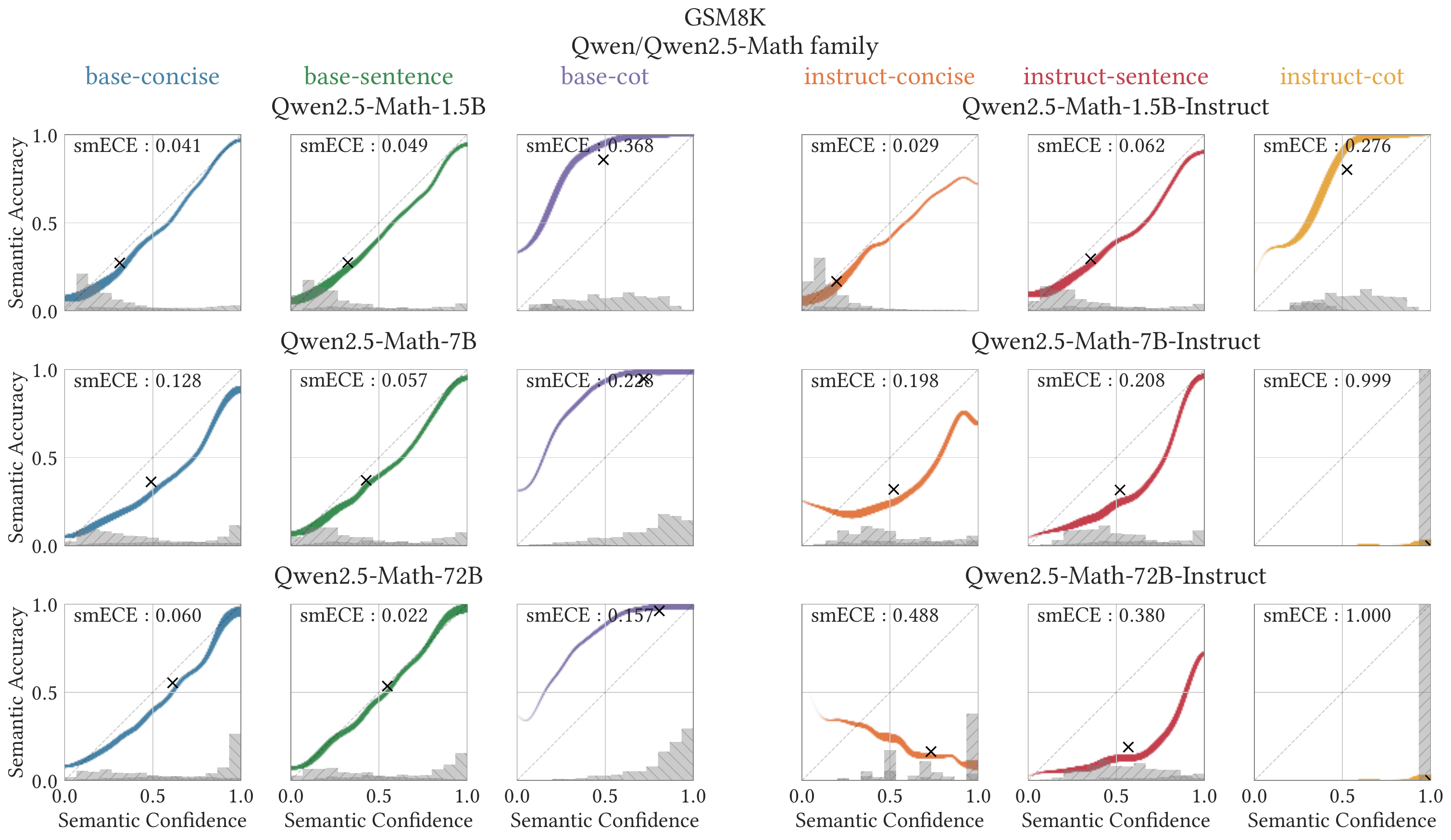}
\end{figure}
\begin{figure}[!htb]
\centering
\includegraphics[width=\linewidth]{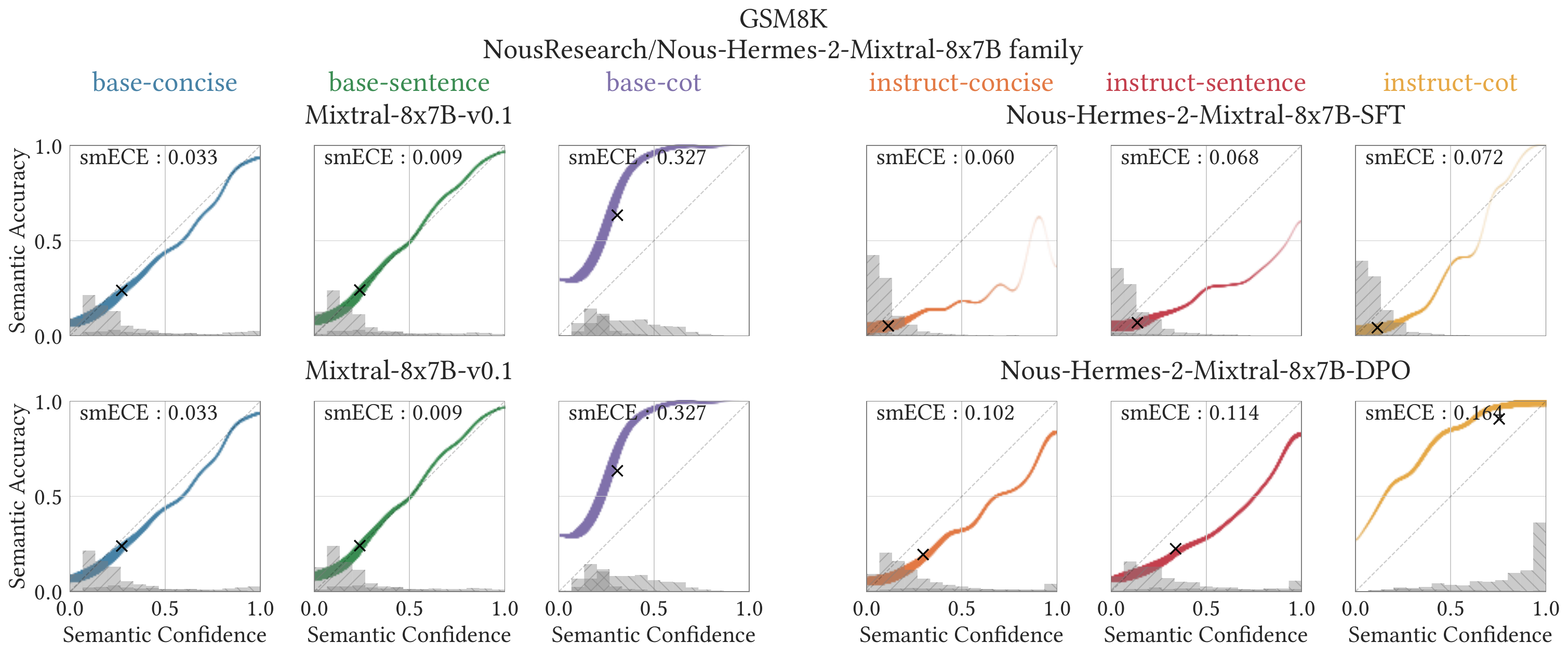}
\end{figure}
\begin{figure}[!htb]
\centering
\includegraphics[width=\linewidth]{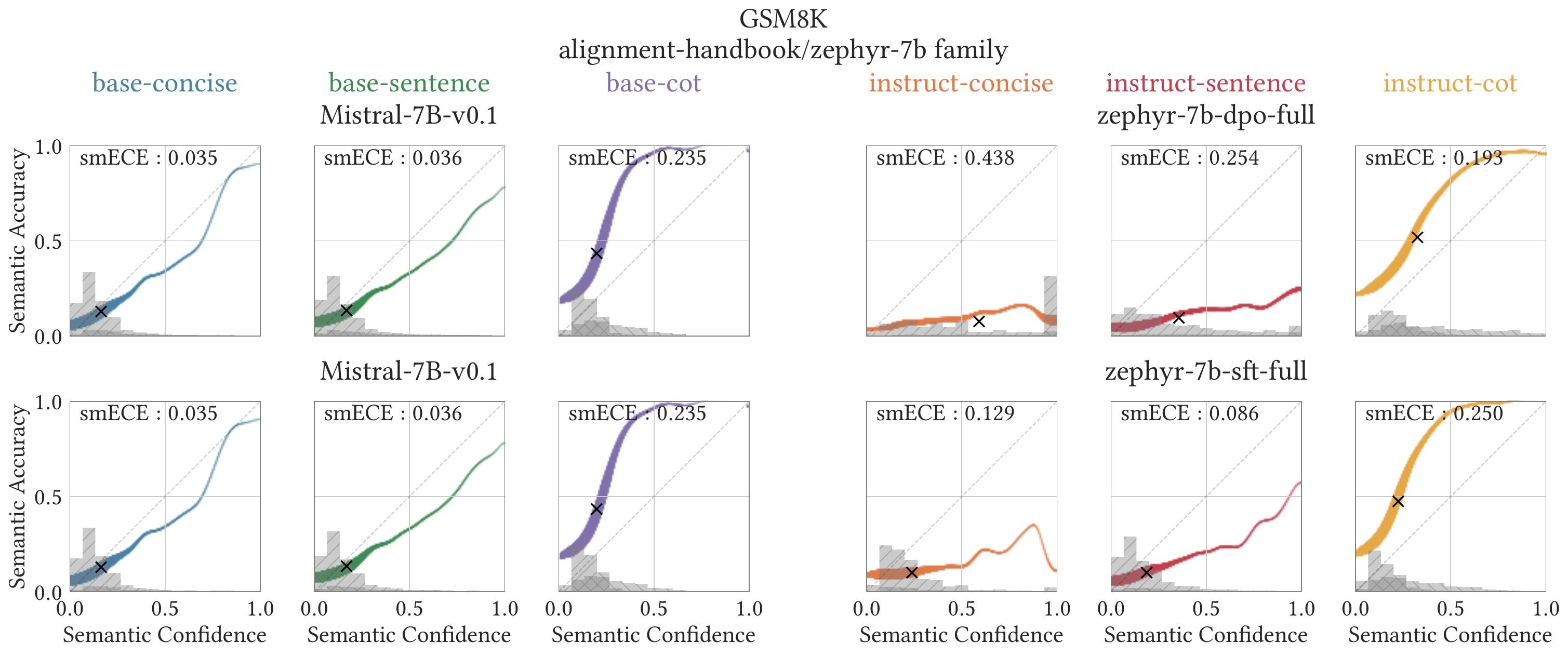}
\end{figure}
\begin{figure}[!htb]
\centering
\includegraphics[width=\linewidth]{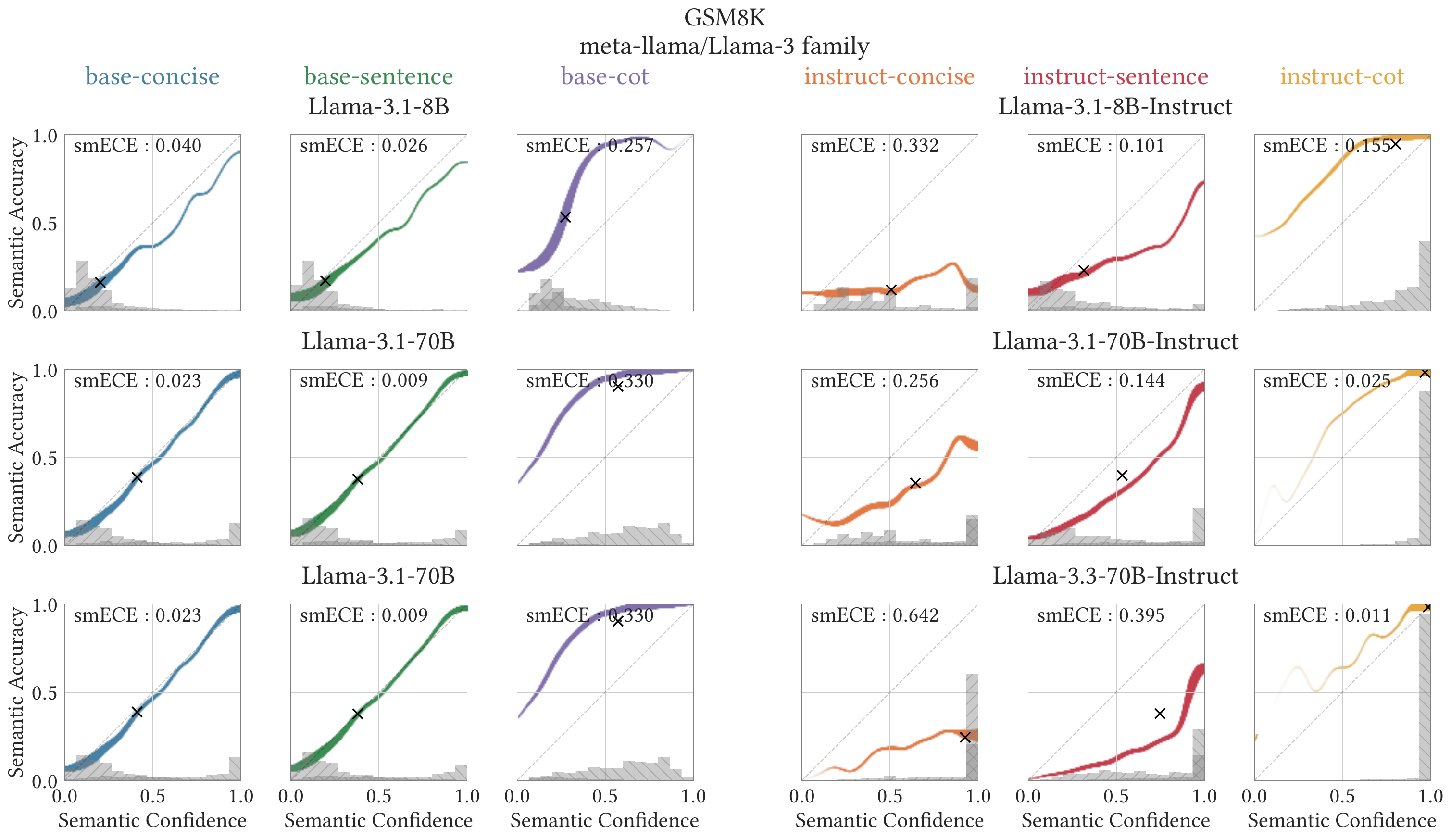}
\end{figure}
\begin{figure}[!htb]
\centering
\includegraphics[trim={0 10.4cm 0 0},clip,width=\linewidth]{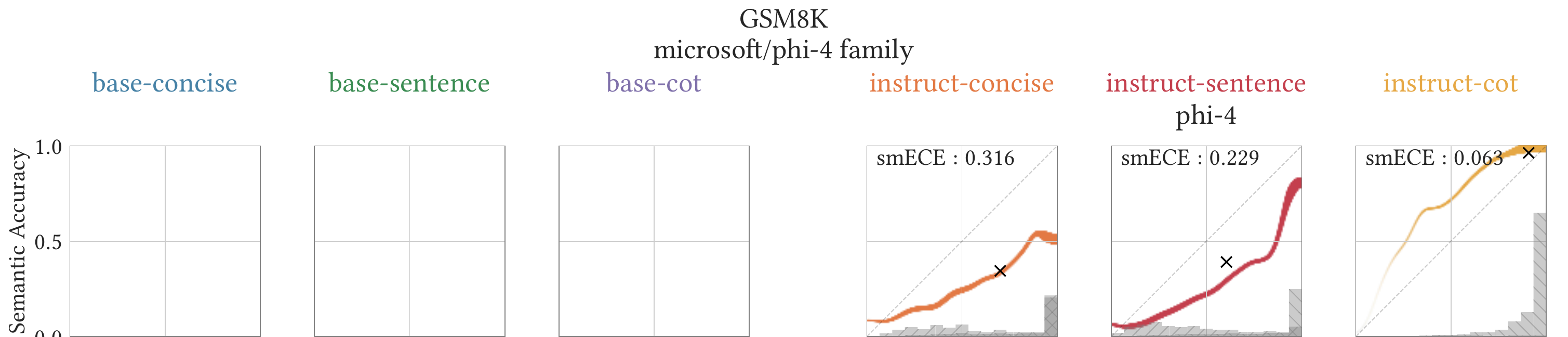}
\includegraphics[trim={0 0 0 19.5cm},clip,width=\linewidth]{figures/fig_appenidces_0_9.pdf}
\end{figure}

\FloatBarrier
\subsection{OpenMathInstruct}
\label{subsec:OpenMathInstruct}

\begin{figure}[!htb]
\centering
\includegraphics[width=\linewidth]{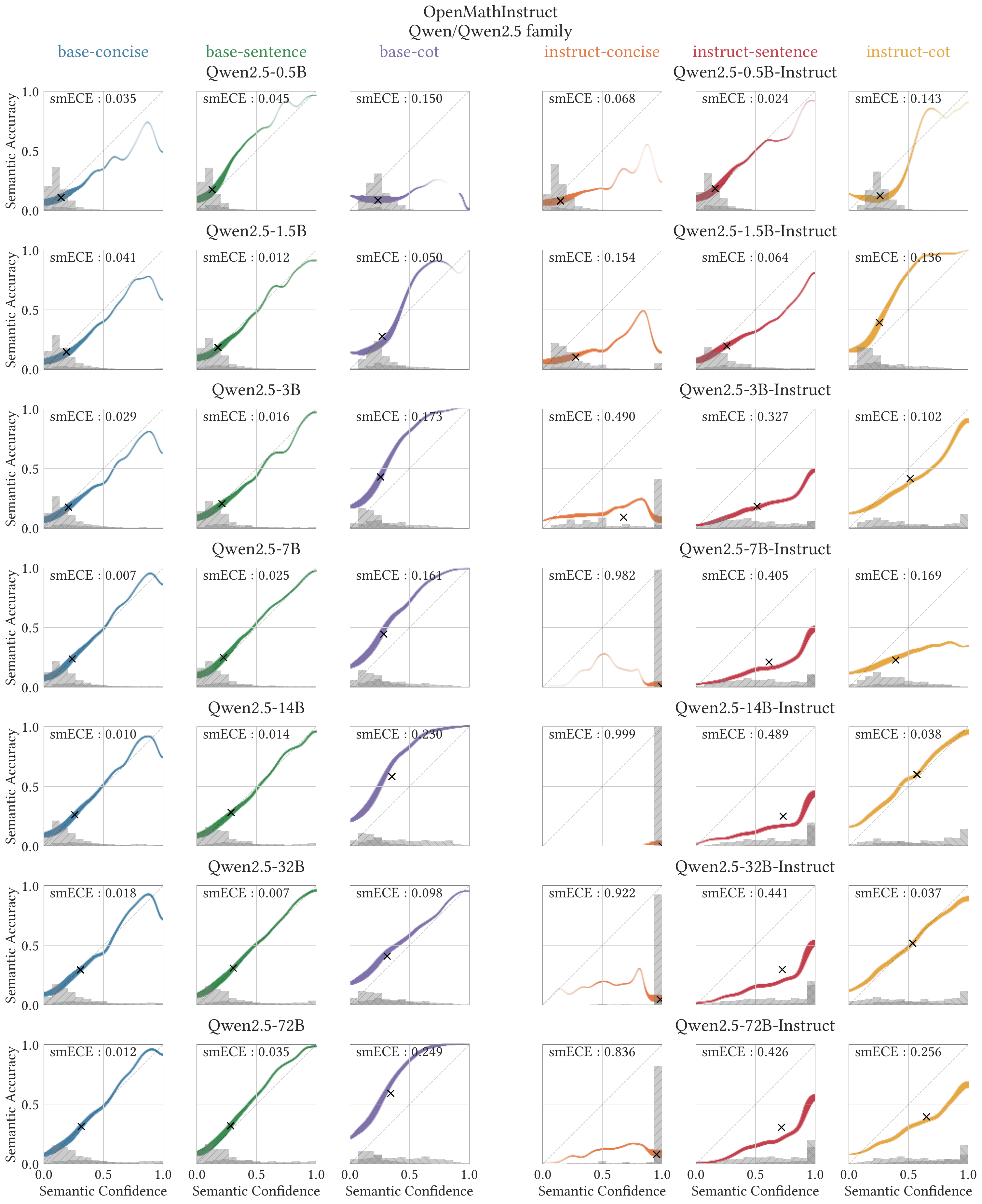}
\end{figure}
\begin{figure}[!htb]
\centering
\includegraphics[width=\linewidth]{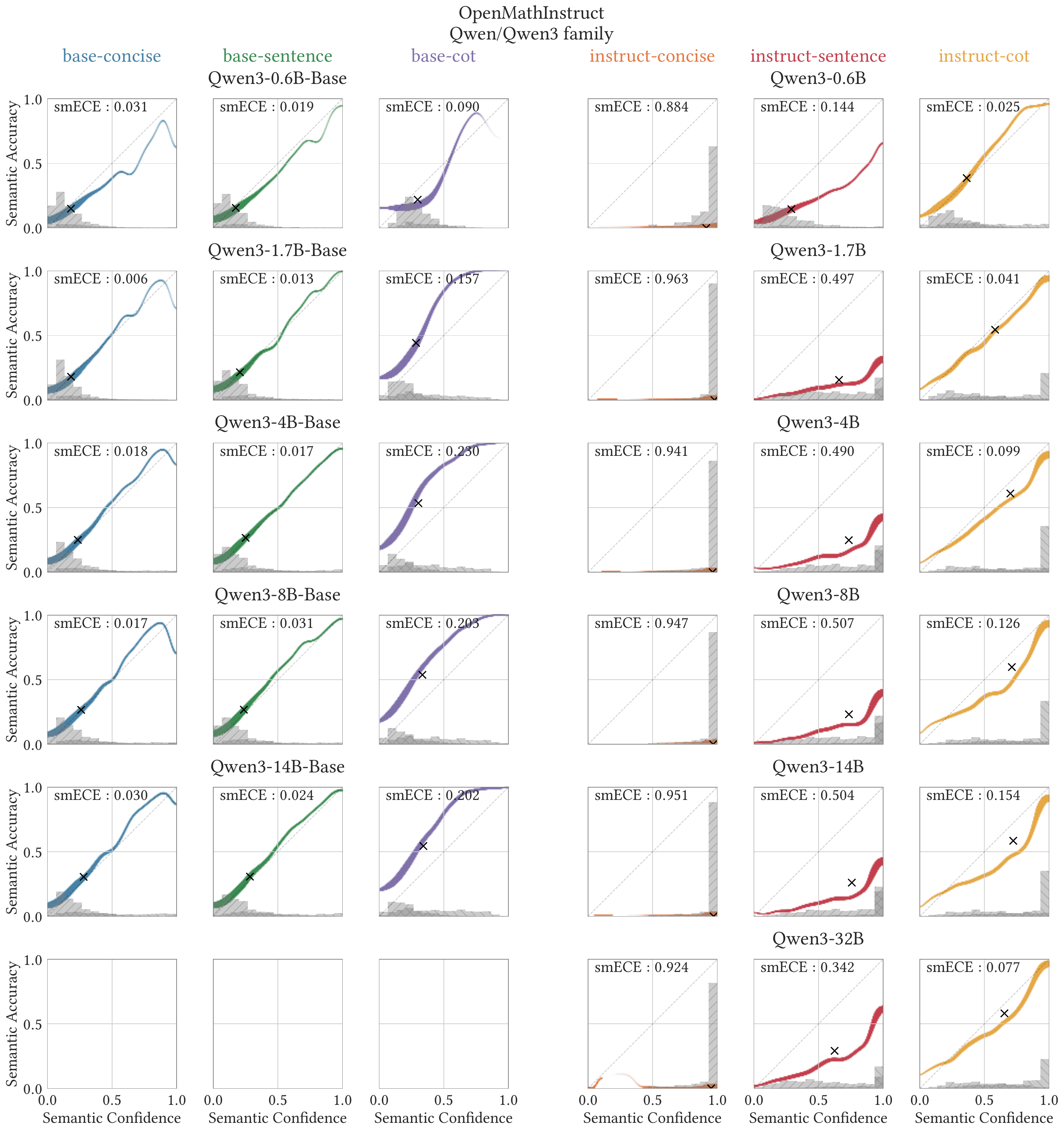}
\end{figure}
\begin{figure}[!htb]
\centering
\includegraphics[width=\linewidth]{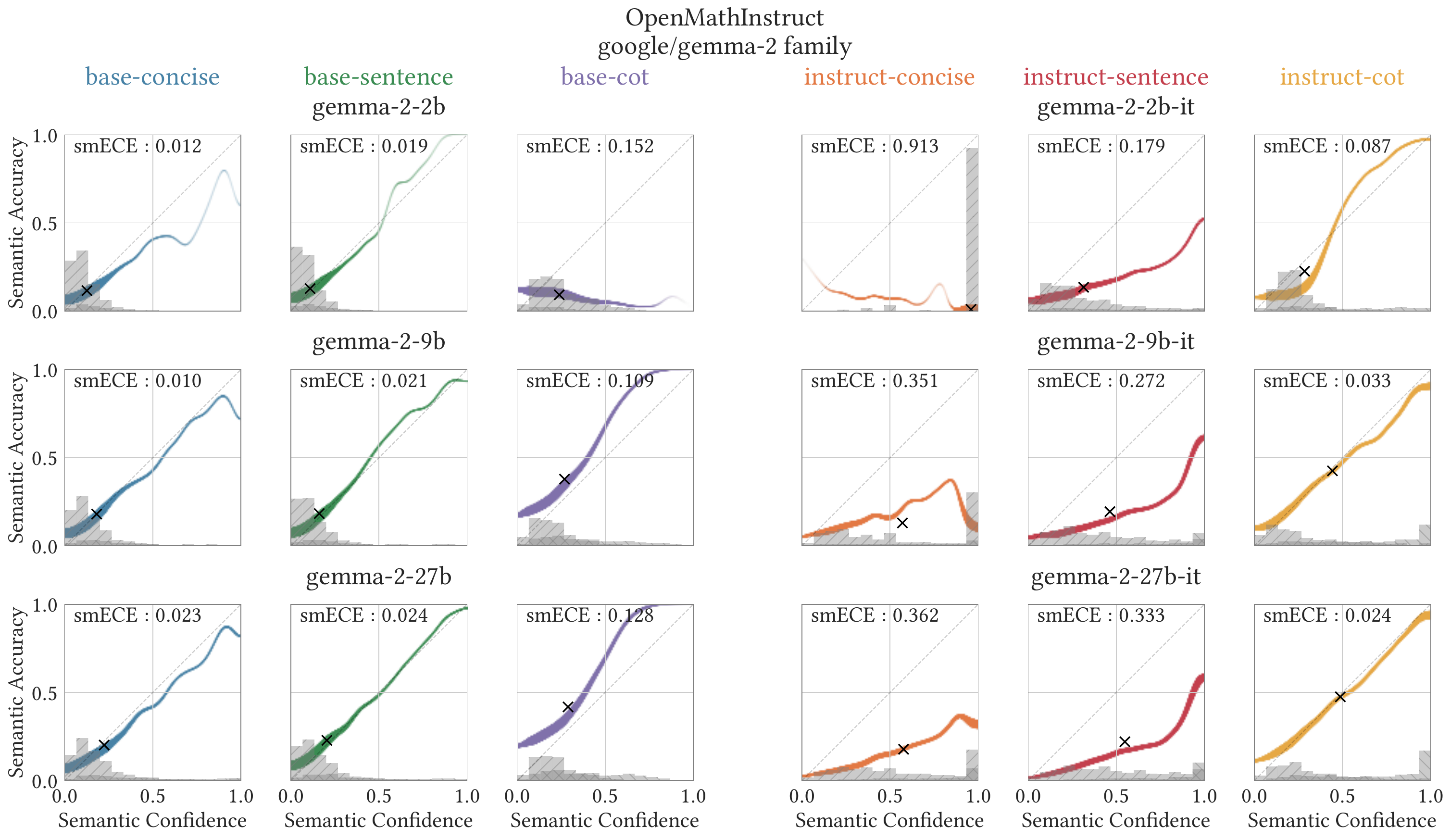}
\end{figure}
\begin{figure}[!htb]
\centering
\includegraphics[width=\linewidth]{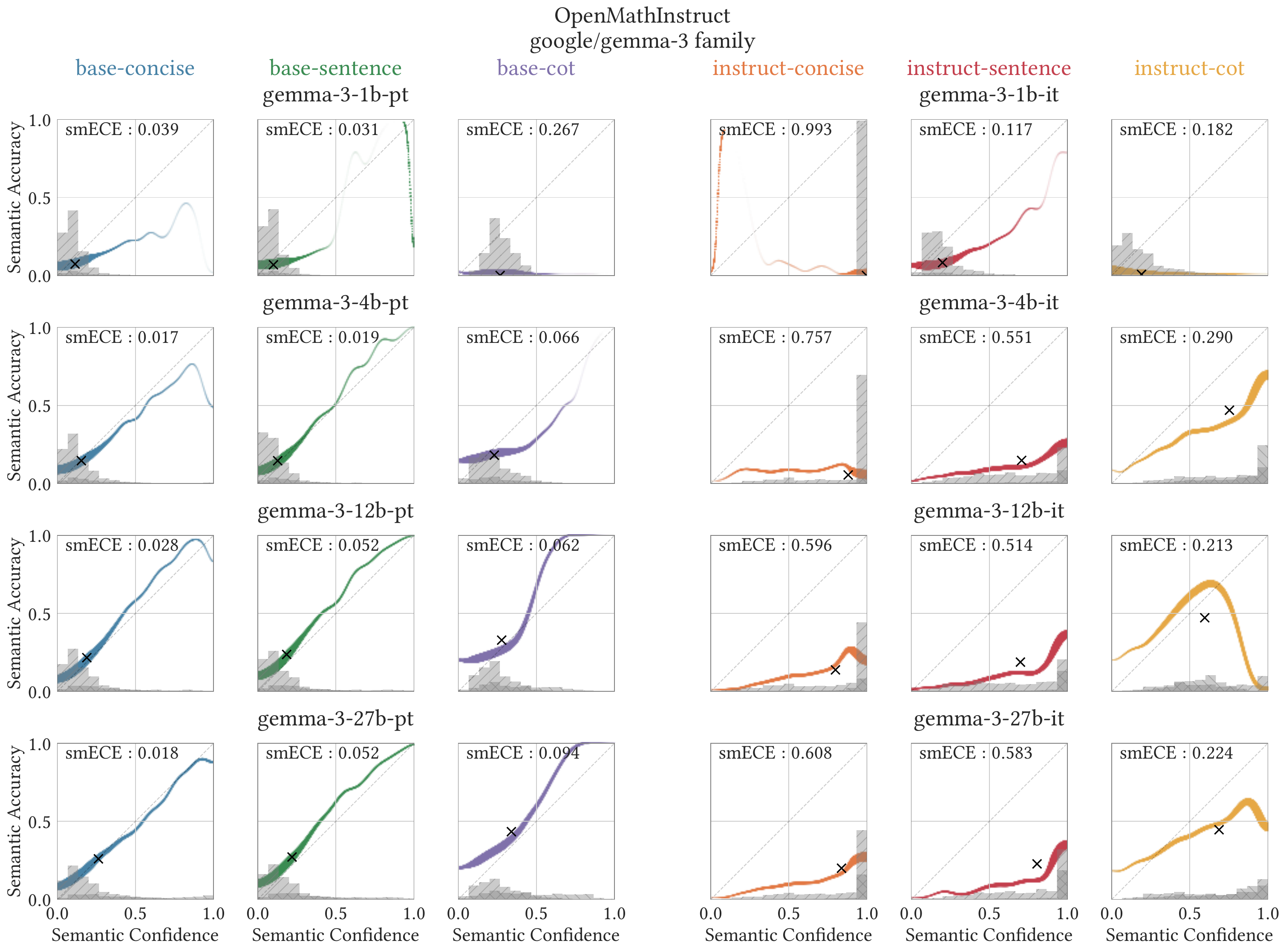}
\end{figure}
\begin{figure}[!htb]
\centering
\includegraphics[width=\linewidth]{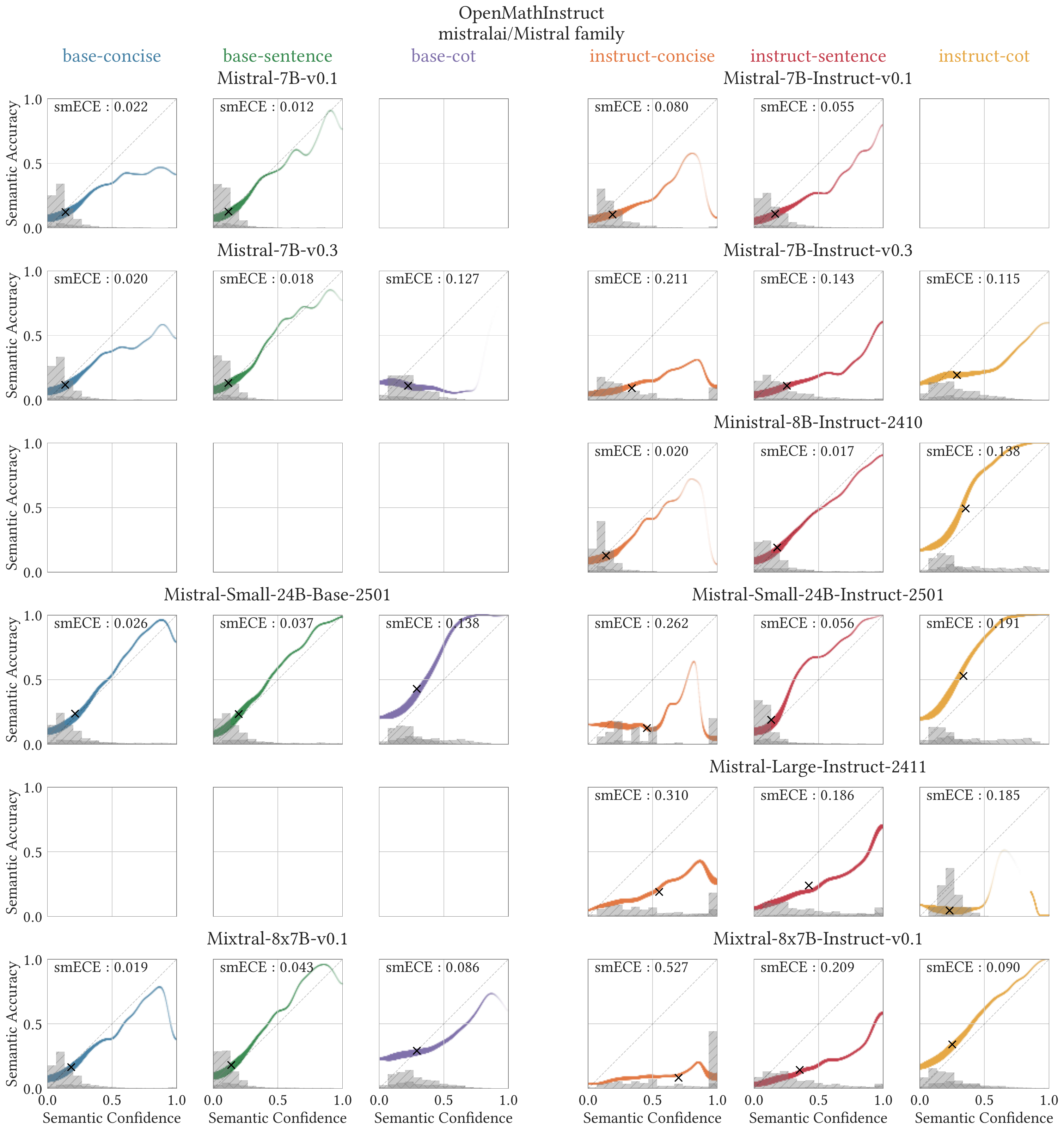}
\end{figure}
\begin{figure}[!htb]
\centering
\includegraphics[width=\linewidth]{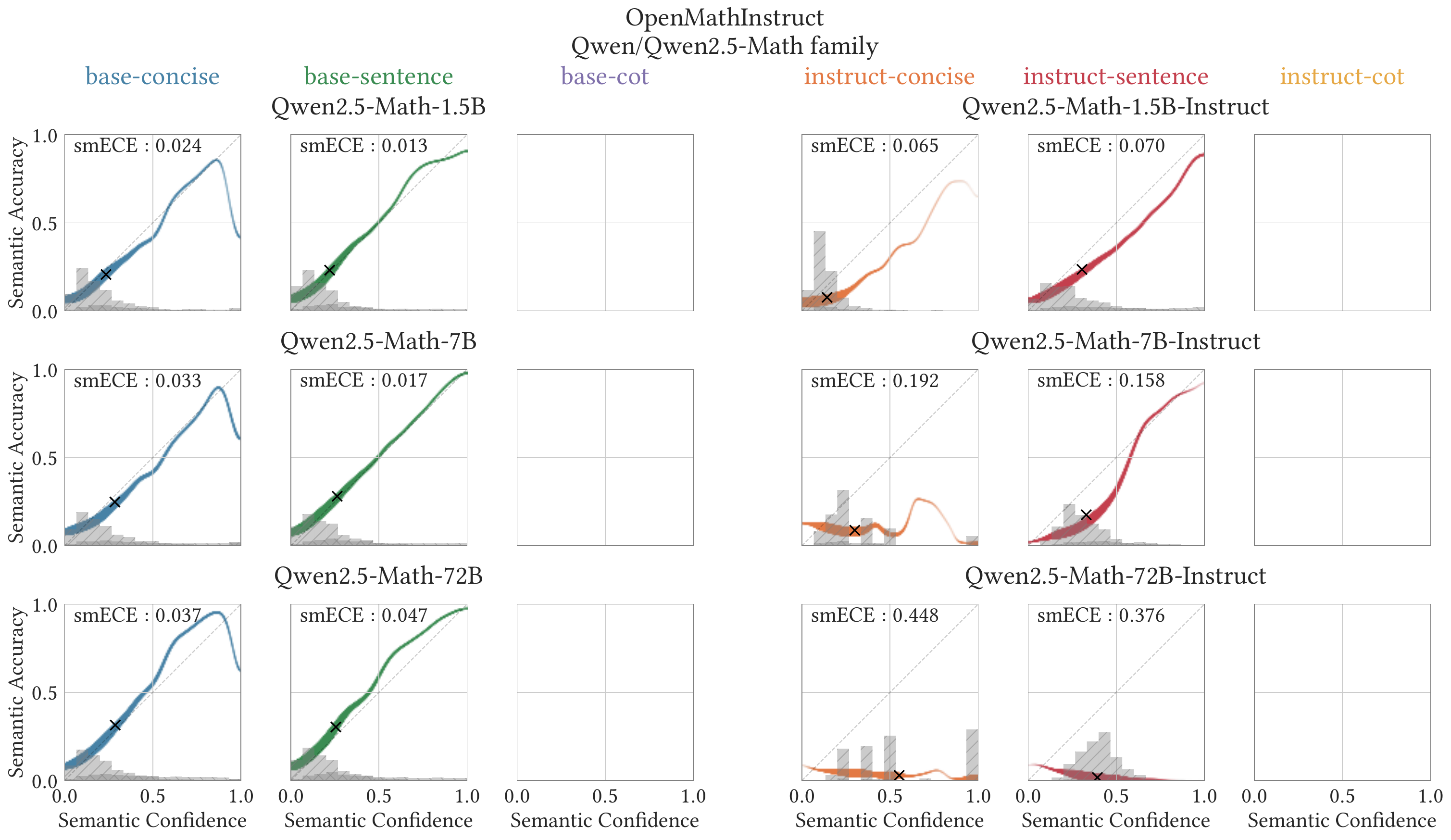}
\end{figure}
\begin{figure}[!htb]
\centering
\includegraphics[width=\linewidth]{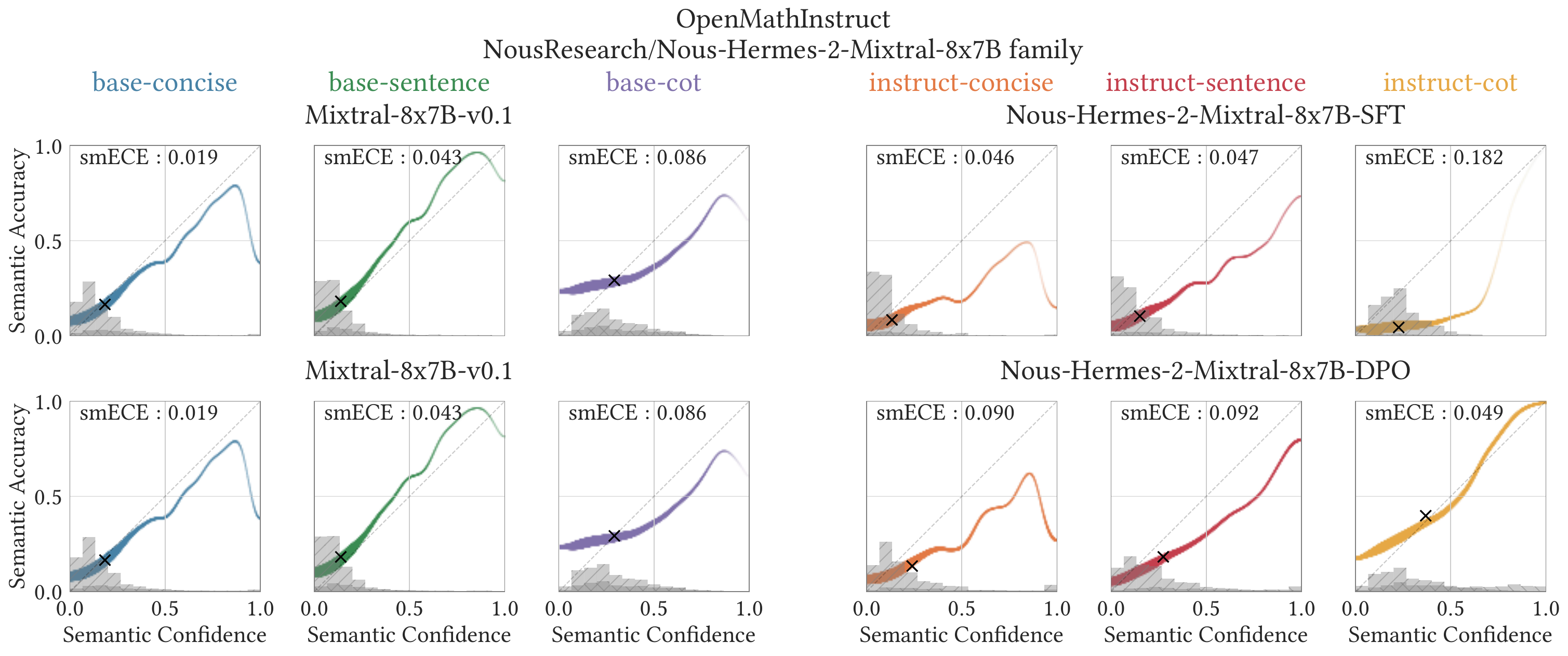}
\end{figure}
\begin{figure}[!htb]
\centering
\includegraphics[width=\linewidth]{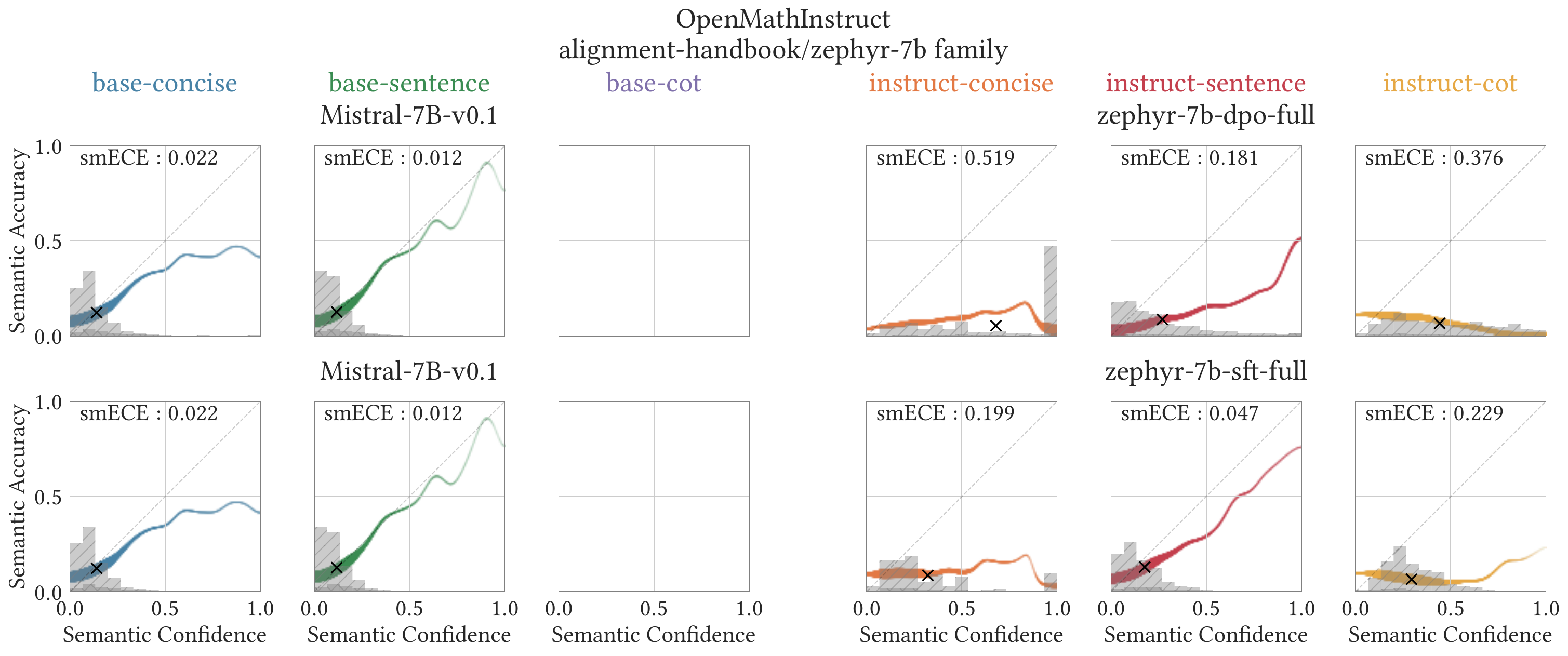}
\end{figure}
\begin{figure}[!htb]
\centering
\includegraphics[width=\linewidth]{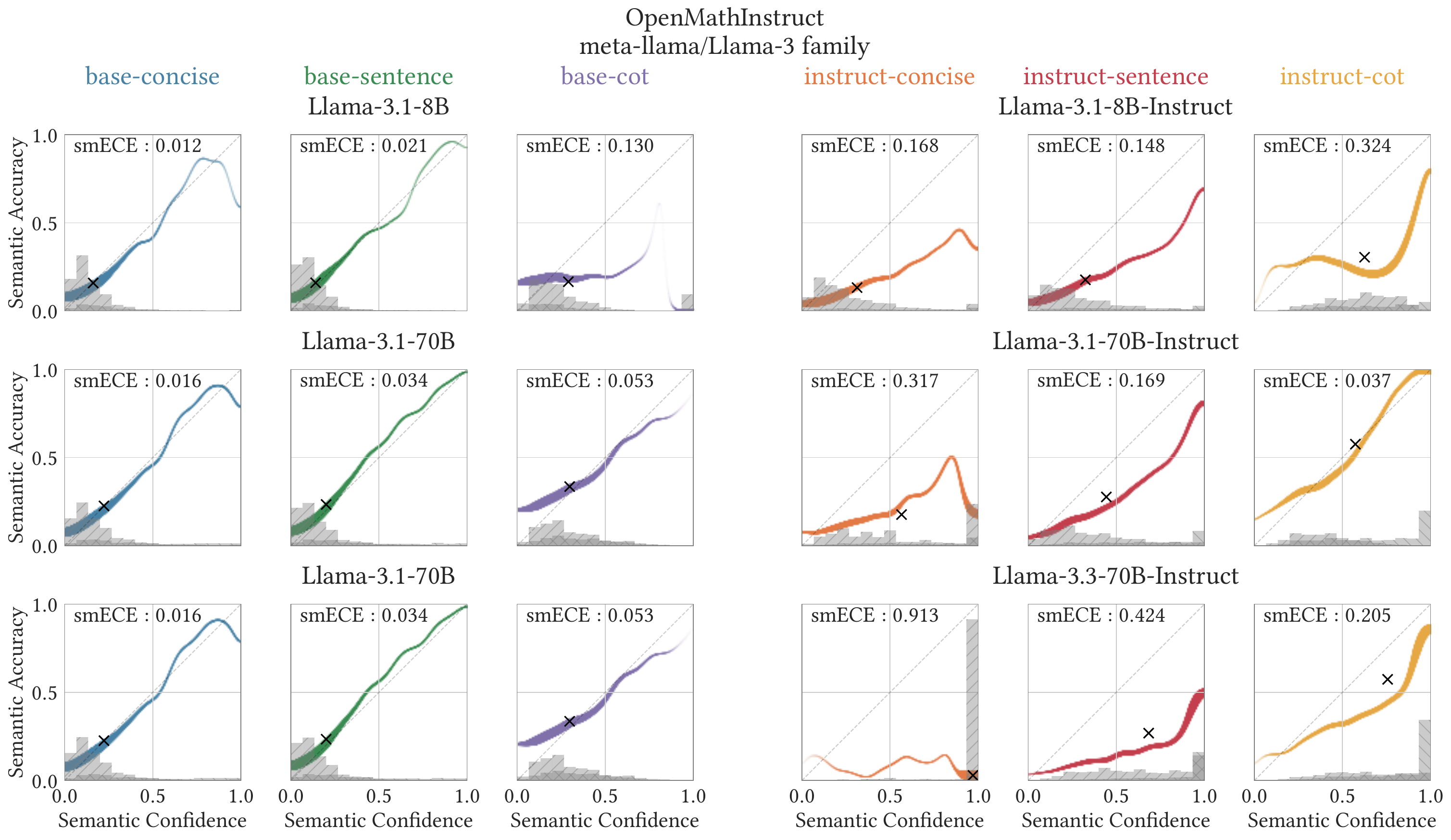}
\end{figure}
\begin{figure}[!htb]
\centering
\includegraphics[trim={0 10.4cm 0 0},clip,width=\linewidth]{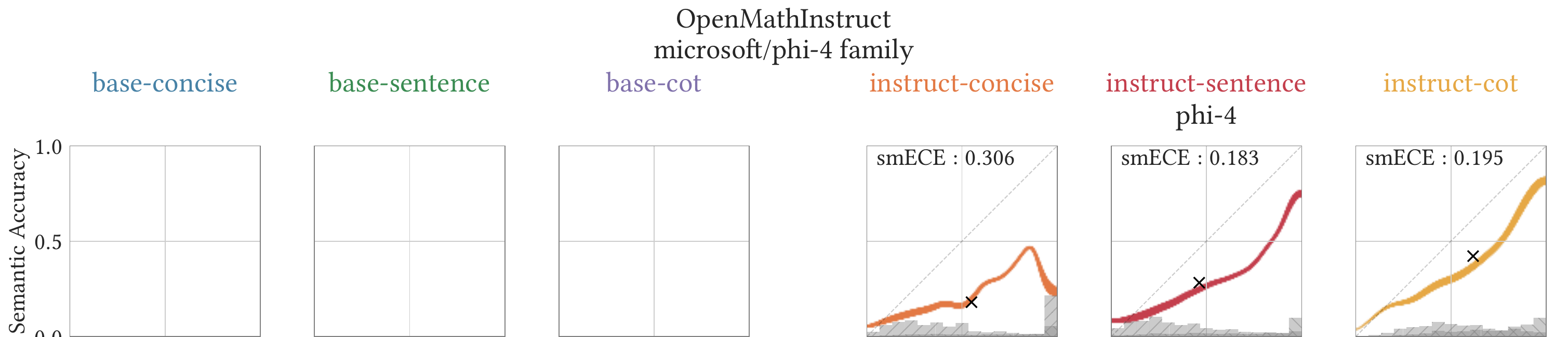}
\includegraphics[trim={0 0 0 19.5cm},clip,width=\linewidth]{figures/fig_appenidces_1_9.pdf}
\end{figure}

\FloatBarrier
\subsection{TriviaQA}
\label{subsection:TriviaQA}

\begin{figure}[!htb]
\centering
\includegraphics[width=\linewidth]{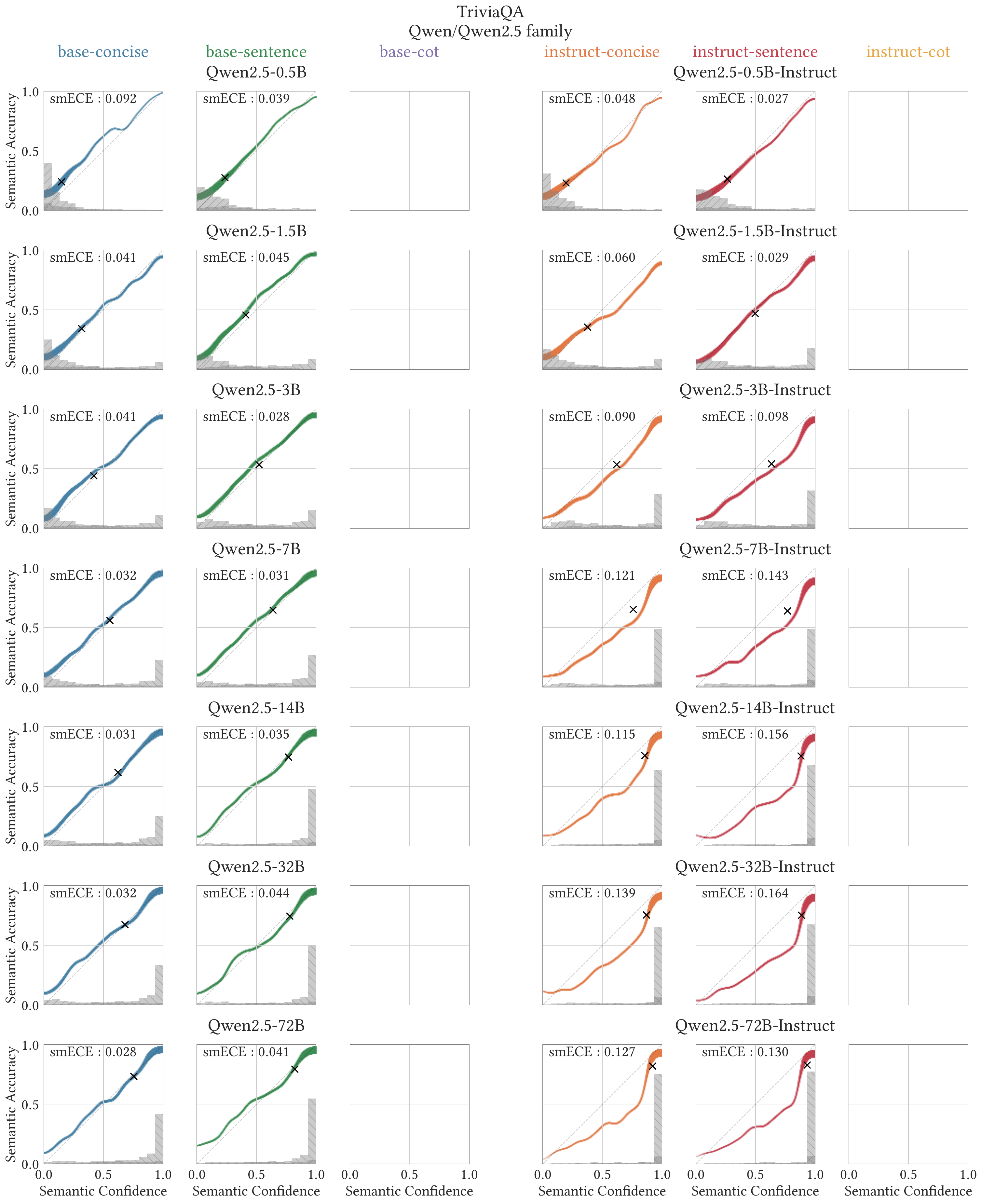}
\end{figure}
\begin{figure}[!htb]
\centering
\includegraphics[width=\linewidth]{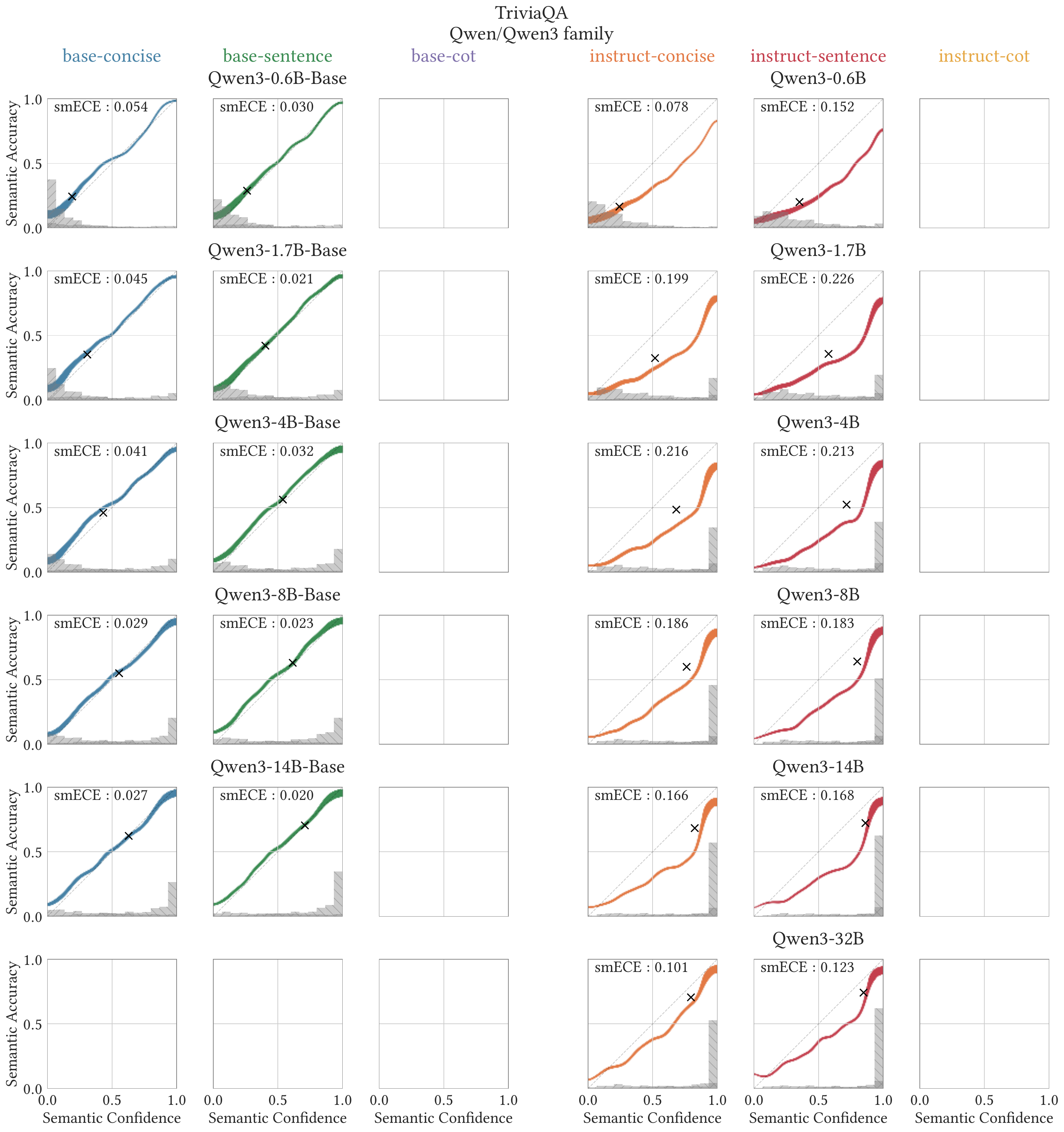}
\end{figure}
\begin{figure}[!htb]
\centering
\includegraphics[width=\linewidth]{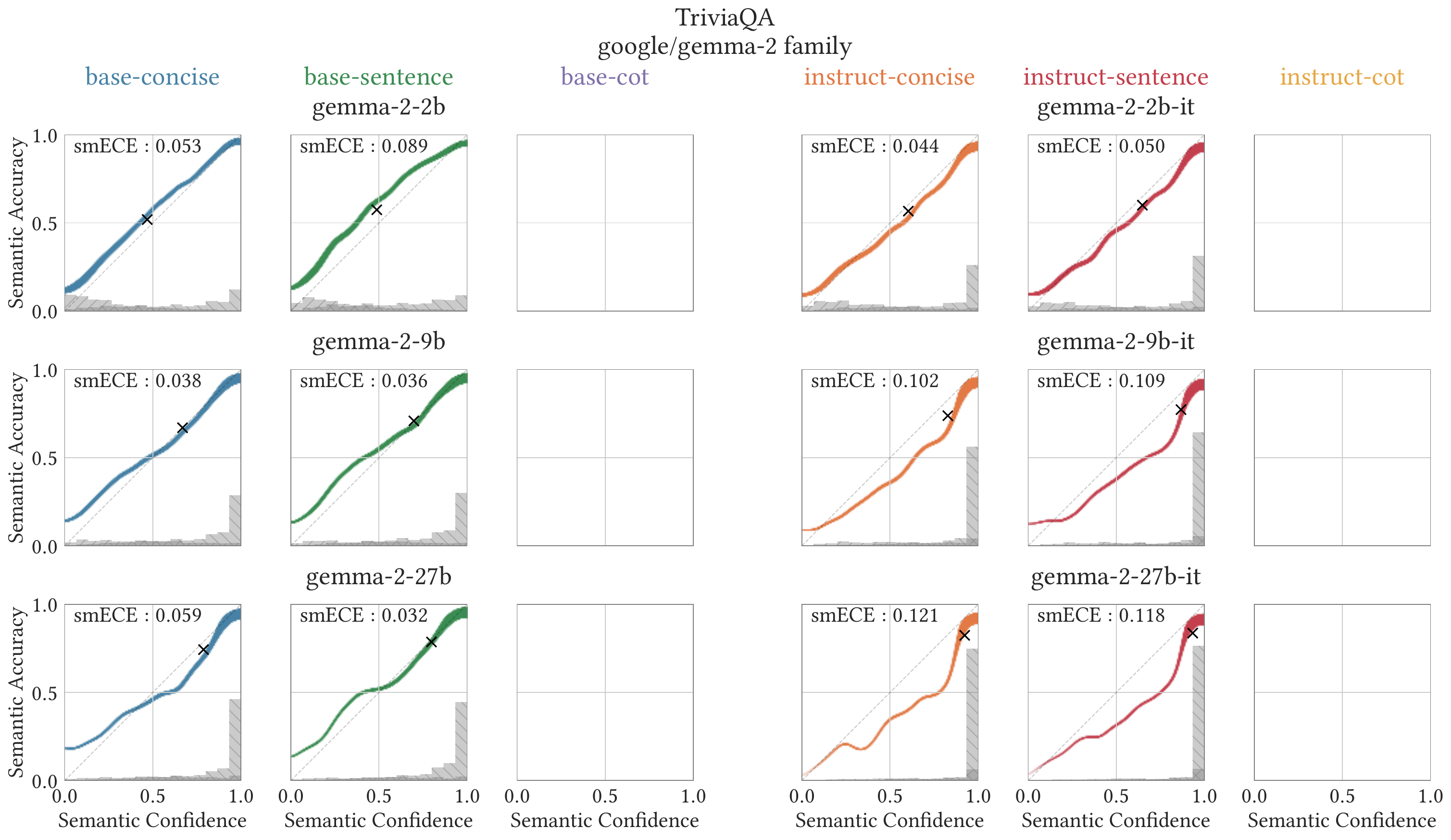}
\end{figure}
\begin{figure}[!htb]
\centering
\includegraphics[width=\linewidth]{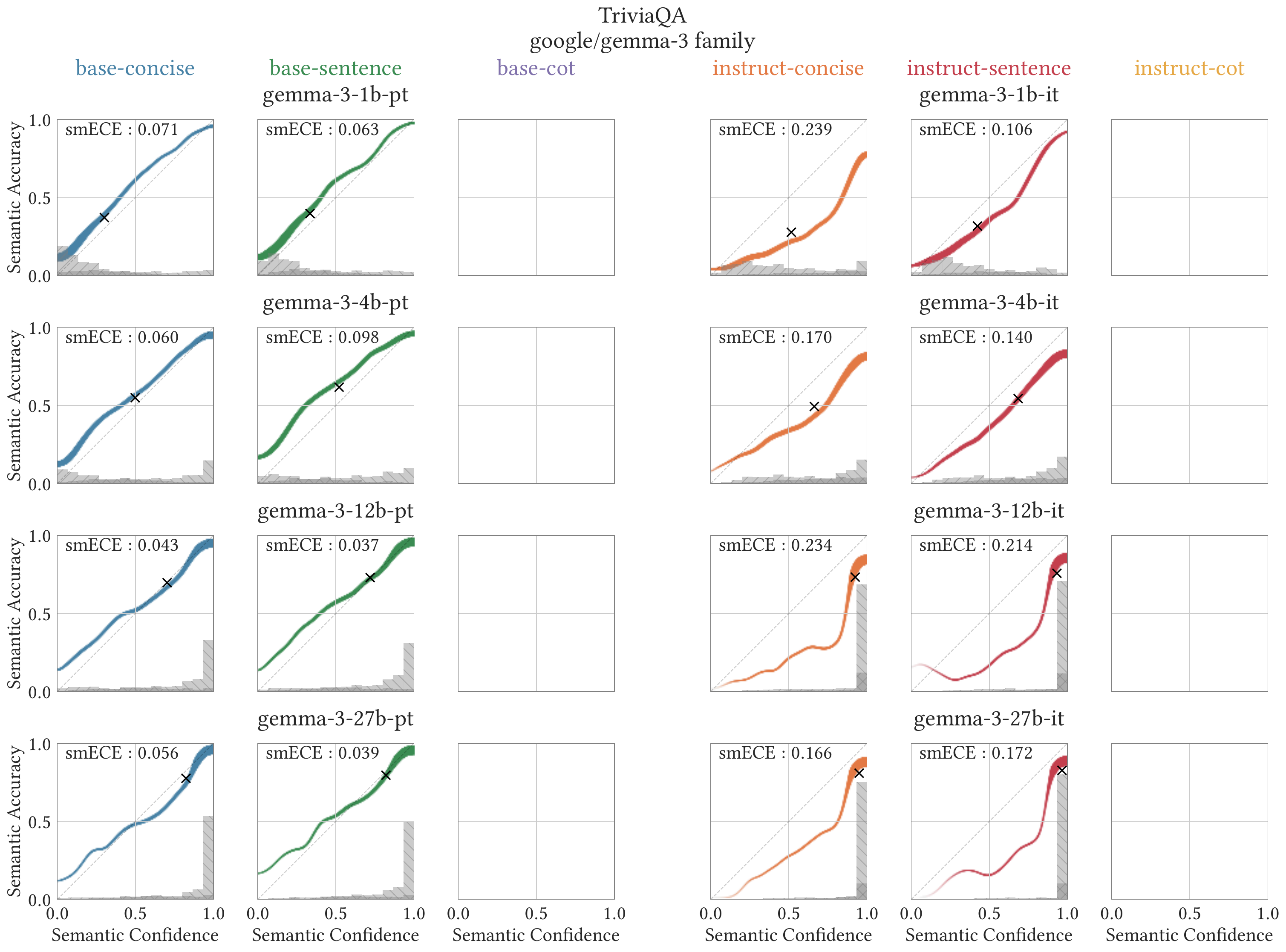}
\end{figure}
\begin{figure}[!htb]
\centering
\includegraphics[width=\linewidth]{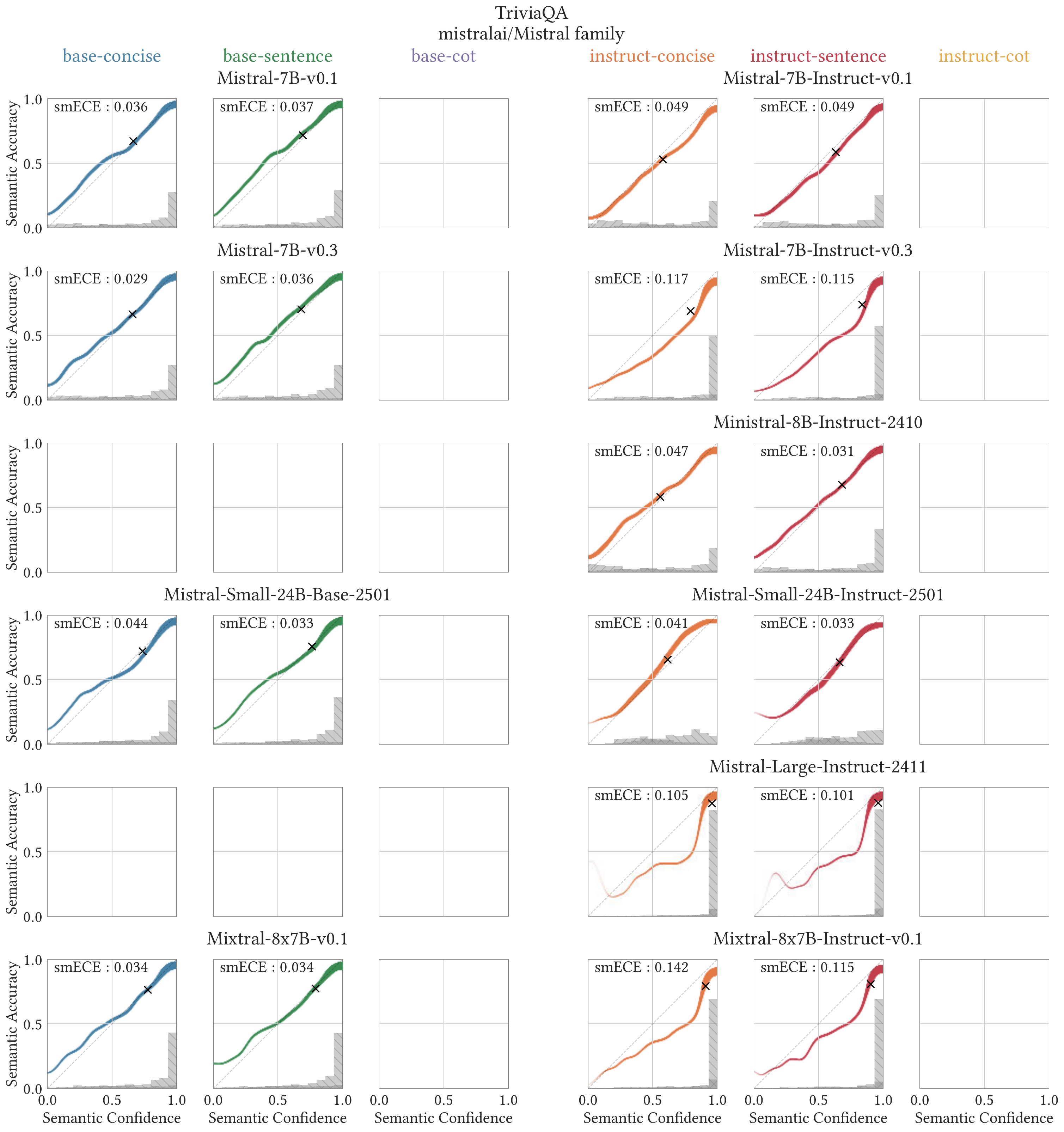}
\end{figure}
\begin{figure}[!htb]
\centering
\includegraphics[width=\linewidth]{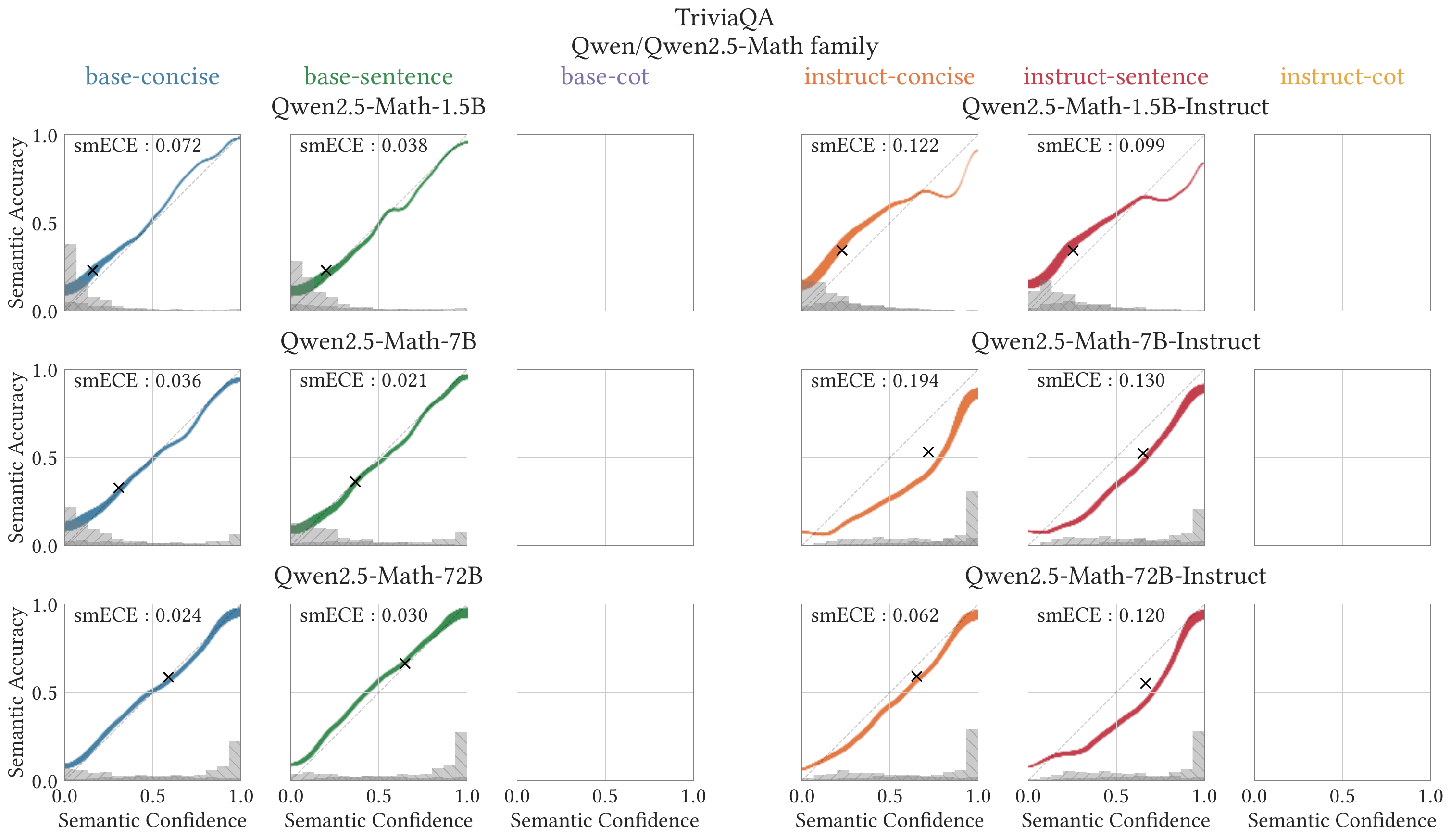}
\end{figure}
\begin{figure}[!htb]
\centering
\includegraphics[width=\linewidth]{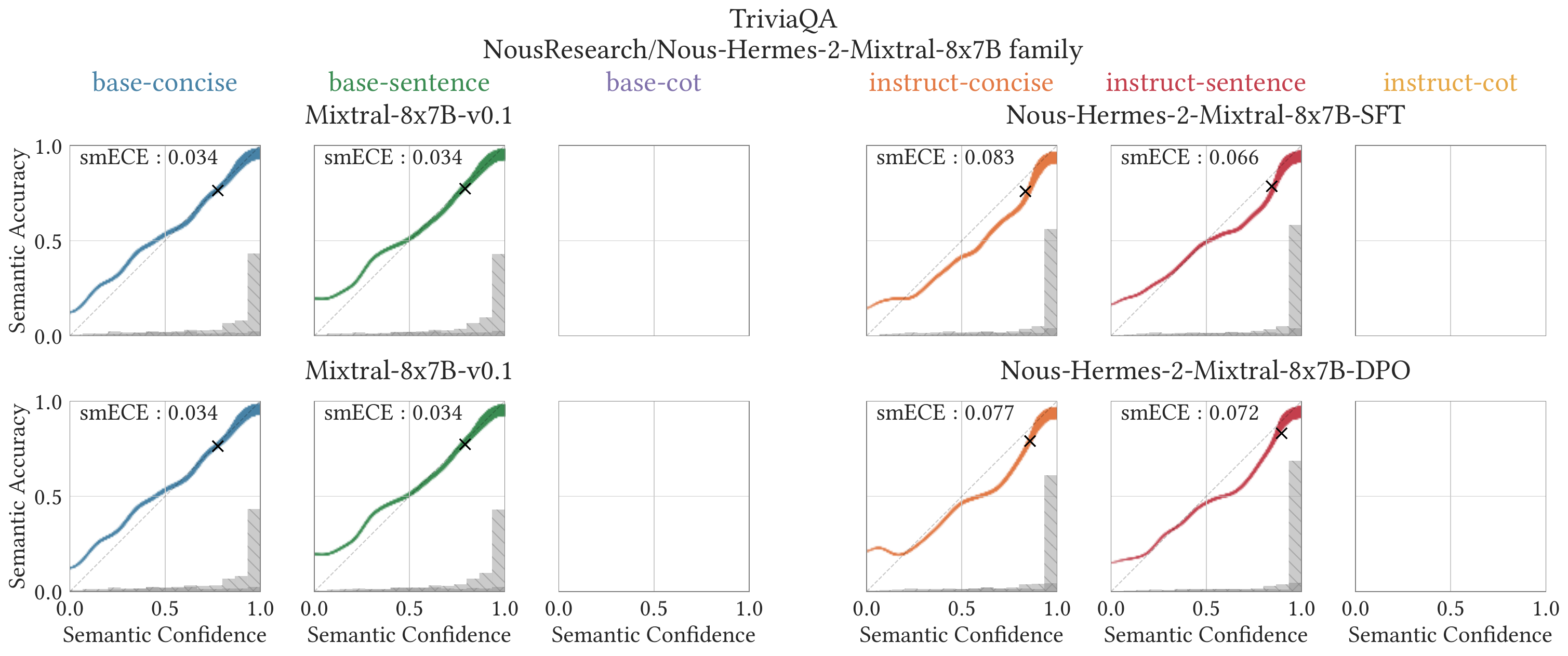}
\end{figure}
\begin{figure}[!htb]
\centering
\includegraphics[width=\linewidth]{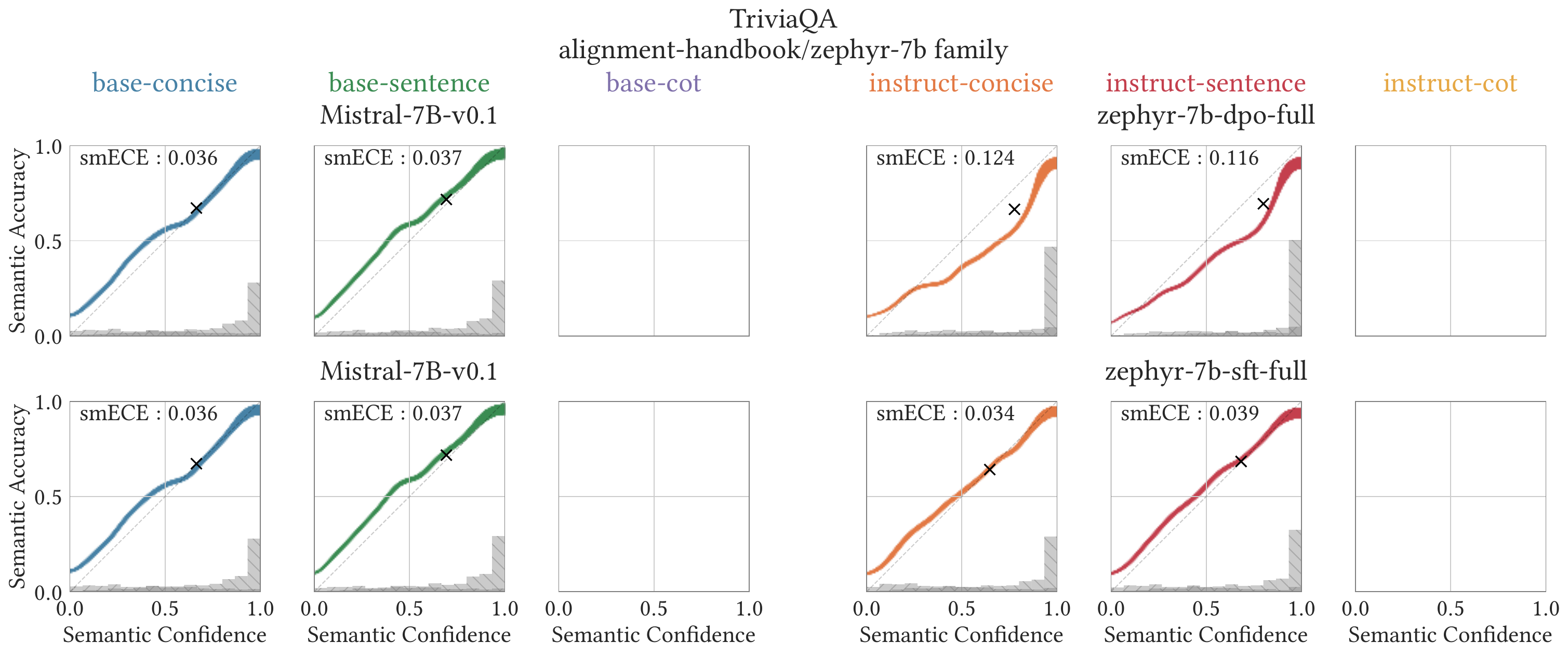}
\end{figure}
\begin{figure}[!htb]
\centering
\includegraphics[width=\linewidth]{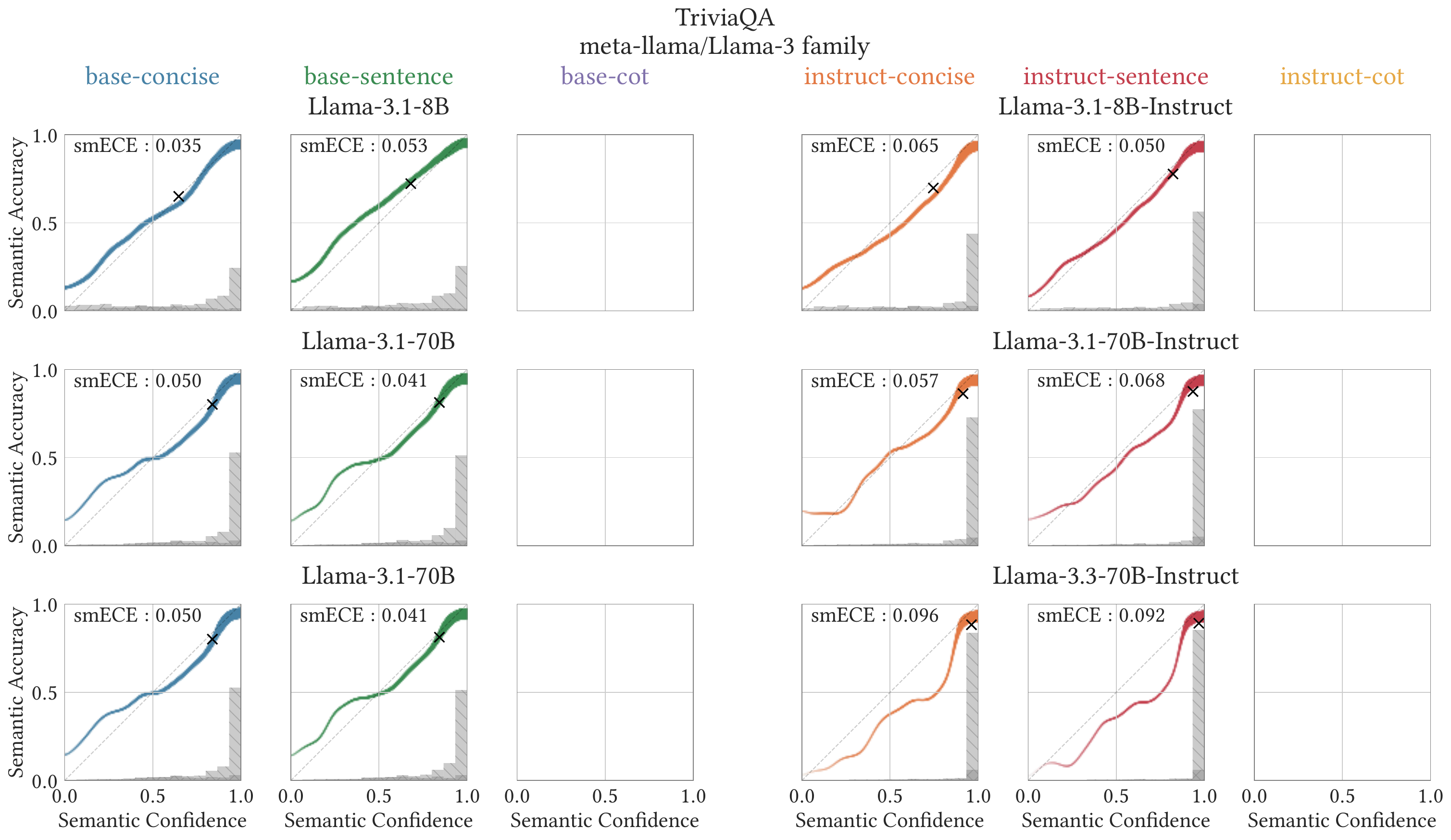}
\end{figure}
\begin{figure}[!htb]
\centering
\includegraphics[trim={0 10.5cm 0 0},clip,width=\linewidth]{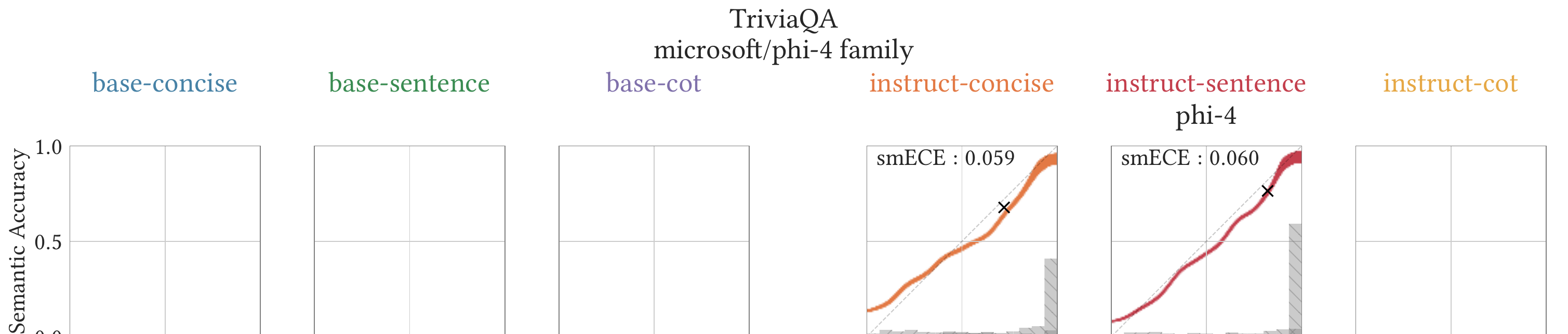}
\includegraphics[trim={0 0 0 20.2cm},clip,width=\linewidth]{figures/fig_appenidces_2_9.pdf}
\end{figure}

\FloatBarrier
\subsection{SimpleQA}
\label{subsection:SimpleQA}

\begin{figure}[!htb]
\centering
\includegraphics[width=\linewidth]{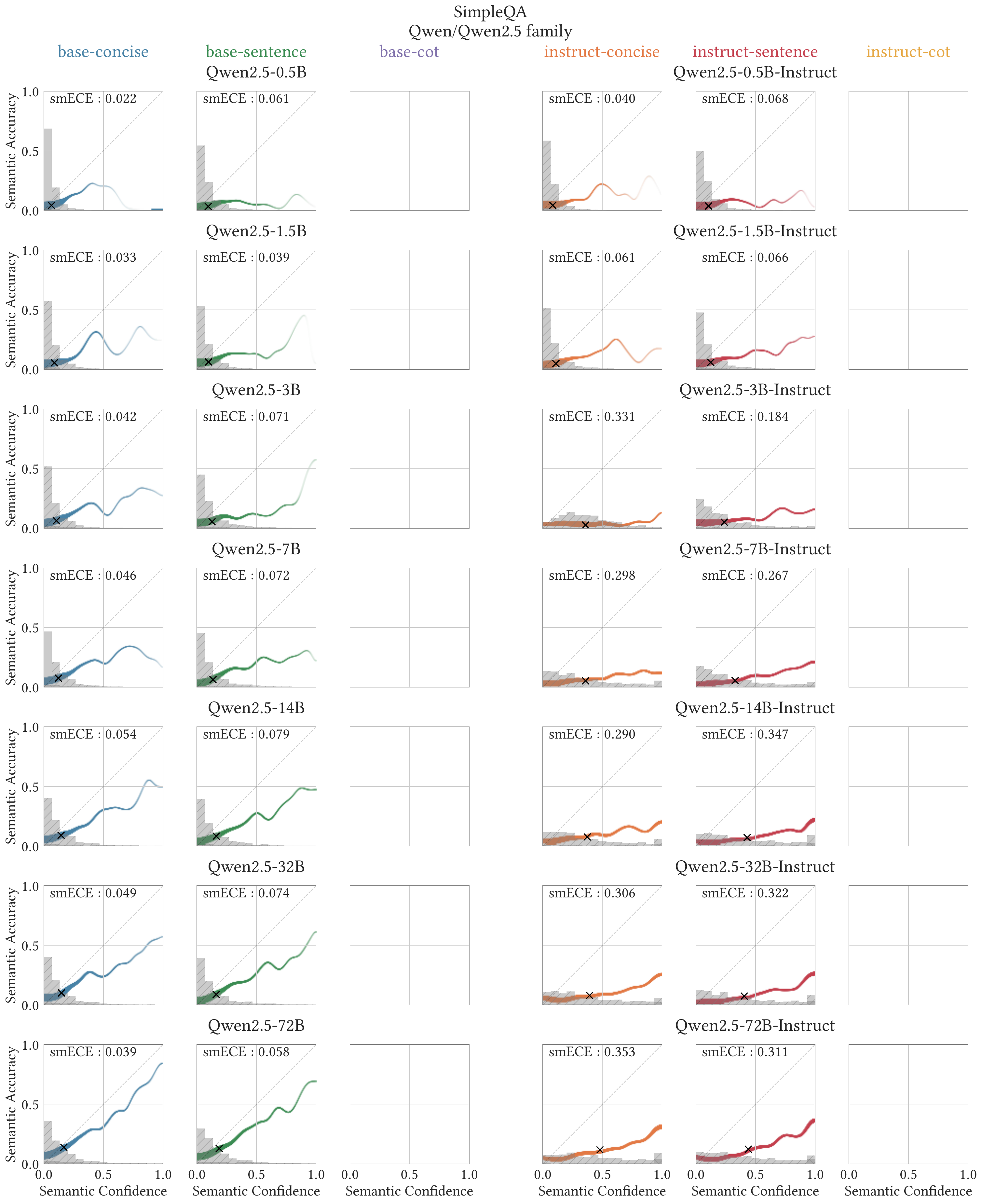}
\end{figure}
\begin{figure}[!htb]
\centering
\includegraphics[width=\linewidth]{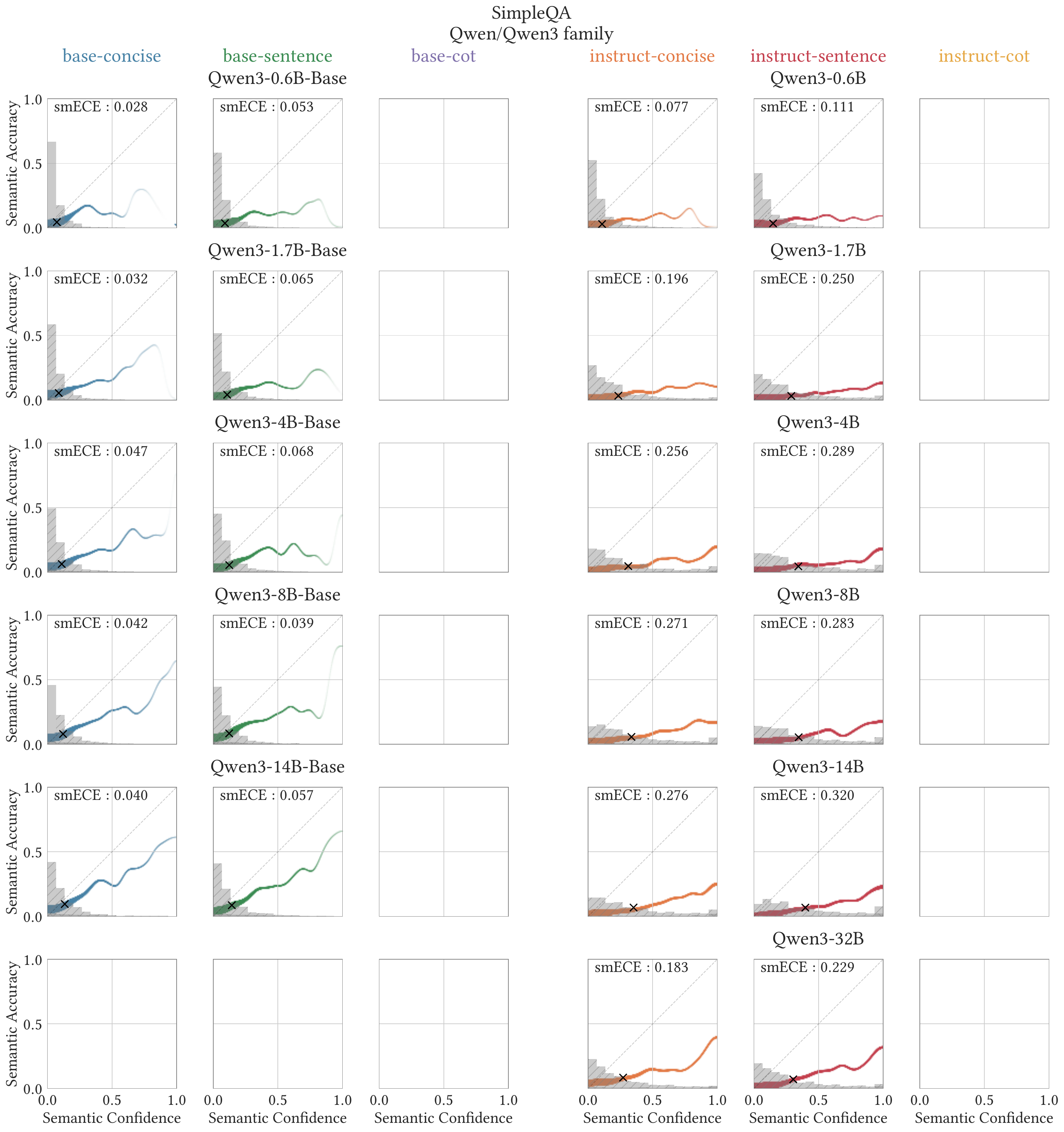}
\end{figure}
\begin{figure}[!htb]
\centering
\includegraphics[width=\linewidth]{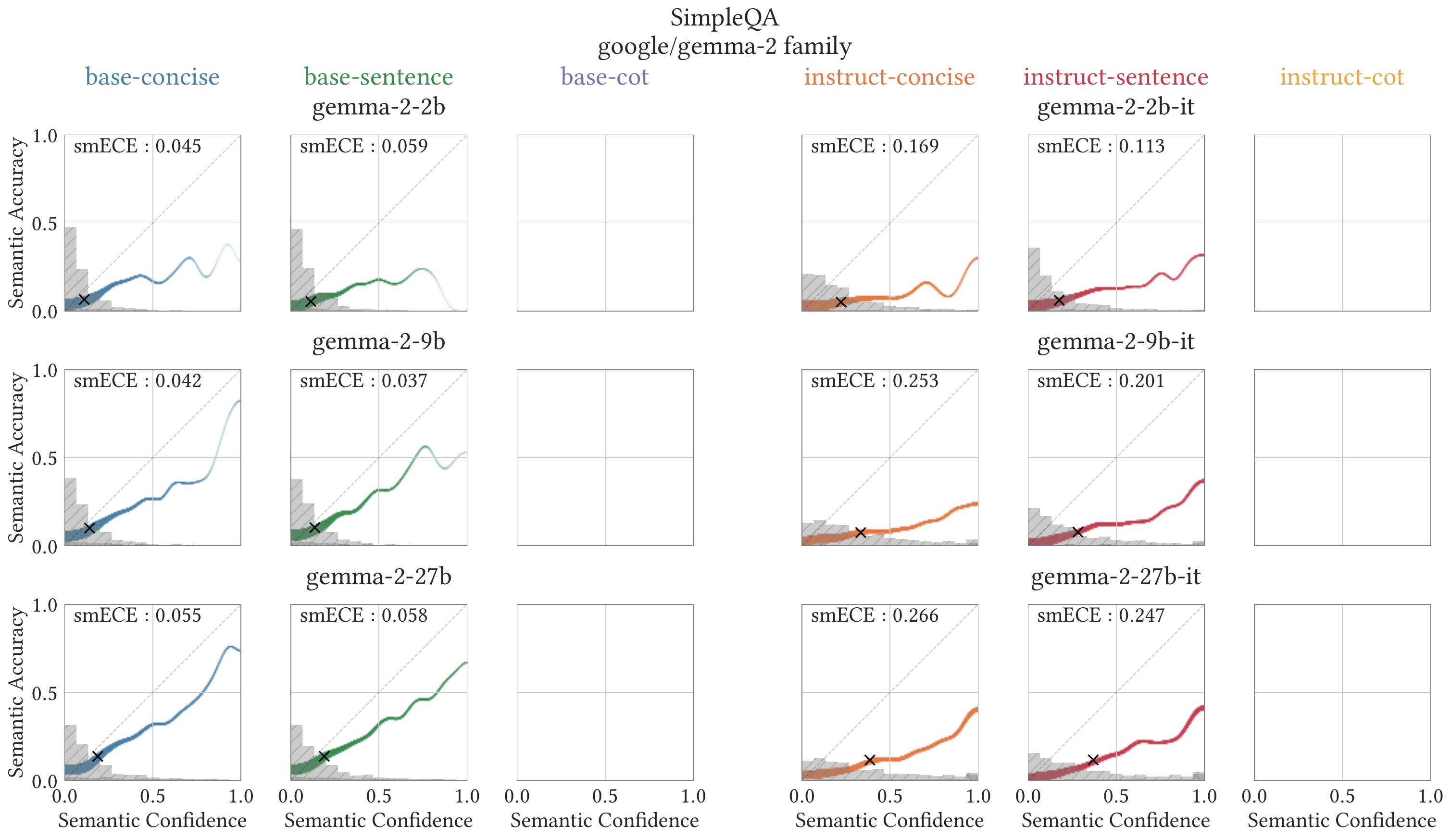}
\end{figure}
\begin{figure}[!htb]
\centering
\includegraphics[width=\linewidth]{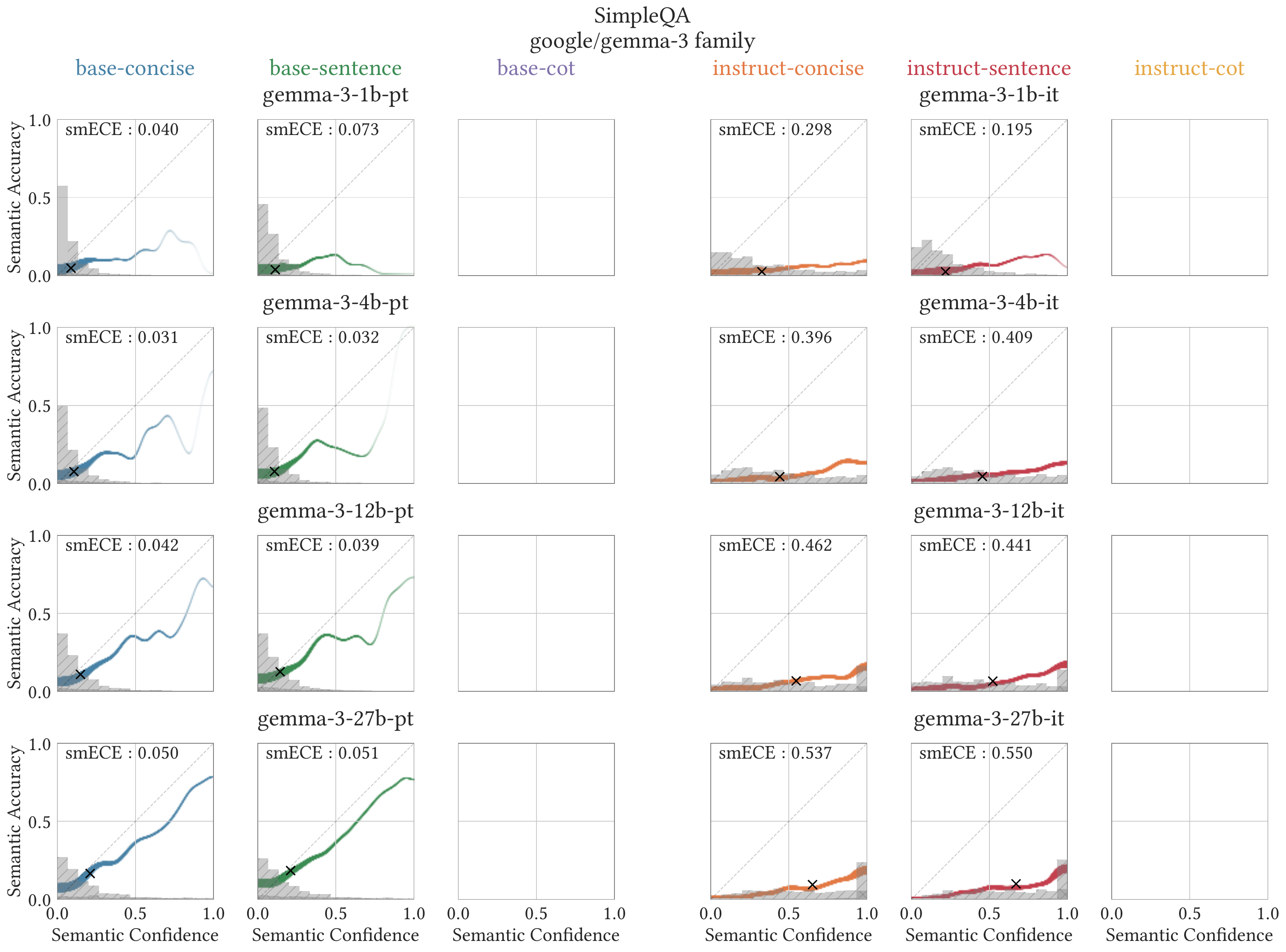}
\end{figure}
\begin{figure}[!htb]
\centering
\includegraphics[width=\linewidth]{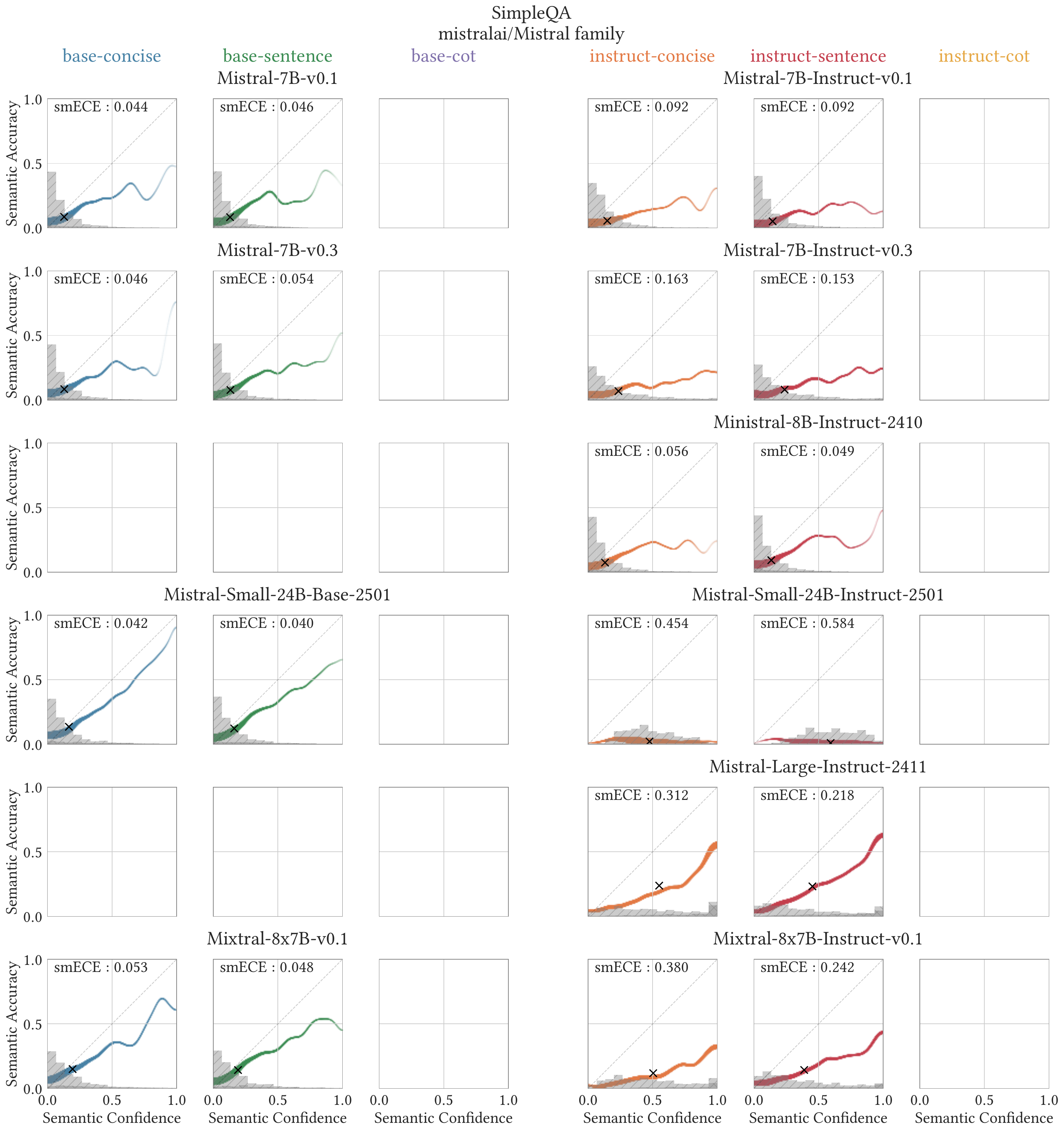}
\end{figure}
\begin{figure}[!htb]
\centering
\includegraphics[width=\linewidth]{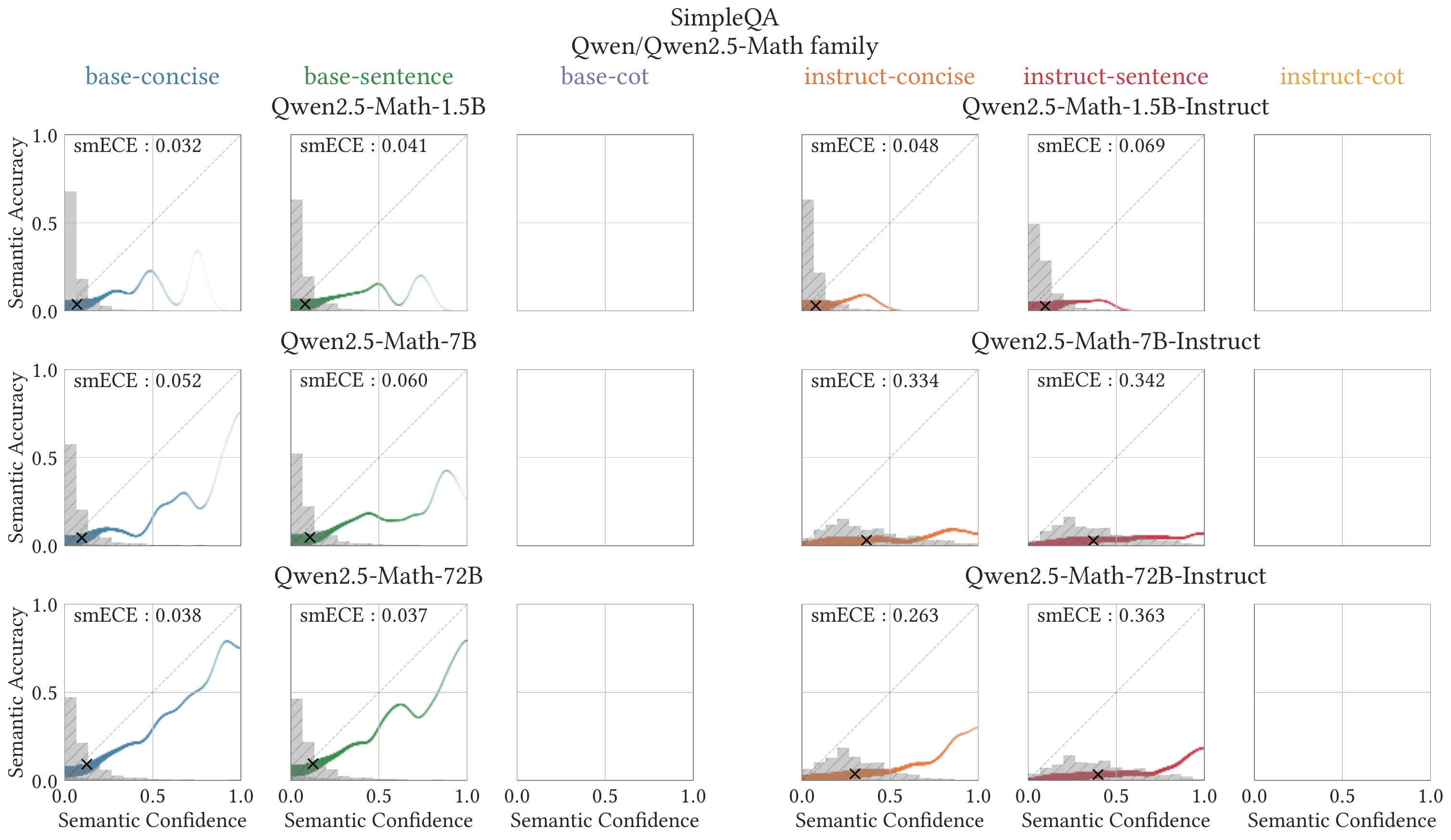}
\end{figure}
\begin{figure}[!htb]
\centering
\includegraphics[width=\linewidth]{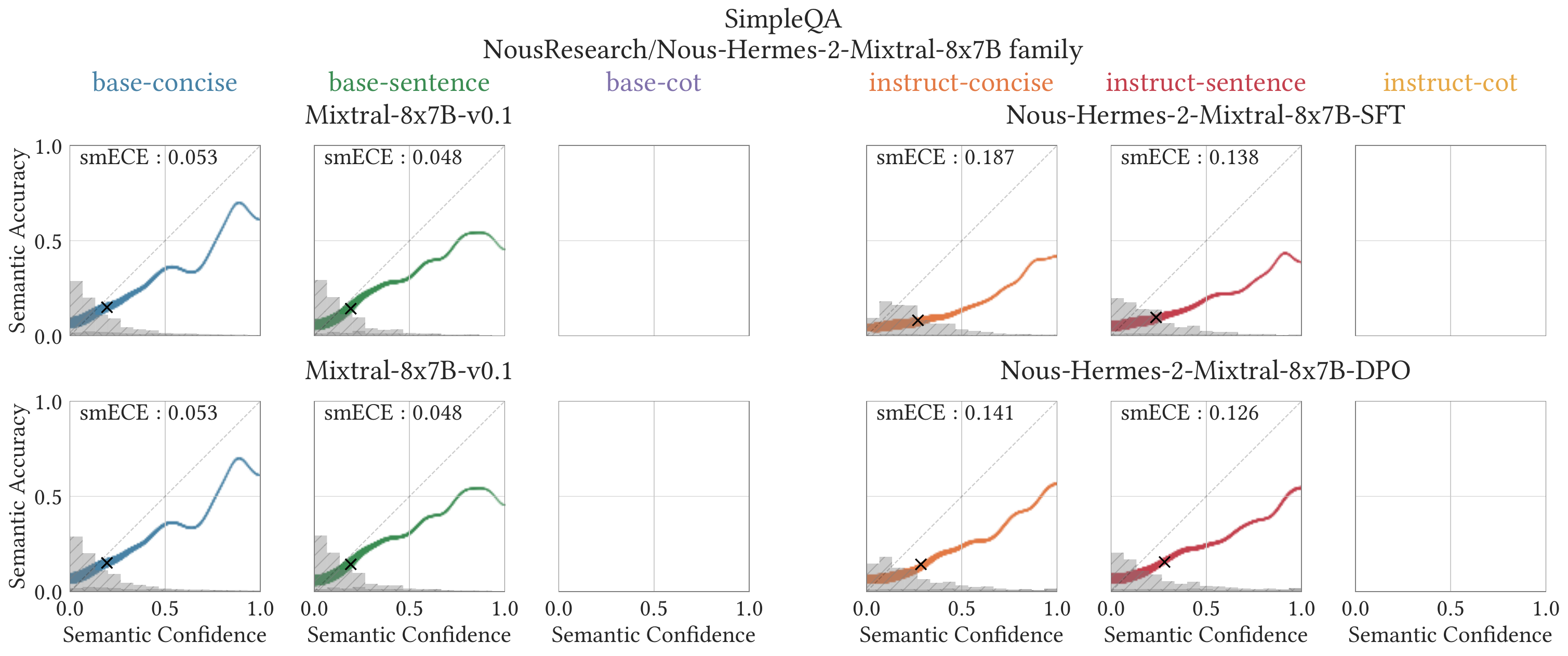}
\end{figure}
\begin{figure}[!htb]
\centering
\includegraphics[width=\linewidth]{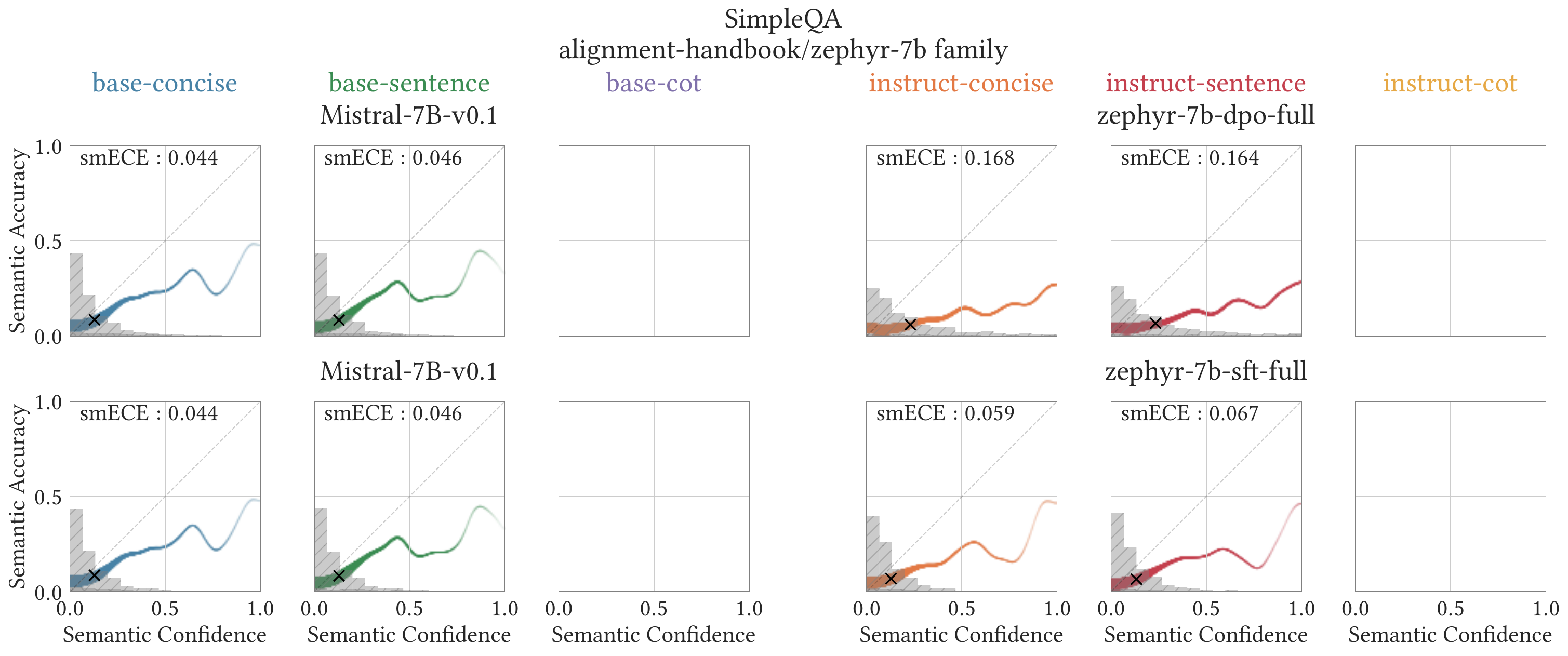}
\end{figure}
\begin{figure}[!htb]
\centering
\includegraphics[width=\linewidth]{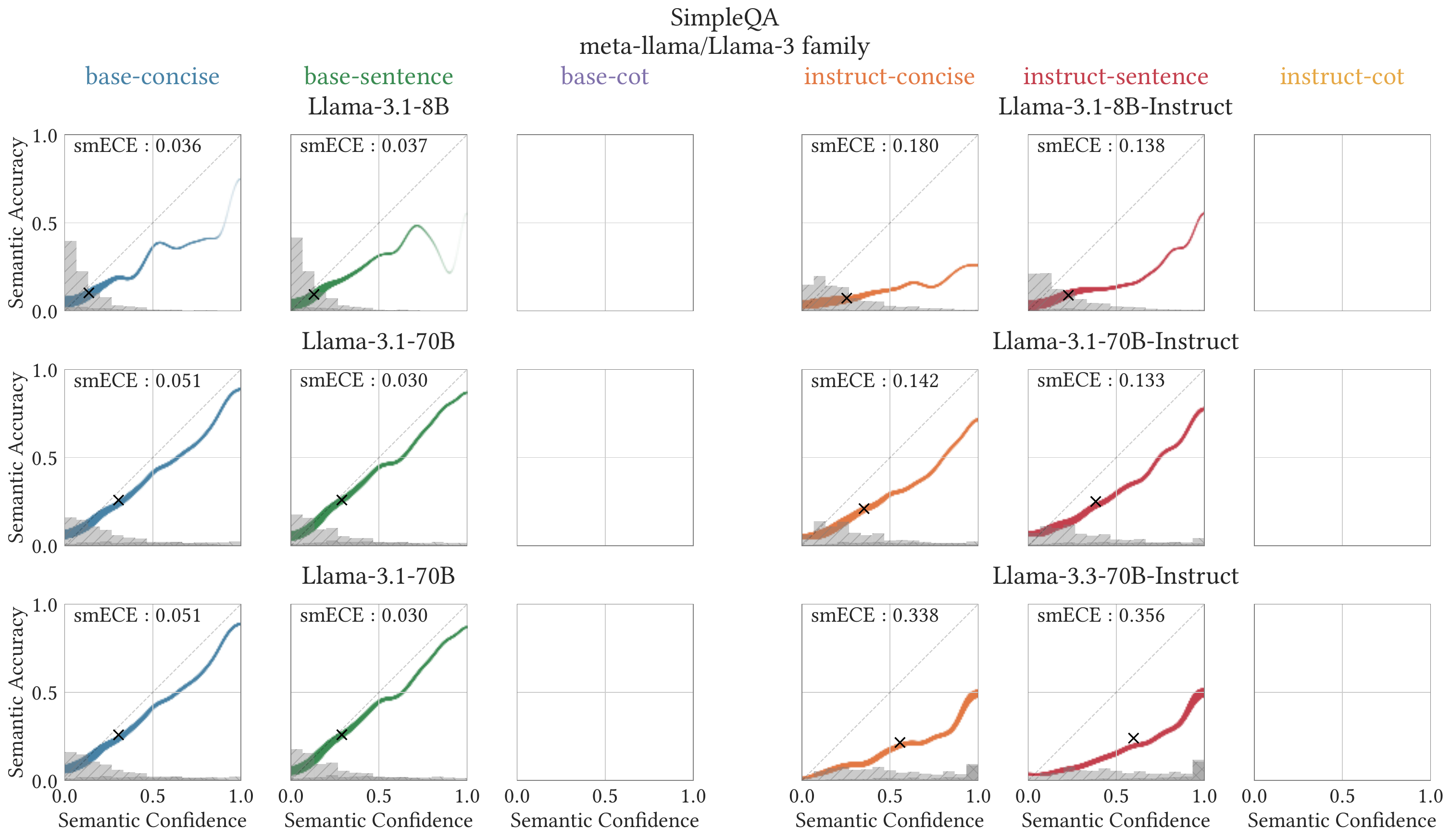}
\end{figure}

\begin{figure}[!htb]
\centering
\includegraphics[trim={0 10.5cm 0 0},clip,width=\linewidth]{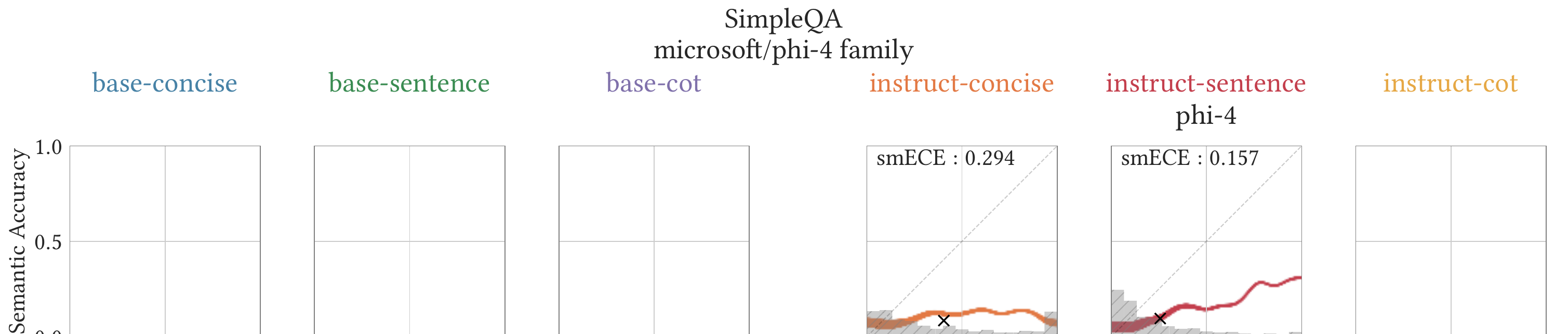}
\includegraphics[trim={0 0 0 20.2cm},clip,width=\linewidth]{figures/fig_appenidces_3_9.pdf}
\end{figure}

\end{document}